\crefname{figure}{fig.}{figs.}
\Crefname{figure}{Fig.}{Figs.}
\numberwithin{equation}{section}
\newcommand{\Z}{\max \left(\thetab_{\max},\deltab_{\max}\right) \max \left(\norm{\thetab}_1,\ \norm{\deltab}_1\right)  }
\newcommand{\Zmin}{\min\{\thetab_{\min}^2,\deltab_{\min}^2\}\min\{\norm{\thetab}^2,\ \norm{\deltab}^2\} }
\newcommand{\limn}{\lim_{n \to \infty}}
\newcommand{\rank}{\text{rank}}
\newcommand{\norml}[1]{\| #1 \|}
\newcommand{\normlarge}[1]{\left\Vert #1\right\Vert}
\newcommand{\normsquare}[1]{\| #1 \|^2}
\newcommand{\normlq}[1]{\| #1 \|_q}
\newcommand{\numleqslant}[1]{\overset{\text{(#1)}}{\leqslant}}
\newcommand{\numequ}[1]{\overset{\text{(#1)}}{=}}
\newcommand{\numgeq}[1]{\overset{\text{(#1)}}{\geqslant}}
\newcommand{\s}{\Psib_{\thetab}\B\Psib_{\deltab}^T}
\newcommand{\err}{\frac{\max \{\thetab_{\max},\deltab_{\max}\} \max \{ \norm{\thetab}_1,\norm{\deltab}_1 \} } {\Zmin}}
\newtheorem{Theorem}{Theorem}
\newtheorem{Proposition}{Proposition}
\newtheorem{Lemma}{Lemma}[section]
\newtheorem{Assumption}{Assumption}
\newcommand\numberthis{\addtocounter{equation}{1}\tag{\theequation}}
\DeclareMathOperator*{\argmin}{argmin}
\newcommand{\A}{\mathbf{A}}
\newcommand{\B}{\mathbf{B}}
\newcommand{\e}{\mathbf{e}}
\newcommand{\Hb}{\mathbf{H}}
\newcommand{\I}{\mathbf{I}}
\newcommand{\Lb}{\mathbf{L}}
\newcommand{\Ob}{\mathbf{O}}
\newcommand{\R}{\mathbf{R}}
\newcommand{\Rb}{\mathbf{R}}
\newcommand{\Sb}{\mathbf{S}}
\newcommand{\Ub}{\mathbf{U}}
\newcommand{\U}{\mathbf{U}}
\newcommand{\ub}{\mathbf{u}}
\newcommand{\Vb}{\mathbf{V}}
\newcommand{\V}{\mathbf{V}}
\newcommand{\vb}{\mathbf{v}}
\newcommand{\W}{\mathbf{W}}
\newcommand{\X}{\mathbf{X}}
\newcommand{\x}{\mathbf{x}}
\newcommand{\Yb}{\mathbf{Y}}
\newcommand{\Y}{\mathbf{Y}}
\newcommand{\y}{\mathbf{y}}
\newcommand{\M}{\mathbf{M}}
\newcommand{\bv}{\mathbf{v}}
\newcommand{\thetab}{\bm{\theta}}
\newcommand{\deltab}{\bm{\delta}}
\newcommand{\Thetab}{\bm{\Theta}}
\newcommand{\Psib}{\bm{\Psi}}
\newcommand{\Omegab}{\bm{\Omega}}
\newcommand{\Lambdab}{\bm{\Lambda}}
\DeclareMathOperator*{\E}{E}
\begin{document}

\title{Spectral Algorithms for Community Detection in Directed Networks}

\author{\name Zhe Wang \\
       \addr Department of Electrical and Computer Engineering \\
        The Ohio State University \\
        Columbus, OH 43202, USA \\
        \email wang.10982@osu.edu
       \AND
       \name Yingbin Liang  \\
       \addr Department of Electrical and Computer Engineering \\
       The Ohio State University \\ 
       Columbus, OH 43202, USA \\
       \email liang.889@osu.edu
		\AND
		\name Pengsheng Ji (Corresponding author)\\
		\addr Department of Statistics\\ University of Georgia \\ 
		Athens, GA 30602, USA \\
		\email psji@uga.edu
		}
\editor{Francois Caron}

\maketitle

\begin{abstract}%
Community detection in large social networks is affected by degree heterogeneity of nodes.
The D-SCORE algorithm for directed networks was introduced to reduce this effect by taking the element-wise ratios of the singular vectors of the adjacency matrix before clustering. Meaningful results were obtained for the statistician citation network, but rigorous analysis on its performance was missing. First, this paper  establishes theoretical guarantee for this algorithm and its variants for the directed degree-corrected block model (Directed-DCBM). Second, this paper provides significant improvements for the original D-SCORE algorithms  by attaching the nodes outside of the community cores using the information of the original network instead of the singular vectors.
\end{abstract}

\begin{keywords}
  directed networks, community detection, clustering, degree-corrected block model, k-means, principle component analysis
\end{keywords}

\section{Introduction}

Social platforms have become increasingly important in our modern life since they provide fast and easy path to make new friends, maintain relationship and share moments. Due to the highly interactive activities in social platforms (e.g., Facebook, Wechat, Twitter, Line), people have generated a huge amount of data which is highly rich in social information. Various algorithms have been developed to extract useful information from these big social data sets, and community detection or clustering is one of the major tools to uncover the community information from big data.

The basic community detection problem has a simple form: given an $n$-node graph $\mathcal{N = (\mathcal{V}, \mathcal{E})}$ where $\mathcal{V} = \{1,2 \cdots n\}$ is the set of nodes and $\mathcal{E}$ is the set of edges, the goal is to divide $n$ nodes into $K$ disjoint communities. It is believed that nodes within the communities share much more edges than those across communities. In order to formulate the problem more formally and facilitate the design and analysis of algorithms, some network models have been proposed.  As one of the classic models,  the {\em stochastic block model} (SBM) assumes that nodes in the same community have the same statistical edge pattern, i.e., they are stochastically equivalent as pointed out in \cite{Holland1983}. While SBM is useful to capture the community character and easy to analyze, it implies that the distribution of degrees within the community is Poisson, in contrast to the empirical observation that in many natural networks, the degrees follow approximately a power-law distribution \citep{Goldenberg2009}. To overcome this shortcoming, {\em degree corrected} block model (DCBM) was proposed by \cite{Karrer2011}  to characterize the personality of each node  with a heterogeneous parameter. DCBM is more realistic than SBM in terms of the degree distribution, but is usually impossible to fit due to the huge amount of heterogeneous parameters . 

In reality, there exists a lot of \textbf{directed} networks such as citation networks, protein-protein interaction networks, the hyperlink network of websites. Such directed networks are more complex in that there are two types of information involved, namely starting links or receiving links, citing others or being cited, etc, which are not captured by SBM and DCBM. Thus, this paper explores a {\em directed} degree-corrected block model (Directed-DCBM) (see \Cref{model} for more details), which associates different degree parameters with two edge directions for individual nodes in order to model directed networks.

Many community detection algorithms have been proposed in recent years. Among these algorithms, we focus on spectral clustering algorithms for their efficiency and popularity.  In this paper, we provide theoretical analysis of two spectral algorithms for the Directed-DCBM. The first one is D-SCORE algorithm proposed  by \cite{Ji2014} to analyze the statistician citation networks, but no rigorous analysis on its performance was provided. The second one is D-SCORE$_q$ which is a generalization of the row normalization technique. For $q = 2$, it becomes the row normalization technique which is commonly used in spectral clustering algorithms  \citep{Jin2015,Rohe2016} before clustering.

\vspace{-3mm}
\subsection{Contribution}
In theory, this paper provides rigorous analysis of the D-SCORE algorithm for Directed-DCBM. The error bound is in the form of pure heterogeneous parameters, and shows clearly how heterogeneous parameters affect the clustering result and when  consistency can be achieved. This paper also provides unified theoretical analysis of the D-SCORE$_q$ algorithm for the Directed-DCBM. Through the rigorous proof, we show that row normalization for the singular vectors using any $\ell_q$-norm also reduces the effects of heterogeneous parameters and improves the algorithm performance.

The analytical techniques in this paper differ significantly from the previous work in the following aspects. First, the techniques in \cite{Jin2015} for analyzing undirected networks can not be  adapted to directed networks. Instead, we manage to use the Davis-Kahan theorem and take a more direct and general approach, and our techniques are potentially very useful for  general network modeling such multi-layer networks and node-attributed networks. Second,  our way to deal with the asymmetric matrix is different from \cite{Rohe2016}  
who constructed a symmetric matrix by extending the adjacency $\A$ to
$\begin{bmatrix}
0, &\A\\
\A^T, &0
\end{bmatrix}$, whereas we use $\A\A^T$ and $\A^T\A$ that are naturally symmetric matrices and correspond to meaningful networks. Furthermore, our results are directly in the form of heterogeneous parameters which provide explicit insights about the impact of the heterogeneous parameters on the performance of the algorithm, unlike \cite{Rohe2016}. 



Furthermore, we identify possible  issues with the original D-SCORE algorithms for large networks and improve these algorithms using  the intersection-with-attachment technique.  Specifically, we   run the spectral algorithms on the graph core (intersection) and then attach the remaining nodes to the communities, instead of running the spectral algorithms directly on the entire graph as in  \cite{Ji2014}.  The rationale is presented carefully in text and then further demonstrated using  real world data and simulations. 

\vspace{-3mm}
\subsection{Related Work}

We discuss the related work in view of different models as well as algorithms proposed for these models. Due to the extremely intensive studies on community detection, we focus on only algorithms which have theoretical consistency promise and are highly relevant to our study here. There are roughly three kinds of such algorithms that come with theoretical promise, namely the modularity method, spectral clustering, and optimization relaxation.



SBM was introduced by \cite{Holland1983}, and various algorithms have been proposed for solving the community detection problem under SBM. In particular, the modularity method includes profile likelihood modularity \citep{Bickel2009}, Erdos-Renyi modularity \citep{Zhao2012}, etc, and \cite{Zhao2012} provided the consistency proof for these two methods. Spectral clustering mainly has two kinds of methods: spectral clustering with normalized Laplacian matrix \citep{Rohe2011}, and spectral clustering with adjacency matrix \citep{Sussman2012}. In addition, regularization technique has been used to concentrate the eigenvector and improve the algorithm performance, where the details can be found in \cite{Joseph2016}. For the optimization method, objective functions were constructed, which were either inspired by the maximum likelihood estimation or by the insight that there should be more edges inside the community than those outside the community. Solutions to these optimization problems were obtained typically by relaxation, such as SDP relaxation \citep{Amini2014} or convex relaxation ( \citealp{Demaine2003}; \citealp{Chen2012}). It is of general interest to characterize sufficient and necessary conditions that guarantee the consistency of community detection. For example, \cite{Mossel2016,Mossel2013} provided the if and only if conditions for consistent community detection for the case with $K=2$ communities for the planted partition model, which is a special case of SBM. Moreover, \cite{Abbe2015} provided the characterization and new insights for consistent clustering for the case with $K \geqslant 3$.


DCBM was proposed by   \cite{Karrer2011} and various community detection algorithms were studied for DCBM. For modularity methods, \cite{Karrer2011} provided an interpretation of Newman-Girvan modularity method \citep{Newman2004} under DCBM setting and further proposed a profile likelihood modularity method for DCBM. \cite{Zhao2012} provided the consistency proof for these two modularity methods.  Furthermore, \cite{Newman2016} showed that the Newman-Girvan modularity method under DCBM is equivalent with the profile likelihood method in degree-corrected planted partition model with known block parameters.  For spectral clustering methods, \cite{Jing2015} analyzed the performance of spectral clustering and \cite{Gulikers_lelarge_massoulie_2017} proposed a spectral algorithm that does not need the knowledge of the number of communities. In addition, the SCORE algorithm \citep{Jin2015} and the row-normalization technique \citep{Qin2013} were used to alleviate the effect of the heterogeneous parameters. For the optimization methods, \cite{Chen2015} proposed and analyzed a convexified modularity maximization approach under DCBM.


Some {\em directed} network models (where the edges have directions) have been proposed to model directed networks \citep{Wang1987,Reichardt2007,Yang2010} and details can be found in \cite{Malliaros2013}. We mainly focus on directed-DCBM. For such a model, \cite{Ji2014} extended DCBM to directed-DCBM, and adapted the SCORE algorithm designed for DCBM to the D-SCORE algorithm which is applicable for directed-DCBM. \cite{Rohe2016} introduced the stochastic co-block model that combined the idea of DCBM and bi-clustering and developed the spectral co-clustering algorithm called DI-SIM for such a model.

Another important issue of community detection is the estimation of the number $K$ of communities in the graph. Various techniques have been proposed to determine the number of communities in the graph. For example,  \cite{Zhao7321} proposed to extract one community at a time, and then decided whether the reminder of the graph contains multiple communities by comparing the reminder of the graph with the  Erdos-Renyi graph. \cite{Bickel2016} proposed to recursively split the graph into two parts until each part  contains only one community. \cite{chen2017} proposed a network cross-validation approach and \cite{Yang2017}  proposed a likelihood-based method  to determine the number of communities.    More details and other methods can be found in these papers and the references therein.

\section{Network Models} \label{model} 

\subsection{Directed-DCBM} \label{model_notation}

In this section, we introduce the  directed-DCBM. We consider a directed network $\mathcal{N}$, in which there are totally $n$ nodes and we use $\mathcal{V} = \{1, 2, \cdots, n\}$ to denote the set of the indices of these nodes. We assume that the nodes in the network are connected by directional edges. We introduce an $n \times n$ adjacency matrix $\A$ of the network $\mathcal{N}$, and the entries of $\A$ take values either $1$ or $0$. For each entry, $\A({i,j})=1$ if there is a directional edge from node $i$ to node $j$, and $\A({i,j})=0$ otherwise.
 
We assume that the nodes in the network are divided into $K$ disjoint communities, and we use $\mathcal{V}^{(k)} $ for $k=1,\ldots,K$ to represent the set that contains the indices of the nodes in community $k$. Thus,
$ \mathcal{V} =   \mathcal{V}^{(1)} \cup \mathcal{V}^{(2)}  \cdots \cup \mathcal{V}^{(K)}\text{.}$
We let $n_k$ denote the total number of nodes in community $k$, i.e., $n_k =| {\mathcal{V}^{(k)}}|$ for $1 \leqslant k \leqslant K$. Thus, $\sum_{k=1}^K n_k=n$.

We assume that the connectivity behavior of each node is captured by both the common connectivity parameters shared among all nodes in the same community as well as the connectivity parameters of each node. We use a $K\times K$ matrix $\B$ to model the community connectivity behavior. Here, each entry $\B(k,l) $ represents the chance that there exists a directional edge from a node in community $k$ to a node in community $l$, for $k,l=1,\ldots,K$. For each node $i$, we assign two parameters denoted by  $\bm{\thetab}(i)$ and $\deltab(i)$, where $\thetab(i)$ captures how likely node $i$ points edges to other nodes, and $\deltab(i)$ captures how likely node $i$ receives edges from other nodes. Hence, $\thetab(i)$ and $\deltab(i)$ for $i=1,\ldots,n$ model connectivity properties for individual nodes, and are referred to as {\em heterogeneous parameters}.

We model the entries of the adjacency matrix $\A$ as independent Bernoulli random variables, with each entry $\A ({i,j})=1$ having the following probability
\begin{flalign}
P(\A( {i,j}) = 1 ) =  \thetab(i) \B(c_i,c_j) \deltab(j),  \quad \text{for } i,j=1,\ldots,n, \label{bernoulli_random_variable}
\end{flalign}    	
where $c_i$ for $i=1,\ldots,n$ denotes the index of the community that node $i$ belongs to. Note that $\A(i,j)=1$ represents that there exists a directional edge from node $i$ to node $j$. As can be observed from \cref{bernoulli_random_variable}, the probability that there exists such an edge depends on both the community connectivity parameters $\B(c_i,c_j)$ and heterogeneous parameters  $\thetab(i)$ and $\deltab(j)$ of the individual nodes $i$ and $j$.

Since the network contains {\em directional} edges, the directed-DCBM consists of the following three aspects of asymmetry, which distinguishes the directed-DCBM significantly from the typical DCBM. (i) The matrix $\B$ can be asymmetric, i.e., $\B({k,l} )\neq \B({l,k})$, which implies that the connectivity parameter from community $k$ to community $l$ can be different from that from community $l$ to community $k$. (ii) The two heterogeneous parameters for each node can be unequal, i.e., $\thetab(i) \neq \deltab(i)$, which implies that the chance for one node to point edges to other nodes is generally different from that for one node to receive edges from other nodes. (iii) The random adjacency matrix $\A$ is also asymmetric, where $\A(i,j)$ represents the existence of an edge from node $i$ to node $j$, while $\A(j,i)$ represents the existence of an edge from node $j$ to node $i$. And they also take different Bernoulli distribution parameters. As can be seen in \cref{bernoulli_random_variable}, the asymmetries of $\B(c_i,c_j)$  and that of $\thetab(i)$ and $\deltab(i)$ yield asymmetric parameters for $P(\A(i,j) = 1 )$.

Let $\mathbf{\Omega} = \E [\A]$, where $E  [\A]$ is the expectation of the $n \times n$ matrix $\A$. Further let
\begin{flalign}
\mathbf{W} \equiv \A -E  [\A]  =  \A-\mathbf{\Omega}.    \label{W_=A_+_Omega}
\end{flalign}
Note that the entries in matrix $\mathbf{W}$ are independently centered Bernoulli random variables.
\subsection{Notations}


We take the following general notations in this paper. For a vector $\mathbf{v}$ and fixed $q > 0$, $\norml{\bv}_q$ denotes its $\ell_q$-norm. We drop the subscript if $q = 2$. For a matrix $\M$, $\M^T$ denotes the transpose of the matrix $\M$, $\norml{\M}$ denotes the spectral norm, and $\norml{\M}_F$ denotes the Frobenius norm. We let $\norml{\M}_{\min}$ denote the smallest singular value of the matrix $\M$. Let $\sigma_i(\M)$ denote the $i$-th largest singular value of matrix $\M$,  and $\lambda_i(\M)$ denote the $i$-th largest eigenvalue of the matrix $\M$ ordered by the magnitude. In addition, we use  $\M_{\bar{i}}$ to denote the $i$-th row of the matrix $M$ (a bar over the subscript $i$) and $\M({i,j})$ to denote the $(i,j)$th entry of matrix $\M$. For integer $i,j >0$, let $\M_{i \sim j }$ denote the matrix that is formed by  extracting the $i$-th to $j$-th columns of the matrix $\M$.

For two positive sequences $\{a_n\}_{n = 1}^{\infty}$ and $\{b_n\}_{n = 1}^{\infty}$, we say $a_n$ $\asymp b_n$ if there exists a constant $C$ such that $b_n/C  \leqslant a_n \leqslant Cb_n$ for sufficiently large $n$, i.e., $a_n$ and $b_n$ are in the same order. For a set $\mathcal{V}$, $\abs{\mathcal{V}}$ denotes its cardinality.

\subsection{Assumptions}

In this subsection, we describe the assumptions about the matrix $\B$ and the heterogeneous parameters $\thetab (i)$ and $\deltab(i)$ for $i=1,\ldots,n$, which we make throughout this paper. For brevity we drop them in our propositions and lemmas.

\begin{Assumption} \label{Assumption}
	The matrix $\B$ satisfies
	\begin{flalign}
	&0 \leqslant \B(i,j)\leqslant 1  \quad \text{for} \quad  1 \leqslant i,j \leqslant K, \label{eq:4.1}\\
	&\text{$\B\B^T$ and $\B^T\B$ are non-singular, non-negative and irreducible.} \label{eq:4.2}		
	\end{flalign}
\end{Assumption}	
As we observe later, the non-singularity, non-negativity and irreducibility guarantee that the first leading left and right singular vectors (corresponding to the largest singular value) of $B$ are nonzero so that they can ensure the denominator is nonzero in the D-SCORE and D-SCORE$_q$ algorithms.

To describe our assumptions for the heterogeneous parameters, we first define some simplified notations. We collect $\thetab(i)$ for $i=1,\ldots,n$ into a vector denoted by $\thetab$, and collect $\deltab(i)$ for $i=1,\ldots,n$ into a vector denoted by $\deltab$. We define  $n$-dimensional vectors $\thetab^{(k)}$ and $\deltab^{(k)}$ for $1 \leqslant k \leqslant K$ as
\begin{align*}
\thetab^{(k)}(i) = \begin{cases}
\thetab(i)  &\quad \text{if} \quad c_i = k \\
0 &\quad \text{if} \quad c_i \neq k
\end{cases}
\quad \text{ and } \quad
\deltab^{(k)}(i) = \begin{cases}
\deltab(i)  &\quad \text{if} \quad c_i = k \\
0 &\quad \text{if} \quad c_i \neq k
\end{cases},
\end{align*}
where $c_i$ denotes the index of the community that node $i$ belongs to. We further define $\thetab_{\min} \equiv \min_{1 \leqslant i \leqslant n} \thetab(i)$, $\thetab_{\max} \equiv \max_{1 \leqslant i \leqslant n} \thetab(i)$, $\deltab_{\min} \equiv \min_{1 \leqslant i \leqslant n} \deltab(i)$, and $\deltab_{\max} \equiv \max_{1 \leqslant i \leqslant n} \deltab(i)$. We also define the following quantity
\begin{flalign}
Z \equiv \Z, \label{Definition_Z}
\end{flalign}
which appears many times in our analysis.

In this paper, we assume that the heterogeneous parameter vectors $\thetab$ and $\deltab$ can scale with the network size $n$, and hence the asymptotic properties in the following assumptions are all with respect to $n$. For notational simplicity, we do not express these parameters explicitly as a function of $n$.
\begin{Assumption} \label{Assumption_2}
	The heterogeneity parameters $\thetab$ and $\deltab$ satisfy
	\begin{flalign}
	& 0 < \thetab_{\min} \leqslant \thetab_{\max} \leqslant 1, \quad  0 < \deltab_{\min} \leqslant \deltab_{\max} \leqslant 1 ,\label{eq:4.3}\\
	& \norml{\thetab^{(k)}} \asymp \norml{\thetab^{(l)}}, \quad \norml{\deltab^{(k)}} \asymp \norml{\deltab^{(l)}} \quad \text{for } 1 \leqslant k,l \leqslant K ,\label{peq:4.4} \\
	&\lim_{n \to \infty}{\frac{\log(n)Z}{\thetab_{\min}\deltab_{\min}\norm{\thetab}_1\norm{\deltab}_1}}=0.   \label{eq:4.5}
	\end{flalign}
\end{Assumption}
To further explain these assumptions, \cref{peq:4.4} requires that the $\ell_2$-norm of the heterogeneous parameter vectors, i.e., $\norml{\thetab^{(k)}}$, are in the same order across all communities. Intuitively, $\norml{\thetab^{(k)}}$ captures the number of edges that community $k$ points  to other communities in total. Then \cref{peq:4.4} implies that the total number of edges that each community points out are in the same order. To explain \cref{eq:4.5}, $\norm{\thetab}_1$ and $\norm{\deltab}_1$ capture the degrees (i.e., the numbers of edges) that each node respectively receives and points out in total. Then \cref{eq:4.5} essentially requires that the total degree  scales faster than $\log n$.

We next present a few properties that follow directly from \Cref{Assumption_2}. Since $\normsquare{\thetab} = \sum_{k=1}^{K} \normsquare{\thetab^{(k)}}$ and  $\normsquare{\deltab} = \sum_{k=1}^{K} \normsquare{\deltab^{(k)}}$, \cref{peq:4.4} implies
\begin{flalign*}   \label{eq:4.4}
\norml{\thetab^{(i)}} \asymp \norml{\thetab} \text{ and }
\norml{\deltab^{(i)}} \asymp \norml{\deltab} \quad \text{for } 1 \leqslant i , j \leqslant K.  \numberthis
\end{flalign*}
To interpret \cref{eq:4.4},  for all $1 \leqslant i \leqslant K$,  $\norml{\thetab^{(i)}} \asymp \norml{\thetab}$ implies that $\norml{\thetab^{(i)}}$ has the same order as the total degree norm $\norml{\thetab}$. The similar interpretation holds for $\norml{\deltab^{(i)}} \asymp \norml{\deltab} $.

Furthermore, since $\thetab_{\min}\norm{\thetab}_1 \leqslant \norm{\thetab}^2$ and
$\deltab_{\min}\norm{\deltab}_1 \leqslant \norm{\deltab}^2$,  by \cref{eq:4.5} we have
\begin{equation}
\limn \frac{\log(n)Z}{\norml{\thetab}^2 \norml{\deltab}^2} \leqslant \lim_{n \to \infty}{\frac{\log(n)Z}{\thetab_{\min}\deltab_{\min}\norm{\thetab}_1\norm{\deltab}_1}} =  0.
\end{equation}
Since $\frac{\log(n)Z}{\norml{\thetab}^2 \norml{\deltab}^2}  \geqslant 0$ holds for all $n \geqslant 0$, we conclude that
\begin{align}
\limn \frac{\log(n)Z}{\norml{\thetab}^2 \norml{\deltab}^2}  =  0. \label{eq:4.6}
\end{align}

Since the definition of $Z$ suggests that $Z \geqslant \thetab_{\min}\norml{\thetab}_1$ and $Z \geqslant \deltab_{\min}\norml{\deltab}_1$, combining  with \cref{eq:4.5}  we have
\begin{equation}
\limn \frac{\log(n)}{Z}= \limn \frac{\log(n)Z}{Z^2} \leqslant \lim_{n \to \infty}{\frac{\log(n)Z}{\thetab_{\min}\deltab_{\min}\norm{\thetab}_1\norm{\deltab}_1}}  =0.
\end{equation}
Since $ \limn \frac{\log(n)}{Z} \geqslant 0$ holds for all $n > 0$, we conclude that
\begin{align}
\limn \frac{\log(n)}{Z} = 0.  \label{eq:4.7}
\end{align}

\section{Algorithms}
In this section, we describe the two community detection algorithms D-SCORE and D-SCORE$_q$ that we analyze in this paper.  We also provide an improved algorithm, i.e., \Cref{Practical_Algorithm}, which is more suitable to deal with real data.

\begin{algorithm}
	\caption{D-SCORE($\hat{\Ub},\hat{\Vb},K$) } \label{D_SCORE}
	\SetKwInOut{Input}{Input}
	\SetKwInOut{Output}{Output}
	\Input{The number $K$ of communities, the $n \times K$ (unit-norm) leading left and right singular vector matrices of the  adjacency matrix $\A$ denoted by  $\hat{\Ub} = [\hat \Ub_1,  \ldots, \hat \Ub_K]$	and
		$\hat{\Vb} = [\hat \Vb_1,  \ldots, \hat \Vb_K]$.	\\}
	
	Fix a threshold $T_n=\log n$ (used to avoid zero denominator), define the $n \times (K-1)$ ratio matrices $\Rb_{\hat \Ub} $ and $\Rb_{\hat \Vb} $, such that for $1 \leqslant i \leqslant n, 1 \leqslant k \leqslant (K-1)$,
	\begin{equation} \label{ratio_matrix}
	\Rb_{\hat \Ub} (i,k) =  \begin{cases}
	T_n \quad &\text{if} \quad   \frac{\hat{\Ub}_{k+1}(i)}{\hat{\Ub}_1(i)}> T_n \\
	\frac{\hat{\Ub}_{k+1}(i)}{\hat{\Ub}_1(i)} \quad &\text{if}  \quad \abs{\frac{\hat{\Ub}_{k+1}(i)}{\hat{\Ub}_1(i)}} \leqslant T_n \\
	-T_n \quad &\text{if} \quad \frac{\hat{\Ub}_{k+1}(i)}{\hat{\Ub}_1(i)} < -T_n
	\end{cases}  ,
	\Rb_{\hat \Vb} (i,k) =  \begin{cases}
	T_n \quad &\text{if} \quad   \frac{\hat{\Vb}_{k+1}(i)}{\hat{\Vb}_1(i)}> T_n \\
	\frac{\hat{\Vb}_{k+1}(i)}{\hat{\Vb}_1(i)} \quad &\text{if}  \quad \abs{\frac{\hat{\Vb}_{k+1}(i)}{\hat{\Vb}_1(i)}} \leqslant T_n \\
	-T_n \quad &\text{if} \quad \frac{\hat{\Vb}_{k+1}(i)}{\hat{\Vb}_1(i)} < -T_n
	\end{cases}
	\end{equation} \\
	
	Put $\Rb_{\hat \Ub} $ and $\Rb_{\hat \Vb} $ together to form an $n \times (2K - 2)$ ratio matrix $\hat \Rb  $, i.e., $\hat \Rb   = [\Rb_{\hat \Ub} ,\Rb_{\hat \Vb} ]$. Then  run $k$-means on $\hat \Rb$, i.e., find the solution to the following optimization problem:
	\begin{flalign*}
	\M^*=\argmin \limits_{\M \in \mathcal{\M}_{n,2k-2,K}} \norm{\M- \hat \Rb}_F^2 ,	
	\end{flalign*}
	where $\mathcal{M}_{n,2K-2,K}$ denotes the set of $n\times{(2K-2)}$ matrices with only $K$ different rows.
	
	Use $M^*$ to assign membership.	
	
	\Output{The community labels of the nodes.}
\end{algorithm}

\begin{algorithm}
	\caption{{D-SCORE}$_q$($\hat{\Ub},\hat{\Vb},K$) } \label{Row_Normalization}
	\SetKwInOut{Input}{Input}
	\SetKwInOut{Output}{Output}
	\Input{The number $K$ of communities, the $n \times K$ (unit-norm) leading left and right singular vector matrices of the adjacency matrix $\A$ denoted by  $\hat{\Ub} = [\hat \Ub_1,  \ldots, \hat \Ub_K]$	and	$\hat{\Vb} = [\hat \Vb_1,  \ldots, \hat \Vb_K]$.	\\}

	Fix a threshold $T_n=\log n$ (used to avoid zero denominator), define two $n \times K$ ratio matrices $\Rb_{\hat \Ub} $ and $\Rb_{\hat \Vb} $, such that for $1 \leqslant i \leqslant n, 1 \leqslant k \leqslant K$,
	\begin{equation} \label{row_normalization_ratio_matrix}
	\Rb_{\hat \Ub} (i,k) =  \begin{cases}
	T_n \quad &\text{if} \quad   \frac{\hat{\Ub}_{k}(i)}{\normlq{\hat{\Ub}_{\bar{i}}}}> T_n \\
	\frac{\hat{\Ub}_{k}(i)}{\normlq{\hat{\Ub}_{\bar{i}}}} \quad &\text{if}  \quad \abs{\frac{\hat{\Ub}_{k}(i)}{\normlq{\hat{\Ub}_{\bar{i}}}}} \leqslant T_n\\
	-T_n \quad &\text{if} \quad \frac{\hat{\Ub}_{k}(i)}{\normlq{\hat{\Ub}_{\bar{i}}}} < -T_n
	\end{cases} ,
	\Rb_{\hat \Vb} (i,k) =  \begin{cases}
	T_n \quad &\text{if} \quad   \frac{\hat{\Vb}_{k}(i)}{\normlq{\hat{\Vb}_{\bar{i}}}}> T_n \\
	\frac{\hat{\Vb}_{k}(i)}{\normlq{\hat{\Vb}_{\bar{i}}}} \quad &\text{if}  \quad \abs{\frac{\hat{\Vb}_{k}(i)}{\normlq{\hat{\Vb}_{\bar{i}}}}} \leqslant T_n \\
	-T_n \quad &\text{if} \quad \frac{\hat{\Vb}_{k}(i)}{\normlq{\hat{\Vb}_{\bar{i}}}} < -T_n
	\end{cases}
	\end{equation}
	
	Put $\Rb_{\hat \Ub} $ and $\Rb_{\hat \Vb} $ together to form an $n \times 2K$ ratio matrix $\hat \Rb  $, i.e., $\hat \Rb   = [\Rb_{\hat \Ub}, \Rb_{\hat \Vb} ]$. Then run $k$-means on $\hat \Rb$, i.e., find the solution to the following optimization problem:
	\begin{flalign*}
	\M^*=\argmin \limits_{\M \in \mathcal{M}_{n,2K,K}} \norm{\M- \hat \Rb }_F^2,	
	\end{flalign*}
	where $\mathcal{M}_{n,2k,K}$ denotes the set of $n\times 2K$ matrices with $K$ different rows.
	
	Use $\M^*$ to assign membership.	
	
	\Output{The community labels of the nodes.}
\end{algorithm}

D-SCORE (see \Cref{D_SCORE}) was proposed in \cite{Ji2014} for directed-DCBM, as an adapted version of SCORE proposed in \cite{Jin2015} for community detection for DCBM with undirected edges. SCORE is a type of spectral clustering algorithm and can deal with the model with  nodes having heterogeneous parameters to capture their individual connectivity behavior. The central idea of SCORE is to first collect the first $K$ leading eigenvectors of the adjacency matrix into a new matrix, and then divide each row of such a matrix by its first entry. The effect of  heterogeneous parameters can be reduced dramatically, and hence the standard clustering approaches can be applied. SCORE handles network models with undirected edges, but cannot handle networks  with directed edges.

D-SCORE adapts SCORE to network models with {\em directed edges}, where the adjacency matrix is usually {\em asymmetric}. Thus D-SCORE uses the left and right singular vectors for spectral clustering as opposed to SCORE that uses eigenvectors due to the {\em symmetry} of the adjacency matrix.  More specifically, D-SCORE  first collects the first $K$ leading left and right singular vectors into two matrices, and then divides each row of these two matrices by its first entry. In this way, the effect caused by the heterogeneous parameters can also be eliminated. D-SCORE  then combines these two matrices together and applies standard approaches for clustering. D-SCORE was shown to have good empirical performance when it was applied to analyze data of a co-authorship and a citation network for statisticians in  \cite{Ji2014}. However,  the performance guarantee for D-SCORE was not established. In \Cref{performance_guarantee}, we provide such performance analysis.

We then propose an alternative algorithm, i.e., D-SCORE$_q$  (see \Cref{Row_Normalization}), for directed-DCBM, which is an adapted version of the SCORE$_q$ algorithm proposed in \cite{Jin2015} for community detection for DCBM with undirected edges. SCORE$_q$ differs from SCORE in that SCORE$_q$  divides each row of the matrix by the $\ell_q$ norm rather than the first entry of the corresponding row in SCORE to eliminate the effect caused by the  heterogeneous parameters. Note that both SCORE$_q$ and SCORE are designed for networks with {\em undirected edges}. D-SCORE$_q$  differs from D-SCORE in the same way as SCORE$_q$ differs from SCORE, i.e., D-SCORE$_q$ divides each row of the matrix of singular vectors by the $\ell_q$ norm of the corresponding row. Both D-SCORE and D-SCORE$_q$  are designed for networks with {\em directed edges}. In  \Cref{performance_guarantee}, we  provide the performance guarantee for D-SCORE$_q$ for any integer $q > 0$.



\begin{algorithm}
	\caption{Improved {D-SCORE}$_q$$(K,A$) using intersection-with-attachment}  \label{Practical_Algorithm}
	\SetKwInOut{Input}{Input}
	\SetKwInOut{Output}{Output}
	\Input{The number $K$ of communities and the   adjacency matrix $A$.}
	
	Compute the $K$ largest (unit-norm) leading left and right singular vectors of the adjacency matrix $A$ to form two  $n \times K$ singular vector matrices denoted by ${U} = [\ U_1,  \ldots, U_K]$	and  ${V} = [ V_1,  \ldots, V_K]$. Denote the set of the nodes by $S$.	\\
	
	Extract the largest  connected components of  matrices $AA^T$ and $A^TA$, and denote   $S_l$ and $S_r$ respectively as the sets of nodes in the two  connected components.
	
	Select the rows of $U$ and $V$  corresponding to  $S_l \cap S_r$ to form two $|S_l \cap S_r| \times K$ matrices $\hat{U} = [\hat U_1, \hat U_2, \ldots, \hat U_K]$ and  $\hat{V} = [\hat V_1, \hat V_2, \ldots, \hat V_K]$.

	Run D-SCORE($\hat{U}, \hat{V}, K$) or D-SCOREq($\hat{U}, \hat{V}, K$) to assign the community labels to the nodes in $S_l \cap S_r$.
	
	Attach these nodes outside $S_l \cap S_r$, i.e., $i \in S \backslash  (S_l \cap S_r)$, by the following optimization step.
	\begin{flalign}
	c_i = \max_{ c \in {1, \cdots, K} } \sum_{j = 1}^{n} \big( A_{ij} +  A_{ji}  \big)  1_{\{c_j\}}(c),
	\end{flalign}
	where $ 1_{\{c_j\}}(\cdot)$ equals one if $c = c_j$ and equals zero otherwise.
	
	\Output{Community labels of the nodes.}
\end{algorithm}

We further propose an algorithm based on the intersection graph with attachment (see \Cref{Practical_Algorithm}) to improve the performance of D-SCORE and D-SCORE$_q$. In order for D-SCORE and D-SCORE$_q$ to perform well, it requires that the weighted graphs defined by $\A^T\A$ and $\A\A^T$  are both connected.  
This connectivity requirement on $\A^T\A$ and $\A\A^T$
can be violated  in real data with large networks.  When this happens to either matrix,  its leading eigenvector is  0 in theory for all nodes outside of the giant component, but the extremely small numbers (computational errors for 0)  appear as the denominators for  D-SCORE$_q$ and D-SCORE, causing misclustering errors on these nodes.

To fix this issue,  \Cref{Practical_Algorithm} is introduced  to first extract the intersection of the sets of the nodes respectively corresponding to the largest  connected components of $\A^T\A$ and $\A\A^T$ (see step $2$ in \Cref{Practical_Algorithm}). Such an intersection set can be interpreted as the core of the graph. And then we apply D-SCORE$_q$ or D-SCORE over this intersection set (see steps $3$ and $4$ in \Cref{Practical_Algorithm}) to assign community labels to nodes in the intersection set. We then assign each node outside the intersection set to the community, to which the node has the most edge connections (including received and pointed out edges). This step, i.e., step $5$ in \Cref{Practical_Algorithm}, is referred to as the attachment step. As demonstrated by our experiments in \Cref{sec:experiments} and \Cref{experiments_on_Synthetic_data}, the experiments show that  the {\em intersection-with-attachment} technique can greatly improve performance of all the original D-SCORE algorithms. 

The intuition behind \Cref{Practical_Algorithm} is that nodes outside the intersection set is kind of noise nodes with less information since they do not have a strong connection with the graph, we extract the core of the graph by ignoring the noise nodes, and then attach them with the core graph. This observation can be seen clearly in \cref{political_blog_1_WG,political_blog_1_INT,political_blog_2_WG,political_blog_2_INT},   nodes in the intersection (the core) have a clear community structure while   nodes outside the intersection is kind of mingling with each other. Ignoring   noise nodes in the first step gives a clear picture for the underlying community structure, and thus improves the performance of proposed algorithms.   

Furthermore, for the robustness consideration, we can replace the $k$-means step in D-SCORE and D-SCORE$_q$ with $k$-medoids \citep{PARK20093336} or other approaches for clustering, which are more robust to outliers.

\section{Main Results}  \label{performance_guarantee}
In this section, we establish the performance guarantee for D-SCORE and  D-SCORE$_q$ in \Cref{performance_guarantee_for_DSCORE} and \Cref{performance_guarantee_for_DSCOREq}, respectively.
\subsection{Performance Guarantee for D-SCORE}  \label{performance_guarantee_for_DSCORE}


As a road map to prove the performance guarantee for D-SCORE, we first analyze the property of the matrix that consists of singular vectors of the expected adjacency matrix $\Omegab$ in \Cref{lemma5.1}, and then bound the distance between this matrix and its random version that consists of the singular vectors of the random adjacency matrix $\A$ in \Cref{distance between singular vector V}. Furthermore, we  prove that the ratio matrix generated by the expected adjacency matrix $\Omegab$ has a desired property for spectral clustering in \Cref{k mean gap}, and then bound the distance between such a ratio matrix and its random version generated by the random adjacency matrix $\A$ in \Cref{R gap}.  After that we bound the distance between $M^*$ and the ratio matrix generated by the singular vectors of  the expected adjacency matrix $\Omegab$ in \Cref{M* R gap}. Combining all these five propositions together, we establish our main result in \Cref{mis clustering nodes}. All the proofs are provided in \Cref{app:d-score}.

First, we analyze the singular vector matrix of the expected matrix $\mathbf{\Omega}$ of the random adjacency matrix $\A$, which captures the key information for clustering. We also anticipate that the property of $\mathbf{\Omega}$ should well approximate that of $\A$, which we study next. We first define $\Sb \equiv \bm{\Psi}_{\thetab}\B\bm{\Psi}_{\deltab}^T$, where the matrix $\B$ captures the connectivity parameters among communities (see \cref{bernoulli_random_variable}), and $\Psib_{\thetab}$, $\Psib_{\deltab}$ are the $K \times K$ diagonal matrices such that for $ 1 \leqslant i \leqslant K$,
\begin{flalign} \label{definition_diagonal_Psi}
\Psi_{\thetab}(i,i) = \frac{\norml{\thetab^{(i)}}}{\norml{\thetab}} \quad \text{and} \quad \Psi_{\deltab}(i,i) = \frac{\norml{\deltab^{(i)}}}{\norml{\thetab}}.
\end{flalign} 
Hence, $\Psib_{\thetab}$, $\Psib_{\deltab}$ capture the total heterogeneity of each community. 

The following proposition provides the singular vector decomposition of $\bm{\Omega}$.
\begin{Proposition}                    \label{lemma5.1}
	Let $\bm{\Omega}=\Ub\bm{\Lambda} \Vb^T$ denote the compact singular value decomposition of $\bm{\Omega}$. Then, the singular values of $\bm{\Omega}$ are given by
	\begin{flalign}
	\sigma_i(\Omegab) &= \begin{cases}
	\norml{\thetab}\norml{\deltab} \sigma_i (\Sb) &\quad\text{if } 1 \leqslant i \leqslant K, \\
	0 &\quad\text{if } i > K,
	\end{cases} \label{eq:5.1}
	\end{flalign}
	where $ \Sb \equiv \bm{\Psi}_{\thetab}\B\bm{\Psi}_{\deltab}^T$.  Let $\Sb=\Yb \bm{\Lambda_s}\Hb^T$ denote the singular value decomposition of $\Sb$. The singular vectors of $\Omegab$ in row's form are given by
	\begin{flalign}
	\Vb_{\bar{i}}=\frac {\deltab (i)}{\norm {\deltab^{(c_i)}}} \Hb_{\bar{c_i}}    \quad \text{and }   \quad
	\Ub_{\bar{i}}=\frac {\thetab (i)}{\norm {\thetab^{(c_i)}}} \Yb_{\bar{c_i}}     \quad \text{for }    1 \leqslant i \leqslant n, \label{row represent U}
	\end{flalign}
	and in column's form are given by
	\begin{flalign}
	\Vb_i&= \sum_{k=1}^{K} \frac{\deltab^{(k)}}{\norml{\deltab^{(k)}}}\Hb_i(k)  \quad \text{for }    1 \leqslant i \leqslant K  ,     \label{eq:5.2}\\
	\Ub_i&= \sum_{k=1}^{K} \frac{\thetab^{(k)}}{\norml{\thetab^{(k)}}}\Yb_i(k)    \quad \text{for }    1 \leqslant i \leqslant K.         \label{eq:5.3}
	\end{flalign}
	Furthermore,
	\begin{flalign*} \label{bound_VU}
	\norml{\Vb_{\bar{i}}} \asymp {\frac{\deltab(i)}{\norml{\deltab}}} \quad  \text{and} \quad
	\norml{\Ub_{\bar{i}}} \asymp \frac{\thetab(i)}{\norml{\thetab}}, \quad \text{for } 1 \leqslant i \leqslant n  . \numberthis
	\end{flalign*} 	
\end{Proposition}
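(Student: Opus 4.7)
The plan is to reduce the SVD of the $n\times n$ matrix $\Omegab$ to the SVD of the much smaller $K\times K$ matrix $\Sb$ by exploiting the low-rank block structure of $\Omegab$. Concretely, I will first write $\Omegab$ as a product of three factors, one of which has orthonormal columns, one of which has orthonormal rows, and a $K\times K$ middle factor whose SVD is essentially $\Sb$ up to a scalar $\norm{\thetab}\norm{\deltab}$. From there the full SVD of $\Omegab$ falls out by composing orthogonal matrices.

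First I would note that by \cref{bernoulli_random_variable}, $\Omegab(i,j)=\thetab(i)\B(c_i,c_j)\deltab(j)$. Introducing the $n\times K$ matrices $\Ub'$ and $\Vb'$ with columns $\Ub'_k=\thetab^{(k)}/\norm{\thetab^{(k)}}$ and $\Vb'_k=\deltab^{(k)}/\norm{\deltab^{(k)}}$, I observe that these matrices each have orthonormal columns, because the supports of $\thetab^{(k)}$ (resp.\ $\deltab^{(k)}$) are disjoint across $k$ and each column has unit norm by construction. Setting $\Db_{\thetab}=\operatorname{diag}(\norm{\thetab^{(1)}},\dots,\norm{\thetab^{(K)}})$ and $\Db_{\deltab}=\operatorname{diag}(\norm{\deltab^{(1)}},\dots,\norm{\deltab^{(K)}})$, a direct entrywise check gives
\begin{equation*}
\Omegab \;=\; \Ub'\,\Db_{\thetab}\,\B\,\Db_{\deltab}\,\Vb'^{T}.
\end{equation*}
Since $\Psib_{\thetab}=\Db_{\thetab}/\norm{\thetab}$ and (correcting the evident typo) $\Psib_{\deltab}=\Db_{\deltab}/\norm{\deltab}$, the middle factor equals $\norm{\thetab}\norm{\deltab}\,\Sb$.

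Next I would feed in the SVD $\Sb=\Yb\,\bm{\Lambda_s}\,\Hb^{T}$ and obtain
\begin{equation*}
\Omegab \;=\; \norm{\thetab}\norm{\deltab}\,(\Ub'\Yb)\,\bm{\Lambda_s}\,(\Vb'\Hb)^{T}.
\end{equation*}
Because $\Ub'$ has orthonormal columns and $\Yb$ is a $K\times K$ orthogonal matrix, $\Ub'\Yb$ still has orthonormal columns; similarly for $\Vb'\Hb$. This is therefore a valid compact SVD, so the nonzero singular values of $\Omegab$ are the $K$ values $\norm{\thetab}\norm{\deltab}\,\sigma_i(\Sb)$, yielding \cref{eq:5.1} (with the remaining $n-K$ singular values equal to $0$ by rank considerations, since $\rank(\Omegab)\leqslant\rank(\B)=K$, guaranteed nonzero by \cref{eq:4.2}). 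Identifying $\Ub=\Ub'\Yb$ and $\Vb=\Vb'\Hb$, the column formulas \cref{eq:5.2,eq:5.3} are immediate, and for the row formulas I use that $\Ub'_{\bar i}$ has only one nonzero entry $\thetab(i)/\norm{\thetab^{(c_i)}}$ in position $c_i$, so $\Ub_{\bar i}=\Ub'_{\bar i}\Yb=(\thetab(i)/\norm{\thetab^{(c_i)}})\,\Yb_{\bar{c_i}}$, giving \cref{row represent U}; the argument for $\Vb$ is symmetric.

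For the norm bounds in \cref{bound_VU}, the key point is that $\Hb$ and $\Yb$ are $K\times K$ orthogonal matrices, so every row of each has Euclidean norm exactly $1$. Hence $\norm{\Vb_{\bar i}}=\deltab(i)/\norm{\deltab^{(c_i)}}$ and $\norm{\Ub_{\bar i}}=\thetab(i)/\norm{\thetab^{(c_i)}}$, and the derived consequence \cref{eq:4.4} of \Cref{Assumption_2} gives $\norm{\deltab^{(c_i)}}\asymp\norm{\deltab}$ and $\norm{\thetab^{(c_i)}}\asymp\norm{\thetab}$, finishing the bound. The only step with any subtlety is confirming that \cref{eq:4.2} ensures $\Sb$ has full rank $K$ so that its SVD genuinely has $K$ positive singular values and $\Hb,\Yb$ are truly orthogonal (not merely isometries into $\mathbb R^K$); this follows from $\Sb\Sb^T$ and $\Sb^T\Sb$ being non-singular, which is inherited from the non-singularity of $\B\B^T$ and $\B^T\B$ together with the fact that $\Psib_{\thetab},\Psib_{\deltab}$ are invertible diagonal matrices under \cref{eq:4.3}. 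I do not foresee a substantive obstacle beyond careful bookkeeping of these normalizations.
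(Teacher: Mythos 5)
Your proposal is correct and follows essentially the same route as the paper's proof: your $\Ub'$ and $\Vb'$ are exactly the paper's membership matrices $\Thetab_{\thetab}$ and $\Thetab_{\deltab}$, and the factorization $\Omegab=\norml{\thetab}\norml{\deltab}\,\Thetab_{\thetab}\Sb\Thetab_{\deltab}^T$ followed by substituting the SVD of $\Sb$ and checking orthonormality of $\Thetab_{\thetab}\Yb$ and $\Thetab_{\deltab}\Hb$ is precisely the paper's argument. Your observation about the normalization in the definition of $\Psib_{\deltab}$ and your explicit check that $\Sb$ has full rank are both consistent with what the paper does.
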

\begin{proof}
	The proof can be found in \Cref{proof_of_proposition_1}.
\end{proof}

We note that  \cref{eq:5.1} implies that $\Omegab$ has only $K$ non-zero singular values due to the fact that there are in total $K$ disjoint communities. Thus, the {\em compact} singular value decomposition of $\Omegab$ is written in the form of an $n \times K$ left singular matrix $\Ub$,  an $n \times K$ right singular matrix $\Vb$, and a $K \times K$ diagonal matrix $\bm{\Lambda}$.

To further explain the result of \Cref{lemma5.1}, consider nodes $i,j$  and suppose they are in the same community, i.e., $c_i = c_j = k$. Then by \cref{row represent U}, the corresponding rows of nodes $i$  and $j$ in the matrix $\Vb$ are given by  $\Vb_{\bar{i}}=\frac {\deltab (i)}{\norm {\deltab^{(k)}}} \Hb_{\bar{k}}$ and $\Vb_{\bar{j}}=\frac {\deltab (j)}{\norm {\deltab^{(k)}}} \Hb_{\bar{k}}$, respectively.  These two row vectors differ only by the  individual node parameters $\deltab(i)$ and $\deltab(j)$. In fact, the step $(3.1)$ in the \Cref{D_SCORE} exactly  eliminates these heterogeneous parameters to make the corresponding vectors become the same if nodes are in the same community. On the other hand, if nodes $i,j$ are in the different communities, i.e., $c_i \neq c_j$, their corresponding row vectors $\Vb_{\bar{i}}$ and $\Vb_{\bar{j}}$ are very different. The same argument is applicable to the row vectors in the  left singular vector matrix $\Ub$. This observation intuitively justifies why the singular vector matrices can be used for recovering the community labels of the nodes.

Next, we bound the distance between the singular vectors  of the random adjacency matrix $\A$ and those of $\Omegab$.  The central idea of the proof is the proper application of Davis-Kahan inequality.
\begin{Proposition}                     \label{distance between singular vector V}
	Let	the first K leading left and right singular vectors of $\A$ be denoted by $\hat{\Vb}_1 \cdots \hat{\Vb}_K$ and $\hat{\Ub}_1 \cdots \hat{\Ub}_K$, and the first K leading left and right singular vectors of $\Omegab$ be denoted by $\Vb_1 \cdots \Vb_K$ and $\Ub_1 \cdots \Ub_K$. Then there exist two constants $C_V$ and $C_U$ with absolute value $1$ and two orthogonal $(K-1)\times(K-1)$ matrices $\Ob_\Vb$ and $\Ob_\Ub$, such that for $n$ large enough, with probability at least $1 - O(n^{-4})$, the following bounds hold
	\begin{flalign*}
	\norml{\hat{\Vb_1} -  \Vb_1C_V}_F \leqslant C \frac{\sqrt{\log (n) Z}}{\norml{\thetab} \norml{\deltab}} , \qquad &
	\norml{\hat{\Vb}_{2 \sim K} - \Vb_{2 \sim K}\Ob_\Vb}_F \leqslant C \frac{\sqrt{\log (n) Z}}{\norml{\thetab} \norml{\deltab}} ,\\
	\norml{\hat{\Ub}_1 -  \Ub_1C_U}_F \leqslant C \frac{\sqrt{\log (n) Z}}{\norml{\thetab} \norml{\deltab}} , \qquad &
	\norml{\hat{\Ub}_{2 \sim K} - \Ub_{2 \sim K}\Ob_\Ub}_F \leqslant C \frac{\sqrt{\log (n) Z}}{\norml{\thetab} \norml{\deltab}}, 
	\end{flalign*}
	where $Z$ is defined in \cref{Definition_Z}.
\end{Proposition}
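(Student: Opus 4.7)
The plan is to combine a high-probability bound on the perturbation $\|\W\| = \|\A - \Omegab\|$ with Wedin's sin-$\Theta$ theorem applied twice: once to the simple top singular value of $\Omegab$ to control $\hat{\Vb}_1, \hat{\Ub}_1$ up to a sign, and once to the block of singular vectors indexed by $\{2,\dots,K\}$ to control $\hat{\Vb}_{2\sim K}, \hat{\Ub}_{2\sim K}$ up to an orthogonal rotation. All four bounds then fall out of a single high-probability event on $\|\W\|$, so no union bound is needed.

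For the perturbation bound, the entries of $\W$ are independent centered Bernoulli random variables with variances at most $\thetab_{\max}\deltab_{\max}$, and the maximum expected row and column sums of $\A$ are both dominated by $Z$ as defined in \cref{Definition_Z}. A matrix-Bernstein argument in the style of Lei and Rinaldo (2015), applied to the Hermitian dilation of the asymmetric matrix $\W$, then yields $\|\W\| \leq C\sqrt{\log(n)\,Z}$ with probability at least $1 - O(n^{-4})$; Assumption 2, in particular \cref{eq:4.5}, forces $\log n \ll Z$ and ensures that the Gaussian-type term dominates the sub-exponential correction. For the singular-value gaps of $\Omegab$, \Cref{lemma5.1} gives $\sigma_i(\Omegab) = \|\thetab\|\|\deltab\|\sigma_i(\Sb)$ for $i \leq K$ and $0$ otherwise. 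Under \Cref{Assumption} the matrix $\Sb = \Psib_{\thetab}\B\Psib_{\deltab}^T$ is non-negative, irreducible, and non-singular, and by \cref{peq:4.4} the diagonal entries of $\Psib_{\thetab}, \Psib_{\deltab}$ are bounded below by positive constants, so Perron--Frobenius implies that $\sigma_1(\Sb)$ is simple and strictly larger than $\sigma_2(\Sb)$, while non-singularity gives $\sigma_K(\Sb) > 0$. Consequently both $\sigma_1(\Omegab) - \sigma_2(\Omegab)$ and $\sigma_K(\Omegab) - \sigma_{K+1}(\Omegab) = \sigma_K(\Omegab)$ are of order $\|\thetab\|\|\deltab\|$.

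With these two ingredients, Wedin's theorem produces signs $C_V, C_U \in \{\pm 1\}$ with $\|\hat\Vb_1 - \Vb_1 C_V\|_F \lesssim \|\W\|/(\sigma_1(\Omegab) - \sigma_2(\Omegab))$ and the analogous bound for $\Ub$, and orthogonal $(K-1)\times(K-1)$ matrices $\Ob_{\Vb}, \Ob_{\Ub}$ obtained from the singular value decompositions of the overlap matrices $\hat\Vb_{2\sim K}^T \Vb_{2\sim K}$ and $\hat\Ub_{2\sim K}^T \Ub_{2\sim K}$ satisfying $\|\hat\Vb_{2\sim K} - \Vb_{2\sim K}\Ob_{\Vb}\|_F \lesssim \sqrt{K-1}\,\|\W\|/\min(\sigma_1(\Omegab)-\sigma_2(\Omegab),\, \sigma_K(\Omegab))$, and similarly for $\Ub$. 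Substituting the gap estimates and the concentration bound, and absorbing the fixed constant $K$ into $C$, gives the claimed rate $C\sqrt{\log(n)Z}/(\|\thetab\|\|\deltab\|)$.

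The main obstacle is the concentration step: $\W$ has heterogeneous Bernoulli variances and is non-symmetric, so obtaining the clean rate $\sqrt{\log(n)Z}$ requires carefully identifying $Z$ as the correct variance proxy and handling the asymmetry via Hermitian dilation (or an equivalent truncation-plus-Bernstein argument tuned to yield the $n^{-4}$ tail). A secondary subtlety is that Wedin's theorem for the block $2\sim K$ requires an eigenvalue \emph{isolation} condition on both ends of the block, which is why the Perron--Frobenius lower bound on $\sigma_1 - \sigma_2$ is essential in addition to the lower bound on $\sigma_K$; without it the block bound would degenerate whenever the first two singular values of $\Sb$ happen to be close.
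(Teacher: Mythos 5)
Your proposal is correct and reaches the stated rate, but it routes the subspace-perturbation step differently from the paper. The paper never invokes Wedin's theorem on the asymmetric pair $(\A,\Omegab)$: instead it symmetrizes by working with the Gram matrices, bounding $\norml{\A^T\A-\Omegab^T\Omegab}\leqslant \norml{\A-\Omegab}\,(\norml{\A}+\norml{\Omegab})\leqslant C_1\sqrt{\log(n)Z}\,\norml{\thetab}\norml{\deltab}+C_2\log(n)Z$ and then applying the Yu--Wang--Samworth form of Davis--Kahan to the eigenvectors of $\A^T\A$ versus $\Omegab^T\Omegab$ (and symmetrically $\A\A^T$ versus $\Omegab\Omegab^T$), with eigengaps of order $\norml{\thetab}^2\norml{\deltab}^2$ coming from $\lambda_1(\Sb^T\Sb)-\lambda_2(\Sb^T\Sb)\geqslant C$ and $\lambda_{K+1}(\Omegab^T\Omegab)=0$; the cross term $\log(n)Z$ is then shown to be lower order via \cref{eq:4.6}. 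Your Wedin route applies the perturbation bound directly at the level of singular subspaces, with gaps $\sigma_1(\Omegab)-\sigma_2(\Omegab)$ and $\sigma_K(\Omegab)$ of order $\norml{\thetab}\norml{\deltab}$, which avoids the Gram-matrix difference computation and the need to argue the quadratic term is negligible, at the cost of requiring a version of Wedin with one-sided (unperturbed) gap conditions for the interior block $2\sim K$ -- a point you correctly flag. The concentration ingredient is essentially identical in both: the paper decomposes $\W=\sum_{i,j}\W(i,j)\e_i\e_j^T$ and applies the asymmetric matrix Bernstein inequality of Tropp with variance proxy $\max(\thetab_{\max}\norml{\deltab}_1,\deltab_{\max}\norml{\thetab}_1)\leqslant Z$, which is the internal content of your Hermitian-dilation argument, and the condition $\lim_n\log(n)/Z=0$ (the paper's \cref{eq:4.7}, derived from \cref{eq:4.5}) plays exactly the role you assign it. Both approaches yield the same bound $C\sqrt{\log(n)Z}/(\norml{\thetab}\norml{\deltab})$; the paper's symmetrization is more elementary in that it only needs the symmetric Davis--Kahan statement it quotes, while yours is slightly shorter once the appropriate Wedin variant is available.
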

\begin{proof}
	The proof can be found in \Cref{proof_of_proposition_2}.
\end{proof}
With \Cref{distance between singular vector V}, we are ready to explain further the idea of eliminating the effect caused by heterogeneous parameters from the singular vectors in \Cref{D_SCORE}. {\em The central idea is to divide each row of the singular vector matrix by its first entry.} To this end, for $i = 1, \cdots n$, we define ratio matrices $\Rb_\Vb$ and $\Rb_\Ub$ as
\begin{flalign} \label{definition_R_V_U}
(\Rb_\Vb)_{\bar{i}} = \frac{ (\Vb_{2 \sim K}\Ob_\Vb)_{\bar{i}}}{C_V\Vb_1(i)} \quad \text{and} \quad (\Rb_\Ub)_{\bar{i}} = \frac{ (\Ub_{2 \sim K}\Ob_\Ub)_{\bar{i}}}{C_U\Ub_1(i)}.
\end{flalign}
Namely we divide each row of the matrix $\Vb$ by its first entry and then collect the $2$nd to $K$th columns to form the ratio matrix $\Rb_\Vb$. The matrix $\Rb_\Ub$ is similar.
Note that
\begin{flalign*} \label{meaning_of_DSCORE}
(\Rb_\Vb)_{\bar{i}} &= \frac{(\Vb_{2 \sim K}O_V)_{\bar{i}}}{C_V \Vb_1(i)} = \frac{(\Vb_{2 \sim K})_{\bar{i}}\Ob_\Vb}{C_V \Vb_1(i)} \numequ{i}   \frac{\frac {\deltab (i)}{\norm {\deltab^{(c_i)}}} (\Hb_{2 \sim K})_{\bar{c_i}}\Ob_\Vb}{\frac {\deltab (i)}{\norm {\deltab^{(c_i)}}} C_V \Hb_{1}(c_i)} = \frac{( \Hb_{2 \sim K}O_V)_{\bar{c_i}}}{C_V \Hb_{1}(c_i)}, \numberthis
\end{flalign*}
where (i) follows from \cref{row represent U}.

Comparing \cref{meaning_of_DSCORE} with ${\Vb}_{\bar{i}} =\frac {\deltab (i)}{\norm {\deltab^{(c_i)}}} \Hb_{\bar{c_i}}$ in \cref{row represent U}, we observe that the ratio matrix $\Rb_{\Vb}$ in \cref{meaning_of_DSCORE} does not contain the heterogeneous parameters, and the corresponding row  of each node $i$ in $\Rb_{\Vb}$, i.e., $(\Rb_{\Vb})_{\bar{i}}$, is determined only by $c_i$, which denotes the community that node $i$ belongs to. This implies that if the nodes are in the same community, then their corresponding  rows in $\Rb_\Vb$ are the same. The same argument is also applicable to the ratio matrix $\Rb_{\Ub}$. This explains the importance of the ratio step in \Cref{D_SCORE}. Our next result formally legitimates the ratio matrix $\Rb \equiv [\Rb_\Vb, \Rb_\Ub]$ for clustering.

\begin{Proposition}               \label{k mean gap}
	For the ratio matrix  $\Rb = [\Rb_\Vb, \Rb_\Ub]$ generated by the singular vectors of the matrix $\Omegab$, and for  $ 1 \leqslant i \leqslant n$ and  $1 \leqslant j \leqslant n$, the following inequalities hold:
	\begin{flalign*}
	\norml{\Rb_{\bar{i}} - \Rb_{\bar{j}} } &\geqslant 2 \quad \text{if} \quad c_i \neq c_j, \quad \text{and} \quad
	\norml{\Rb_{\bar{i}} - \Rb_{\bar{j}} } = 0 \quad \text{if} \quad  c_i =c_j.
	\end{flalign*}
\end{Proposition}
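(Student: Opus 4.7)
The equality when $c_i=c_j$ is immediate from the identity \cref{meaning_of_DSCORE}: both $(\Rb_\Vb)_{\bar i}$ and the analogous expression
\[
(\Rb_\Ub)_{\bar i}=\frac{(\Yb_{2\sim K}\Ob_\Ub)_{\bar{c_i}}}{C_U\,\Yb_1(c_i)}
\]
depend on $i$ only through $c_i$, so nodes in the same community yield identical rows of $\Rb=[\Rb_\Vb,\Rb_\Ub]$ and the distance is $0$.

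For $c_i\neq c_j$, I plan to prove the stronger inequality $\normsquare{\Rb_{\bar i}-\Rb_{\bar j}}\geqslant 4$. Because $\Rb$ is the horizontal concatenation $[\Rb_\Vb,\Rb_\Ub]$, the squared distance splits as $\normsquare{(\Rb_\Vb)_{\bar i}-(\Rb_\Vb)_{\bar j}}+\normsquare{(\Rb_\Ub)_{\bar i}-(\Rb_\Ub)_{\bar j}}$, and it suffices to show that each summand is at least $2$. For the $\Rb_\Vb$ piece, orthogonality of $\Ob_\Vb$ preserves norms and $\abs{C_V}=1$, so after using \cref{meaning_of_DSCORE} the problem reduces to bounding
\[
\Big\|\tfrac{(\Hb(k,2),\ldots,\Hb(k,K))}{\Hb(k,1)}-\tfrac{(\Hb(l,2),\ldots,\Hb(l,K))}{\Hb(l,1)}\Big\|,
\]
where $k=c_i$ and $l=c_j$.

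The key computation exploits that $\Hb$ is a $K\times K$ orthogonal matrix (as it arises from the SVD of $\Sb$): its rows have unit length and are mutually orthogonal. Writing $a_k=\Hb(k,1)$ and $\vb_k=(\Hb(k,2),\ldots,\Hb(k,K))$, this gives $\normsquare{\vb_k}=1-a_k^2$ and $\langle \vb_k,\vb_l\rangle=-a_ka_l$ for $k\neq l$. Expanding the squared norm then yields
\[
\Big\|\tfrac{\vb_k}{a_k}-\tfrac{\vb_l}{a_l}\Big\|^2=\tfrac{1-a_k^2}{a_k^2}+\tfrac{1-a_l^2}{a_l^2}-2\cdot\tfrac{-a_ka_l}{a_ka_l}=\tfrac{1}{a_k^2}+\tfrac{1}{a_l^2}\geqslant 2,
\]
where the last step uses $\abs{a_k}\leqslant\norml{\Hb_{\bar k}}=1$. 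An identical computation with $b_k=\Yb(k,1)$ in place of $a_k$ bounds the $\Rb_\Ub$ contribution by $2$, giving the desired $\normsquare{\Rb_{\bar i}-\Rb_{\bar j}}\geqslant 4$ and hence $\norml{\Rb_{\bar i}-\Rb_{\bar j}}\geqslant 2$.

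The one subtlety is that the ratios are only well defined when $\Hb(k,1),\Yb(k,1)\neq 0$ for every $k$. This is exactly where \Cref{Assumption} enters: non-negativity and irreducibility of $\B\B^T$ and $\B^T\B$ transfer to $\Sb\Sb^T$ and $\Sb^T\Sb$ through the positive diagonal matrices $\Psib_\thetab,\Psib_\deltab$ defined in \cref{definition_diagonal_Psi}, so the Perron–Frobenius theorem guarantees that the leading singular vectors $\Yb_1,\Hb_1$ of $\Sb$ may be taken strictly positive. I expect the orthogonality-driven cancellation producing the clean $1/a_k^2+1/a_l^2$ to be the only delicate step; everything else is bookkeeping with the structural formula \cref{meaning_of_DSCORE}.
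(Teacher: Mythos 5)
Your proposal is correct and follows essentially the same route as the paper: split the squared distance into the $\Rb_\Vb$ and $\Rb_\Ub$ contributions, reduce each to a difference of ratio-normalized rows of the orthogonal matrices $\Hb$ and $\Yb$, and use orthonormality of those rows together with $\abs{\Hb_1(k)}\leqslant 1$ (and the Perron--Frobenius positivity of $\Hb_1,\Yb_1$ from \Cref{constant lemma}) to obtain $\tfrac{1}{a_k^2}+\tfrac{1}{a_l^2}\geqslant 2$ per block. The only cosmetic difference is that the paper pads each ratio row with the identically-one first coordinate so the cross term vanishes outright, whereas you compute $\langle\vb_k,\vb_l\rangle=-a_ka_l$ for the truncated vectors directly; the two computations are algebraically identical.
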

\begin{proof}
	The proof can be found in \Cref{proof_of_proposition_3}.
\end{proof}
\Cref{k mean gap}  states that  if the nodes are in the same community, then their corresponding rows in $\Rb \equiv [\Rb_\Vb, \Rb_\Ub]$ are the same. Otherwise if they are in different communities, their corresponding rows are sufficiently different. \Cref{k mean gap} also implies that the ratio matrix $\Rb$ has exactly $K$ different rows due to the fact that there are only $K$ communities in the graph. Thus, the ratio matrix $\Rb$ has the desirable  properties for spectral clustering.

We then generate another ratio matrix $\hat{\Rb}=[\Rb_{\hat{\Vb}}, \Rb_{\hat{\Ub}}]$, where $\Rb_{\hat{\Vb}}$ and $\Rb_{\hat{\Ub}}$ are generated from  $\hat{\Vb}$ and $\hat{\Ub}$ in the way similar to the generation of $\Rb_\Vb$ and $\Rb_\Ub$ from $\Vb$ and $\Ub$. The exact  definitions of $\Rb_{\hat{\Vb}}$ and $\Rb_{\hat{\Ub}}$ are in \cref{ratio_matrix}. Note that, $\hat{\Rb}$ is the ratio matrix generated from the random adjacency matrix $\A$, whereas $\Rb$ is the ratio matrix generated from the expected matrix of $\A$, i.e., the $\Omegab$.

To bound the distance between the ratio matrices $\Rb$ and $\hat{\Rb}$, define a quantity $err_n$,
\begin{flalign} \label{def_err_n}
err_n \equiv	\err ,
\end{flalign}
which characterizes the effect of heterogeneous parameters on the difference between $\Rb$ and $\hat{\Rb}$ as shown in \Cref{R gap}.
\begin{Proposition}                                                 \label{R gap}
	For $\Rb = [\Rb_{\Vb},\Rb_{\Ub}], \hat{\Rb}=[\Rb_{\hat{\Vb}}, \Rb_{\hat{\Ub}}]$, and n large enough, with probability at least $1 - O(n^{-4})$,  we have
	\begin{flalign}
	\norml{\hat{\Rb}  - \Rb}_F^2 \leqslant    C T_n^2 \log (n) err_n.
	\end{flalign}	
\end{Proposition}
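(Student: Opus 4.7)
The plan is to decompose $\norml{\hat\Rb - \Rb}_F^2 = \norml{\Rb_{\hat\Vb} - \Rb_\Vb}_F^2 + \norml{\Rb_{\hat\Ub} - \Rb_\Ub}_F^2$ and control each block separately; by symmetry I only discuss the $\Vb$-block. I first absorb the rotations by setting $\widetilde\Vb_1 := C_V \Vb_1$ and $\widetilde\Vb_{2\sim K} := \Vb_{2\sim K} \Ob_\Vb$, which leaves $\Rb_\Vb$ unchanged, and I work on the event of probability $\geqslant 1-O(n^{-4})$ on which \Cref{distance between singular vector V} guarantees both $\norml{\hat\Vb_1 - \widetilde\Vb_1}^2$ and $\norml{\hat\Vb_{2\sim K} - \widetilde\Vb_{2\sim K}}_F^2$ are at most $C \log(n) Z /(\norml{\thetab}^2 \norml{\deltab}^2)$. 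By \Cref{lemma5.1} and the non-negativity and irreducibility of $\Sb\Sb^T$ from \Cref{Assumption}, Perron--Frobenius forces the entries of $\Hb_1$ to be positive and uniformly bounded below by a constant $c_0$; combined with $\Vb_1(i) = \deltab(i)\Hb_1(c_i)/\norml{\deltab^{(c_i)}}$ and the community-norm equivalence \cref{eq:4.4}, this gives $|\Vb_1(i)| \asymp \deltab(i)/\norml{\deltab}$ and $|\widetilde\Vb_{k+1}(i)| \leqslant \norml{\Vb_{\bar i}} \asymp \deltab(i)/\norml{\deltab}$. Moreover, equation \eqref{meaning_of_DSCORE} shows each entry of $\Rb_\Vb$ is bounded by an absolute constant, hence by $T_n$ for $n$ large, so truncating an estimate at $\pm T_n$ never moves it farther from its target.

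The main step partitions indices into good $G := \{i : |\hat\Vb_1(i)| \geqslant |\widetilde\Vb_1(i)|/2\}$ and bad $B := G^c$. On $G$ the algebraic identity
\begin{equation*}
\frac{\hat\Vb_{k+1}(i)}{\hat\Vb_1(i)} - \frac{\widetilde\Vb_{k+1}(i)}{\widetilde\Vb_1(i)} = \frac{\hat\Vb_{k+1}(i) - \widetilde\Vb_{k+1}(i)}{\hat\Vb_1(i)} + \widetilde\Vb_{k+1}(i) \cdot \frac{\widetilde\Vb_1(i)-\hat\Vb_1(i)}{\widetilde\Vb_1(i)\,\hat\Vb_1(i)}
\end{equation*}
combined with the bounds above yields $|\Rb_{\hat\Vb}(i,k) - \Rb_\Vb(i,k)|^2 \lesssim (\norml{\deltab}/\deltab(i))^2 \big(|\hat\Vb_{k+1}(i) - \widetilde\Vb_{k+1}(i)|^2 + |\hat\Vb_1(i) - \widetilde\Vb_1(i)|^2\big)$. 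Summing over $i \in G$ and $k=1,\ldots,K-1$, using $1/\deltab(i)^2 \leqslant 1/\deltab_{\min}^2$ together with \Cref{distance between singular vector V}, produces a good-row contribution of order $\log(n)Z/(\deltab_{\min}^2\norml{\thetab}^2) \leqslant \log(n)\, err_n$. For $i \in B$, the inequality $|\hat\Vb_1(i) - \widetilde\Vb_1(i)|^2 \geqslant |\widetilde\Vb_1(i)|^2/4 \gtrsim \deltab_{\min}^2/\norml{\deltab}^2$ together with \Cref{distance between singular vector V} bounds $|B| \lesssim \log(n)Z/(\norml{\thetab}^2 \deltab_{\min}^2) \leqslant \log(n)\, err_n$, and each bad row contributes at most $O(K T_n^2)$ since $\Rb_{\hat\Vb}(i,k), \Rb_\Vb(i,k) \in [-T_n,T_n]$. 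The bad-row contribution thus dominates at $O(T_n^2 \log(n)\, err_n)$, and adding the identical bound for the $\Ub$-block (obtained by swapping $\deltab \leftrightarrow \thetab$ and $\Vb \leftrightarrow \Ub$) completes the proof.

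The main obstacle is the uniform denominator lower bound $|\Vb_1(i)| \gtrsim \deltab(i)/\norml{\deltab}$: it requires combining Perron--Frobenius on the non-negative irreducible matrix $\Sb\Sb^T$ (from \Cref{Assumption}) with the community-norm equivalence of \Cref{Assumption_2}. Without this, a first-coordinate singular vector that vanishes in some community would break the ratio step and make many more rows ``bad'' than the cardinality bound on $B$ allows; the truncation threshold $T_n$ and the good/bad split are tailored precisely to absorb the rare but possibly large local fluctuations of $\hat\Vb_1(i)$ relative to $\widetilde\Vb_1(i)$.
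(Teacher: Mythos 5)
Your proposal is correct and follows essentially the same route as the paper: the good/bad split on the size of the first singular-vector entry is the paper's well-behaved set $\hat{S}_V$ (with a specific constant), your cardinality bound for $B$ is the paper's \Cref{ill behaviour}, your quotient-difference identity is the paper's \Cref{SCORE_inequality}, and the bad rows contributing $O(KT_n^2)$ each while the good rows contribute $O(\log(n)\,err_n)$ in total is exactly how the paper assembles the final bound.
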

\begin{proof}
	The proof can be found in \Cref{proof_of_proposition_4}.
\end{proof}
We then analyze the matrix $\M^*$ which is defined  as the output matrix of step $2$  in \Cref{D_SCORE}. In fact, $\M^*$  is the  matrix with exactly $K$ different rows and nearest to the ratio matrix $\hat{\Rb}$ in term of Frobenius norm. In the following proposition, we bound the distance of $\M^*$ and the ratio matrix $\Rb$, so that the properties of $\Rb$ in \Cref{k mean gap} can serve as a good approximation of the properties of $\M^*$. The proof of \Cref{M* R gap} is based on \Cref{R gap} and the definition of $\M^*$.
\begin{Proposition}     \label{M* R gap}
	For $n$ large enough, with probability at least $1 - O(n^{-4})$, we have
	\begin{flalign*}
	\norml{\M^{*} - \Rb}_F^2 \leqslant T_n^2 \log(n) err_n.
	\end{flalign*}
\end{Proposition}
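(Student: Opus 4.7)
The plan is to combine the optimality of $\M^*$ as a $k$-means solution with the concentration bound from \Cref{R gap}, leveraging the structural property of $\Rb$ established in \Cref{k mean gap}. First I would verify that $\Rb$ is a feasible candidate for the optimization problem defining $\M^*$: by \Cref{k mean gap}, $\Rb$ takes exactly $K$ distinct row values (one for each community label $c_i$), so $\Rb \in \mathcal{M}_{n, 2K-2, K}$.

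Given feasibility, the optimality of $\M^*$ immediately yields
$$\norml{\M^* - \hat{\Rb}}_F^2 \leqslant \norml{\Rb - \hat{\Rb}}_F^2.$$
The key remaining step is a triangle-inequality pivot through $\hat{\Rb}$:
$$\norml{\M^* - \Rb}_F \leqslant \norml{\M^* - \hat{\Rb}}_F + \norml{\hat{\Rb} - \Rb}_F \leqslant 2\norml{\hat{\Rb} - \Rb}_F ,$$
so that $\norml{\M^* - \Rb}_F^2 \leqslant 4\norml{\hat{\Rb} - \Rb}_F^2$. Invoking \Cref{R gap} on its high-probability event (which holds with probability at least $1 - O(n^{-4})$) bounds the right-hand side by $4 C\, T_n^2 \log(n)\, err_n$, matching the stated bound up to the universal constant that is absorbed.

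Because the argument amounts to a one-shot application of optimality plus triangle inequality, there is no substantial obstacle. The only conceptual point to confirm is that the feasible set $\mathcal{M}_{n, 2K-2, K}$ is defined to include all matrices with (at most) $K$ distinct rows, so that $\Rb$ legitimately competes with $\M^*$; this is built into the $k$-means formulation of \Cref{D_SCORE}, and \Cref{k mean gap} delivers exactly the structural guarantee needed. Thus the proof is essentially a direct corollary of the two preceding propositions and inherits the same high-probability event from \Cref{R gap}.
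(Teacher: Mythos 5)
Your proposal is correct and follows essentially the same route as the paper: feasibility of $\Rb$ in $\mathcal{M}_{n,2K-2,K}$ (via \Cref{k mean gap}) gives the optimality inequality $\norml{\M^*-\hat{\Rb}}_F \leqslant \norml{\Rb-\hat{\Rb}}_F$, and a triangle-inequality pivot through $\hat{\Rb}$ combined with \Cref{R gap} yields the bound up to a constant. The paper's own proof does exactly this (using $(a+b)^2 \leqslant C(a^2+b^2)$ in place of your squared triangle inequality), so there is nothing further to add.
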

\begin{proof}
	The proof can be found in  \Cref{prove_of_lemma_3_1}.
\end{proof}
In order to present our main theorem for the  D-SCORE algorithm, we first define the following notation for convenience. Let  $\mathcal{V}$ denote the set of all the nodes in the graph and let $\mathcal{W}$ be the set of nodes that are correctly clustered by the D-SCORE algorithm. Then by definition, $\mathcal{V} \backslash \mathcal{W}$ is the set of incorrectly clustered nodes, i.e., the nodes which are misclustered by the algorithm. Recall that $n_i$ denotes the number of nodes in community $i$, for $i = 1, \cdots, K$.  The following theorem establishes the bound on the number of  misclustered notes for D-SCORE.

\begin{Theorem}[Convergence of D-SCORE]   \label{mis clustering nodes}
	Consider directed-DCBM, for which  \Cref{Assumption} and \Cref{Assumption_2} hold.  Suppose $ |\mathcal{V} \backslash \mathcal{W}|  < {\min{\{n_1, n_2 \cdots n_K }\}} $.	Let
	$\mathcal{W} \equiv \{1 \leqslant i \leqslant n:\norml{M_{\bar{i}}^* - R_{\bar{i}}}  \leqslant \frac{1}{2} \}.$ Then nodes in the set $\mathcal{W}$ are correctly clustered by  the D-SCORE algorithm. Furthermore,  for n large enough, with probability at least $1 - o(n^{-4})$,
	\begin{flalign}
	|\mathcal{V} \backslash \mathcal{W}| \leqslant CT_n^2\log(n) err_n. 	
	\end{flalign}
\end{Theorem}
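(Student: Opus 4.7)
The plan is to split Theorem~1 into its two claims---(a)~every node in $\mathcal{W}$ is correctly clustered, and (b)~$|\mathcal{V}\setminus\mathcal{W}|$ is small---and to derive each from a different one of the preceding propositions. Claim~(a) combines Proposition~3 (the row separation of $\Rb$) with the triangle inequality and the structural fact that $\M^*$ has exactly $K$ distinct rows. Claim~(b) reduces to an elementary counting argument against the Frobenius-norm bound from Proposition~5.

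For claim~(a), the first step is to transfer the row gap from $\Rb$ to $\M^*$ on $\mathcal{W}$. For any $i,j\in\mathcal{W}$ the triangle inequality gives
\[
\|\M^*_{\bar i}-\M^*_{\bar j}\|\leq\|\M^*_{\bar i}-\Rb_{\bar i}\|+\|\Rb_{\bar i}-\Rb_{\bar j}\|+\|\Rb_{\bar j}-\M^*_{\bar j}\|\quad\text{and}\quad\|\M^*_{\bar i}-\M^*_{\bar j}\|\geq\|\Rb_{\bar i}-\Rb_{\bar j}\|-\tfrac12-\tfrac12.
\]
Inserting the two cases of Proposition~3 gives $\|\M^*_{\bar i}-\M^*_{\bar j}\|\leq 1$ when $c_i=c_j$ and $\|\M^*_{\bar i}-\M^*_{\bar j}\|\geq 1$ when $c_i\neq c_j$. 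Next I would use the hypothesis $|\mathcal{V}\setminus\mathcal{W}|<\min\{n_1,\ldots,n_K\}$: it guarantees that every community $k$ contains at least one representative $i_k\in\mathcal{V}^{(k)}\cap\mathcal{W}$. The $K$ rows $\M^*_{\bar{i_1}},\ldots,\M^*_{\bar{i_K}}$ are pairwise distinct by the second gap bound, and since $\M^*\in\mathcal{M}_{n,2K-2,K}$ has exactly $K$ distinct rows in total, these representatives exhaust them. For any other $j\in\mathcal{W}$ the row $\M^*_{\bar j}$ must coincide with exactly one $\M^*_{\bar{i_l}}$; the lower bound rules out every $l\neq c_j$ (since $0<1$), forcing $\M^*_{\bar j}=\M^*_{\bar{i_{c_j}}}$, so $j$ receives the correct label.

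For claim~(b), I would invoke the definition of $\mathcal{W}$ directly: every $i\notin\mathcal{W}$ contributes at least $1/4$ to $\|\M^*-\Rb\|_F^2$, hence
\[
\tfrac14|\mathcal{V}\setminus\mathcal{W}|<\sum_{i\notin\mathcal{W}}\|\M^*_{\bar i}-\Rb_{\bar i}\|^2\leq\|\M^*-\Rb\|_F^2.
\]
Substituting the high-probability bound $\|\M^*-\Rb\|_F^2\leq T_n^2\log(n)\,err_n$ from Proposition~5 yields $|\mathcal{V}\setminus\mathcal{W}|\leq 4T_n^2\log(n)\,err_n$, which is the claimed inequality with $C=4$. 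The probability guarantee is inherited directly from Proposition~5.

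The main subtlety I anticipate lies in claim~(a), not in the counting step: the argument genuinely uses \emph{both} halves of Proposition~3 (exact within-community equality and the $2$-gap between communities), together with the hard cardinality constraint that $\M^*$ has exactly $K$ distinct rows. The hypothesis $|\mathcal{V}\setminus\mathcal{W}|<\min_k n_k$ is precisely what is needed to ensure that the $K$ representatives span all communities; without it one could miss a community among the rows of $\M^*$ restricted to $\mathcal{W}$ and lose the identification of distinct rows with communities. Under Assumption~2 (in particular \cref{eq:4.5} and \cref{eq:4.6}), this hypothesis is in turn verified a posteriori from the bound in claim~(b).
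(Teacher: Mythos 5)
Your proposal is correct and follows essentially the same route as the paper's proof: the triangle inequality combined with Proposition~\ref{k mean gap} to separate rows of $\M^*$ across communities, the hypothesis $|\mathcal{V}\setminus\mathcal{W}|<\min_k n_k$ plus the $K$-distinct-rows structure of $\M^*$ to identify rows with communities, and the Markov-type counting argument against Proposition~\ref{M* R gap} for the cardinality bound. Your write-up of the exhaustion step and the explicit constant $C=4$ are in fact more detailed than the paper's own (rather terse) version of the same argument.
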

\begin{proof}
	The proof can be found in \Cref{prove_of_theorem_1}.
\end{proof}

We note that the assumption $ |\mathcal{V} \backslash \mathcal{W}|  < {\min{\{n_1, n_2 \cdots n_K }\}} $ in \Cref{mis clustering nodes} guarantees that D-SCORE clusters at least one node in each community correctly. A Similar assumption was also made in \cite{Jin2015} to show the performance guarantee for SCORE algorithm.

To further understand \Cref{mis clustering nodes}, we consider a simple situation, in which the heterogeneous parameters $\thetab$ and $\deltab$ are bounded by constants, i.e., $0 < \alpha \leqslant \thetab, \deltab \leqslant \beta \leqslant 1$. (Note that the special case of the stochastic block model \cite{Holland1983} has $\thetab$ and $\deltab $ to be constant.) In such a case, $err_n \leqslant  \frac{\beta^2}{\alpha^4} $, i.e., it is bounded by a constant. Hence, the error bound of  \Cref{mis clustering nodes}  is  in the order of $O(T_n^2\log(n) )$. Typically, we take $T_n = \log(n)$, and then the misclustering rate satisfies
\begin{flalign*}
\limn \frac{	|\mathcal{V} \backslash \mathcal{W}|}{n} \leqslant \limn \frac{ C\log^3(n)}{n} = 0.
\end{flalign*}

\subsection{Performance Guarantee for D-SCOREq}\label{performance_guarantee_for_DSCOREq}

The general idea of the analysis of D-SCORE$_q$ is similar to that of D-SCORE with some technical differences. Hence, here we directly present the main theorem for D-SCORE$_q$ below and relegate the technical proof to Appendix \ref{app:d-scoreq}.

With a little abuse of notations, we reuse $\Rb, \Rb_\Vb, \Rb_\Ub$ and $\hat{\Rb}, \Rb_{\hat{\Vb}}, \Rb_{\hat{\Ub}}$ for D-SCORE$_q$, which have slightly different meaning as those for D-SCORE  as we explain below. The matrices  $\Rb_{\hat{\Vb}}$ and $ \Rb_{\hat{\Ub}}$ are defined in \cref{row_normalization_ratio_matrix}, and $\Rb_\Vb$ and $\Rb_\Ub$ are defined as
\begin{flalign} \label{expection_matrix_row_ratio}
(\Rb_\Vb)_{\bar{i}} = \frac{ (\Vb\Ob_\Vb)_{\bar{i}}}{\normlq{(\Vb\Ob_\Vb)_{\bar{i}}}} \quad \text{and} \quad (\Rb_\Ub)_{\bar{i}} = \frac{ (\Ub\Ob_\Ub)_{\bar{i}}}{\normlq{(\Ub\Ob_\Ub)_{\bar{i}}}} ,
\end{flalign} for $i = 1, \cdots n$. Thus, we have
\begin{flalign*} \label{meaning_of_Row_Normalization}
(\Rb_\Vb)_{\bar{i}} &= \frac{ (\Vb\Ob_\Vb)_{\bar{i}}}{\normlq{(\Vb\Ob_\Vb)_{\bar{i}}}}  =  \frac{\Vb_{\bar{i}}\Ob_\Vb}{\normlq{\Vb_{\bar{i}}\Ob_\Vb}} \numequ{i}   \frac{\frac {\deltab (i)}{\norm {\deltab^{(c_i)}}} \Hb_{\bar{c_i}}\Ob_\Vb}{\frac {\deltab (i)}{\norm {\deltab^{(c_i)}}} \normlq{\Hb_{\bar{c_i}}\Ob_\Vb}} = \frac{ \Hb_{\bar{c_i}}\Ob_\Vb}{\normlq{\Hb_{\bar{c_i}}\Ob_\Vb}}, \numberthis
\end{flalign*}
where (i) follows from \cref{row represent U}.

Comparing \cref{meaning_of_Row_Normalization} with $\Vb_{\bar{i}}=\frac {\deltab (i)}{\norm {\deltab^{(c_i)}}} \Hb_{\bar{c_i}}$ in \cref{row represent U}, we observe that the ratio matrix $\Rb_\Vb$ in \cref{meaning_of_Row_Normalization} does not contain  factor $\frac {\deltab (i)}{\norm {\deltab^{(c_i)}}}$ of the heterogeneous parameters, and the corresponding row of each node $i$ in $\Rb_\Vb$, i.e., $(\Rb_{\Vb})_{\bar{i}}$ ,  is determined only by $c_i$, which denotes the community that node $i$ belongs to. This implies that if these nodes are in the same community, and then their corresponding  rows in $\Rb_\Vb$ are the same. The same argument is also applicable to the  matrix $\Rb_{\Ub}$. This explains the importance of the ratio step in \Cref{Row_Normalization} and also explains why D-SCORE$q$ is as powerful as D-SCORE.


We are now ready to present the main theorem for the D-SCORE$_q$ algorithm as follows.
\begin{Theorem}[Convergence of D-SCORE$_q$]   \label{rown_mis clustering nodes}
	Consider the  directed-DCBM under  \Cref{Assumption} and \Cref{Assumption_2}. Suppose   $ |\mathcal{V} \backslash \mathcal{W}|  < {\min{\{n_1, n_2 \cdots n_K }\}} $. Let
	$\mathcal{W} \equiv \{1 \leqslant i \leqslant n:\norml{\M_{\bar{i}}^* - \R_{\bar{i}}}  \leqslant \frac{C}{2} \}.$ Then there exists a constant $C$, such that nodes in the set $\mathcal{W}$ are correctly clustered by  the D-SCORE$_q$ algorithm. Furthermore,  for n large enough, with probability at least $1 - o(n^{-4})$,
	\begin{flalign}
	|\mathcal{V} \backslash \mathcal{W}| \leqslant CT_n^2\log(n) err_n. 	
	\end{flalign}
\end{Theorem}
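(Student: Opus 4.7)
The plan is to follow the five-step structure used for \Cref{mis clustering nodes}, replacing each ingredient by its $\ell_q$-normalized analog.

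First, I would establish the analog of \Cref{k mean gap}: the population ratio matrix $\Rb = [\Rb_\Vb, \Rb_\Ub]$ from \cref{expection_matrix_row_ratio} satisfies $\Rb_{\bar{i}} = \Rb_{\bar{j}}$ whenever $c_i = c_j$, and $\norml{\Rb_{\bar{i}} - \Rb_{\bar{j}}} \geqslant 2C$ for some constant $C > 0$ whenever $c_i \neq c_j$. The first identity is immediate from \cref{meaning_of_Row_Normalization}. For the separation, observe that if $c_i \neq c_j$ then $(\Rb_\Vb)_{\bar{i}}$ and $(\Rb_\Vb)_{\bar{j}}$ are the $\ell_q$-normalizations of the two distinct vectors $\Hb_{\bar{c_i}}\Ob_\Vb$ and $\Hb_{\bar{c_j}}\Ob_\Vb$; since $\Hb$ is orthogonal and its $K$ rows are linearly independent, their $\ell_q$-unit-sphere images differ by a positive constant depending only on $\Sb$ and $K$. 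The same argument applied to $\Ub$ supplies the combined lower bound.

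Second, I would prove the analog of \Cref{R gap}, namely $\norml{\hat{\Rb} - \Rb}_F^2 \leqslant C T_n^2 \log(n)\, err_n$ with probability at least $1 - O(n^{-4})$. By \Cref{lemma5.1}, $\norml{\Vb_{\bar{i}}} \asymp \deltab(i)/\norml{\deltab}$ and $\norml{\Ub_{\bar{i}}} \asymp \thetab(i)/\norml{\thetab}$, so under \Cref{Assumption_2} every row of $\Vb\Ob_\Vb$ and $\Ub\Ob_\Ub$ has $\ell_q$-norm uniformly bounded below in terms of $\thetab_{\min}/\norml{\thetab}$ and $\deltab_{\min}/\norml{\deltab}$. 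Combining this with \Cref{distance between singular vector V} and the elementary inequality
$$\normlarge{\frac{a}{\normlq{a}} - \frac{b}{\normlq{b}}} \leqslant \frac{2\norml{a - b}}{\normlq{a}}$$
propagates the singular-vector perturbation into a row-wise bound on $\hat{\Rb} - \Rb$. On the few rows where the perturbation is so large that the denominator estimate fails, the truncation at $T_n$ caps the contribution; a Markov-type count of such exceptional rows, exactly as in the D-SCORE proof, shows that they contribute at most $C T_n^2 \log(n)\, err_n$ to the squared Frobenius norm.

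Third, the analog of \Cref{M* R gap} follows from the variational definition of $\M^*$: because $\Rb$ itself has exactly $K$ distinct rows, $\norml{\M^* - \hat{\Rb}}_F \leqslant \norml{\Rb - \hat{\Rb}}_F$, and the triangle inequality gives $\norml{\M^* - \Rb}_F^2 \leqslant 4 \norml{\hat{\Rb} - \Rb}_F^2 \leqslant C T_n^2 \log(n)\, err_n$. Finally, by the first step any node $i$ with $\norml{\M^*_{\bar{i}} - \Rb_{\bar{i}}} \leqslant C/2$ is assigned to the correct community, since its nearest row in $\Rb$ is forced to be $\Rb_{\bar{c_i}}$ by the $2C$-separation between distinct community centers. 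Hence the misclustered nodes lie in $\mathcal{V} \backslash \mathcal{W}$, and a counting argument gives
$$(C/2)^2 \, \abs{\mathcal{V} \backslash \mathcal{W}} \leqslant \sum_{i \in \mathcal{V} \backslash \mathcal{W}} \norml{\M^*_{\bar{i}} - \Rb_{\bar{i}}}^2 \leqslant \norml{\M^* - \Rb}_F^2 \leqslant C T_n^2 \log(n)\, err_n,$$
which yields $\abs{\mathcal{V} \backslash \mathcal{W}} \leqslant C T_n^2 \log(n)\, err_n$.

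I expect the main obstacle to be the second step. Dividing by the first entry, as in D-SCORE, decouples the ratio into a one-coordinate perturbation, whereas dividing by $\normlq{\cdot}$ couples all $K$ coordinates, so one must uniformly lower-bound the perturbed denominator across all rows. Combining this uniform control with the $T_n$-truncation on the exceptional rows where the lower bound fails is the most delicate piece, though the fact that $\normlq{\cdot}$ for a population row is of the same order as its largest coordinate (which, for the first singular vector, is $\asymp \deltab(i)/\norml{\deltab}$) makes the denominators no smaller than those in D-SCORE and keeps the final bound identical in form.
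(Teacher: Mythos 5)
Your proposal is correct and follows essentially the same route as the paper: the same five-step reduction (population-ratio separation via orthogonality of $\Hb$ and $\ell_q$--$\ell_2$ norm equivalence, perturbation of the $\ell_q$-normalized rows with a Markov-type count of ill-behaved rows capped by the $T_n$ truncation, the variational bound on $\M^*$, and the final Frobenius-norm counting argument). The only cosmetic difference is that your elementary normalization inequality carries $\normlq{a}$ alone in the denominator where the paper's \Cref{inequality} uses $\min(\normlq{\x},\normlq{\y})$, but on the well-behaved set these are comparable, so the argument is unaffected.
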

\begin{proof}
See Appendix \ref{app:d-scoreq}.
\end{proof}

\section{Experiments}\label{sec:experiments}
In this section, we conduct experimental studies to compare the performance of six spectral clustering algorithms, namely, D-SCORE, D-SCORE$_q$, rD-SCORE, rD-SCORE$_q$, oPCA, rPCA, and two likelihood algorithms APL \citep{amini2013} and BCPL \citep{Bickel21068}. We compare these eight algorithms on the web blogs data and the experiments on simulated data.

\subsection{Algorithms}\label{sec:sixalg}
Among the algorithms that we compare in the experiments, D-SCORE and D-SCORE$_q$ correspond  to \Cref{D_SCORE} and \Cref{Row_Normalization} in this paper. The algorithm oPCA (see  \Cref{oPCA}) is the original spectral clustering method, which collects the singular vectors of the adjacency matrix into one matrix and runs $K$-means on such a matrix. Furthermore,  for these algorithms, instead of directly dealing with adjacency matrix $\A$, a pre-processing step called {\em regularized graph Laplacian} \citep{Rohe2016,Joseph2016}  (see \Cref{Regularized_graph_Laplacian}) can be added to regularize the adjacency matrix $\A$. Hence, correspondingly, rPCA first regularizes the adjacency matrix $\A$ to generate a regularized graph Laplacian $\Lb$ (as in \Cref{Regularized_graph_Laplacian}), and then applies oPCA to $\Lb$. Similarly, rD-SCORE  first generates a regularized graph Laplacian $\Lb$ and then applies D-SCORE (\Cref{D_SCORE}) to $\Lb$. The rD-SCORE$_q$  follows the similar regularization procedure of rD-SCORE, but applies D-SCORE$_q$ (\Cref{Row_Normalization}) to $\Lb$ instead of D-SCORE.  Specially for $q=2$, rD-SCORE$_2$ is almost the same as the DI-SIM algorithm in \cite{Rohe2016}. The only difference lies in that \cite{Rohe2016} provided a bi-clustering structure, whereas rD-SCORE$_2$ provides a single cluster structure for nodes. Similarly, rD-SCORE$_q$ can be seen as an extension of the DI-SIM algorithm from the $\ell_2$-norm to the $\ell_q$-norm for any positive integer $q$.

\begin{algorithm}
	\caption{oPCA} \label{oPCA}
	\SetKwInOut{Input}{Input}
	\SetKwInOut{Output}{Output}
	\Input{The number  $K$ of communities and the adjacency matrix $A$.}
	Obtain the first $K$ leading left and right singular vector matrices $V$ and $U$ of $A$.
	
	Put $V$ and $U$ together to  form a matrix $R = [V,U]$ , and apply the K-means method to $R$.
	
	\Output{The community labels of the nodes in the adjacency matrix $A$.}
\end{algorithm}

\begin{algorithm}
	\caption{Regularized graph Laplacian} 	\label{Regularized_graph_Laplacian}
	\SetKwInOut{Input}{Input}
	\SetKwInOut{Output}{Output}
	\Input{The adjacency matrix $A$.}
	Calculate the diagonal matrix $O^{\tau}$, $P^{\tau} \in R^{n \times n}$, where $O^{\tau}(i,i) =\tau + \sum_{j=1}^{n} A(i,j) $ and $P^{\tau}(i,i) =\tau + \sum_{j=1}^{n} A(j,i) $. The regularization parameter $\tau$ is usually set as the average degree $\tau = \sum_{i,j = 1}^{n} A(i,j)/n$.
	
	Let $L = (O^{\tau})^{-1/2} A (P^{\tau})^{-1/2 }$.
	
	\Output{The regularized graph Laplacian matrix $L$.}
\end{algorithm}

\subsection{Applications to Real Data Sets}
\subsubsection{Applications to Political Blogs Data}
In this subsection, we apply the  above mentioned eight algorithms to the web blogs data introduced in \cite{Adamic2005}.  The blogs data was collected at 2004 presidential election. Such political blogs data can be represented by a directed graph, in which each node in the graph corresponds to a web blog labelled either as liberal or conservative.  An directed edge from node $i$ to node $j$ indicates that there is a hypelink from blog $i$ to blog $j$.  Clearly, such a political blog graph is {\em directed}. The fact that there is a hyperlink from blog $i$ to $j$ does not imply there is also a hypelink from blog $j$  to $i$. Hence, the adjacency matrix of the political blogs data is an asymmetric matrix.

In our experiment, we first extract the largest  component of the graph, which contains $1222$ nodes, and denote it by an asymmetric directed adjacency matrix $\A$. Then, we extract the largest  components of $\A^T\A$ and $\A\A^T$, and use $\mathcal{S}_r$ and $\mathcal{S}_l$ to denote the node sets of these two largest connected components, respectively. We define the intersection set $\mathcal{S} \equiv \mathcal{S}_r \cap  \mathcal{S}_l$, which contains $823$ nodes.

We run all of the six spectral algorithms in the following two different approaches. In the first approach, we run these six algorithms on the entire graph that contains $1222$ nodes. 
In the second approach, we first run the  six algorithms on the intersection set $\mathcal{S}$, and then we use the attachment technique to attach nodes outside $\mathcal{S}$ to clusters (as described in \Cref{Practical_Algorithm}). We repeat each algorithm on each setting 500 times and take the mean of the total number of misclustered nodes. 
Since APL and BCPL are designed for undirected network,  we first build a symmetric adjacency matrix based on the asymmetric one, and then apply APL and BCPL to the symmetric one. Since symmetric network does not have intersection approach, we count the misclustered node of APL and BCPL in intersection approach directly form their result in entire graph approach while limited the node only in the intersection set calculated in DSCORE algorithms.



\begin{table}[H]
	\captionsetup{font=normalsize}
	\centering
	\begin{tabular}{llll}
		&Entire Graph (1222)  &Int. with Attach. (1222)   &Intersection (823)  \\  	\hline 
	oPCA	&434 &300   &217 \\  	\hline 
	rPCA	&414  &246  &190\\  	\hline 
	BCPL	&379  &379  &236 \\  	\hline 
	APL	&61  &61  &28 \\  	\hline 
	DSCORE	&142  &60   &22 \\  	\hline 
	rDSCORE	&139  &60  &22 \\  	\hline 
	DSCORE2	&141  &61   &23 \\  	\hline 
	rDSCORE2	&142  &60   &26 \\  	\hline 
	\end{tabular}
	\caption{Misclustered nodes in political blog data.}  \label{web_blogs_data}
\end{table}



The experiment results are shown in \Cref{web_blogs_data}.
 It can be observed from the table that D-SCORE and D-SCORE$_q$ almost have the same performance, and rD-SCORE  and rD-SCORE$_q$ almost have the same performances.  Furthermore, D-SCORE and D-SCORE$_2$ perform better than oPCA, which implies that the ratio step to remove the heterogeneous parameters helps greatly to improve the clustering accuracy. The same occurs in the comparison of the algorithms with the regularized graph Laplacian. Moreover, APL almost performs the same as DSCORE type algorithms while BCPL doesn't. We will show in the stimulation section that the performance of APL is easily affected by the community structure, and cause its performance unstable.

 Next, by comparing the first and second columns  in \Cref{web_blogs_data}, we observe  that for all algorithms, it is much better to run the algorithms on the intersection set and then attach the outside nodes than  directly  running the algorithm on the entire graph. Especially, the intersection-with-attachment  technique introduced in Algorithm 3 has improved all the original D-SCORE algorithms for the entire graph. 
 
 To explain this improvement further, we plot  the vectors that the algorithms (i.e., oPCA, D-SCORE, D-SCORE$_q$) use in the clustering in \Cref{intersection_original,intersection_DSCORE,intersection_DSCORE2}. Note that in these algorithms, before the $k-$means step, each node corresponds to one row of a matrix. We thus use these row vectors as the coordinate of the nodes and plot them in the figures. The \Cref{political_blog_O_WG,political_blog_1_WG,political_blog_2_WG} include the nodes in the entire graph. The \Cref{political_blog_O_INT,political_blog_1_INT,political_blog_2_INT} include the nodes in the intersection set. We use red triangles  and yellow squares  to represent nodes in the liberal and conservative communities, respectively.  Note that extreme coordinates in  
\Cref{political_blog_1_WG,political_blog_2_WG} are already thresholded  and form the imaginary borders for better presentation;  these extreme coordinates are the effect of having extremely small numbers (computational errors for 0) as the denominators when D-SCORE and D-SCORE$_q$ are used directly on the entire graph, as explained in Section 3. 

\begin{figure}
	\centering
	\begin{subfigure}{0.46\linewidth}
		\captionsetup{font=scriptsize}
		\includegraphics[height=5cm, width=1\linewidth]{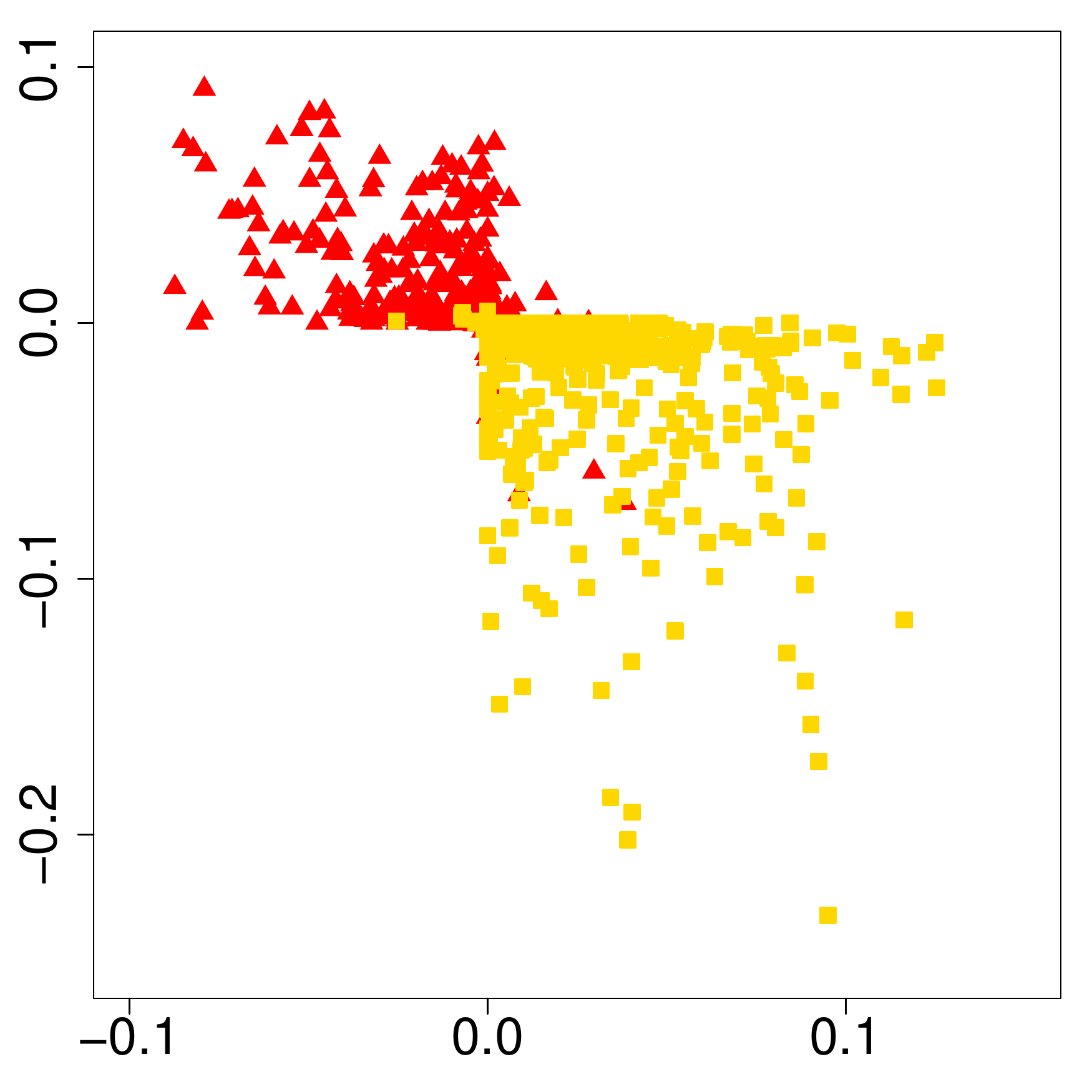}
		\caption{Entire Graph} \label{political_blog_O_WG}
	\end{subfigure}
	\quad
	\begin{subfigure}{0.46\linewidth}
		\captionsetup{font=scriptsize}
		\includegraphics[height=5cm, width=1\linewidth]{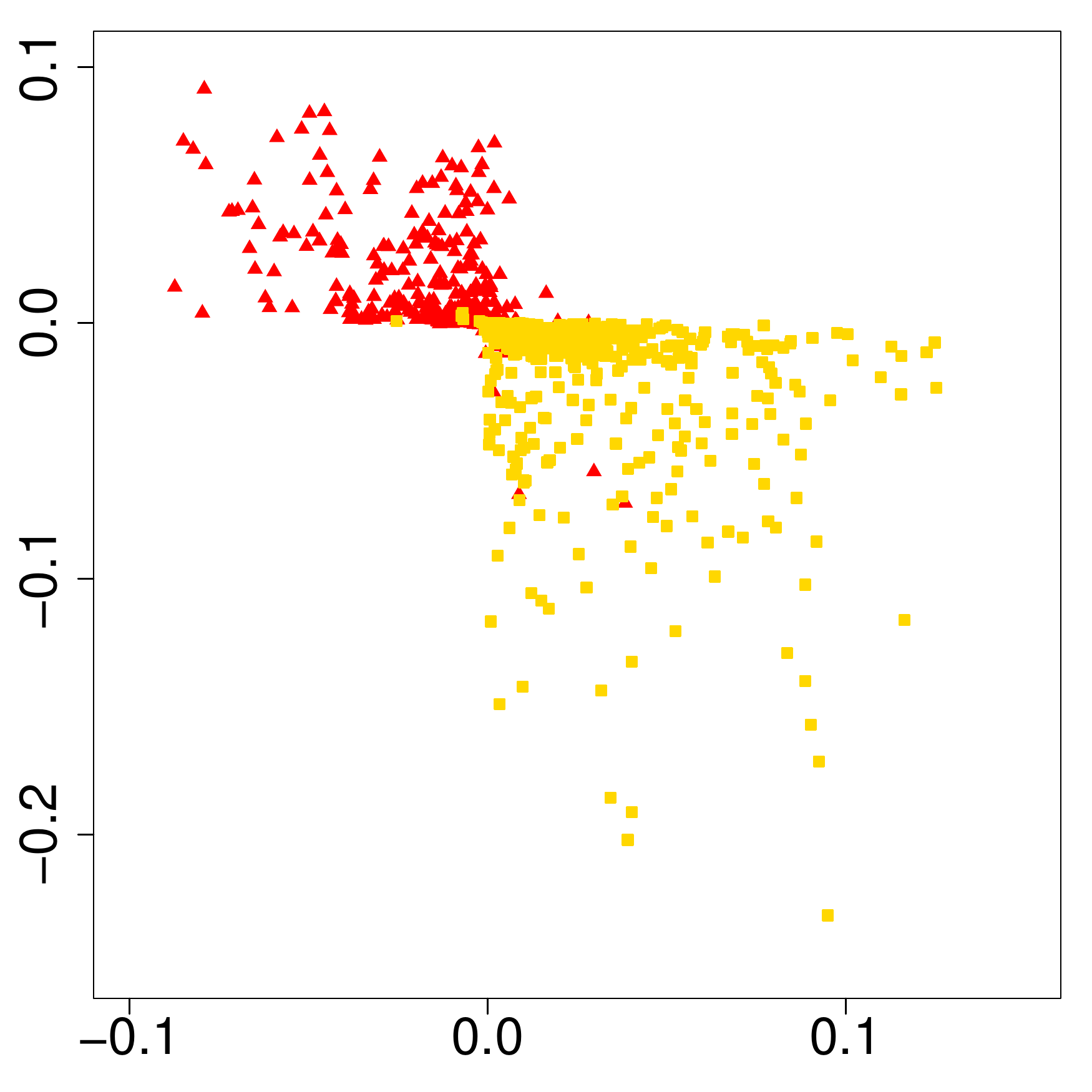}
		\caption{Intersection} \label{political_blog_O_INT}
	\end{subfigure}
	\caption{Comparison of the clustering vectors in the entire graph and in the intersection graph of original spectral clustering. The $x$-axis is the  second leading left singular vector, and the $y$-axis is the  second leading right singular vector. \label{intersection_original}}
\end{figure}

%



\begin{figure}
	\centering
	\begin{subfigure}{0.46\textwidth}
		\captionsetup{font=scriptsize}
		\includegraphics[height=5cm, width=1\linewidth]{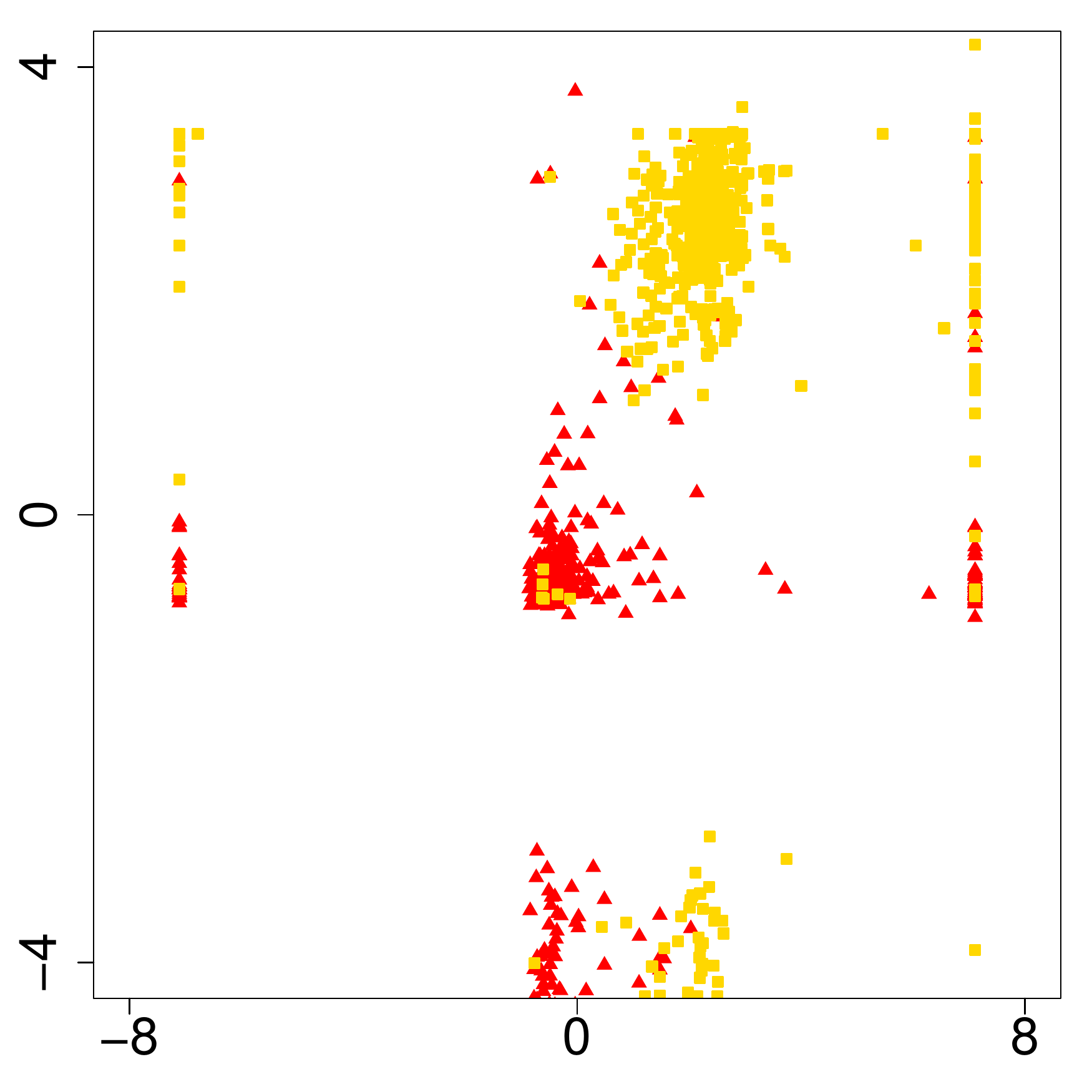}
		\caption{Entire Graph}\label{political_blog_1_WG}
	\end{subfigure}
	\quad
	\begin{subfigure}{0.46\textwidth}
		\captionsetup{font=scriptsize}
		\includegraphics[height=5cm, width=1\linewidth]{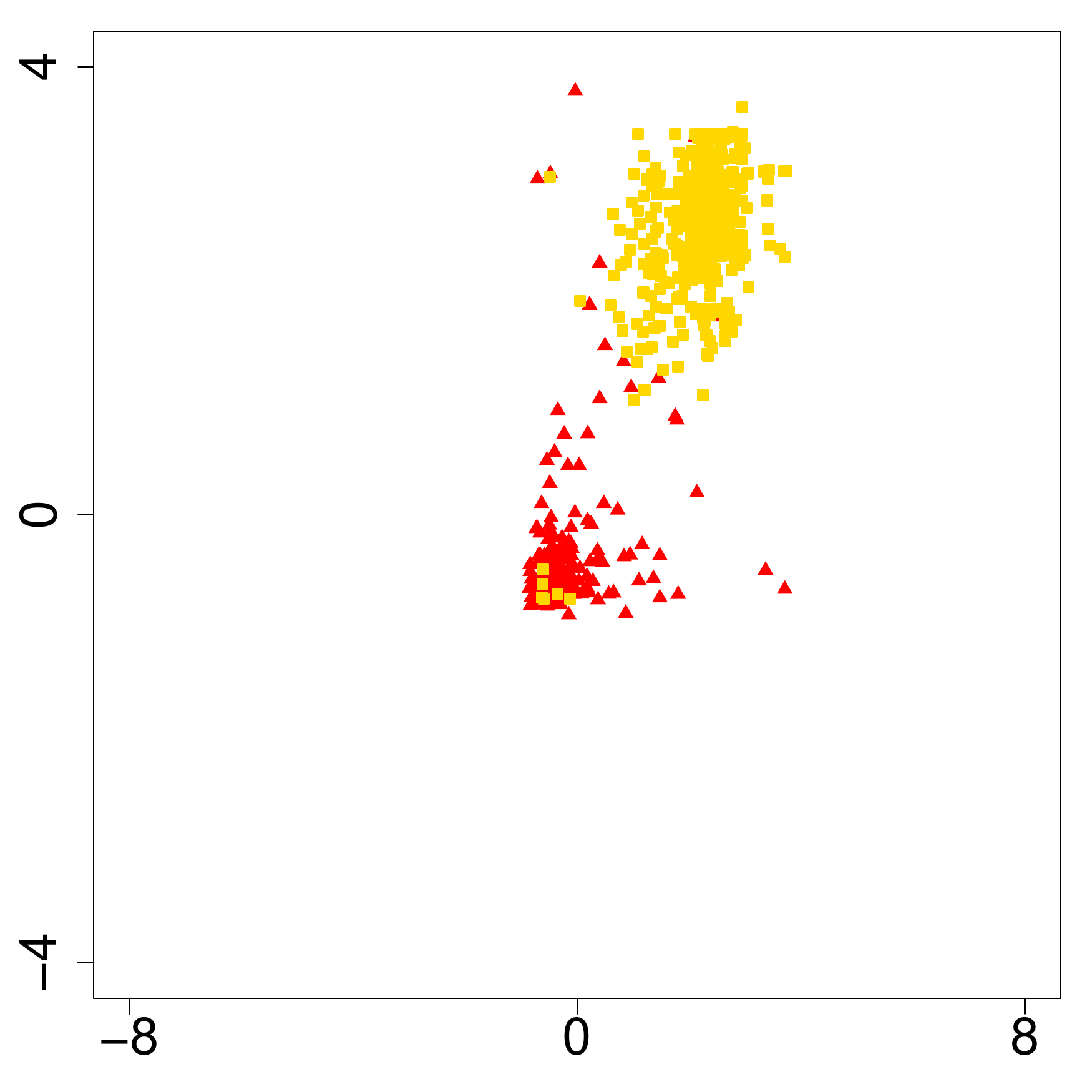}
		\caption{Intersection} \label{political_blog_1_INT}
	\end{subfigure}
	\caption{Comparison of the clustering vectors of entire graph and the intersection in D-SCORE. The $x$-axis is the left ratio vector, and the $y$-axis is the right ratio vector. }\label{intersection_DSCORE}
\end{figure}

\begin{figure}	\label{initutively_graph}
	\centering
	\begin{subfigure}{0.46\textwidth}
		\captionsetup{font=scriptsize}
		\includegraphics[height=5 cm, width=1\linewidth]{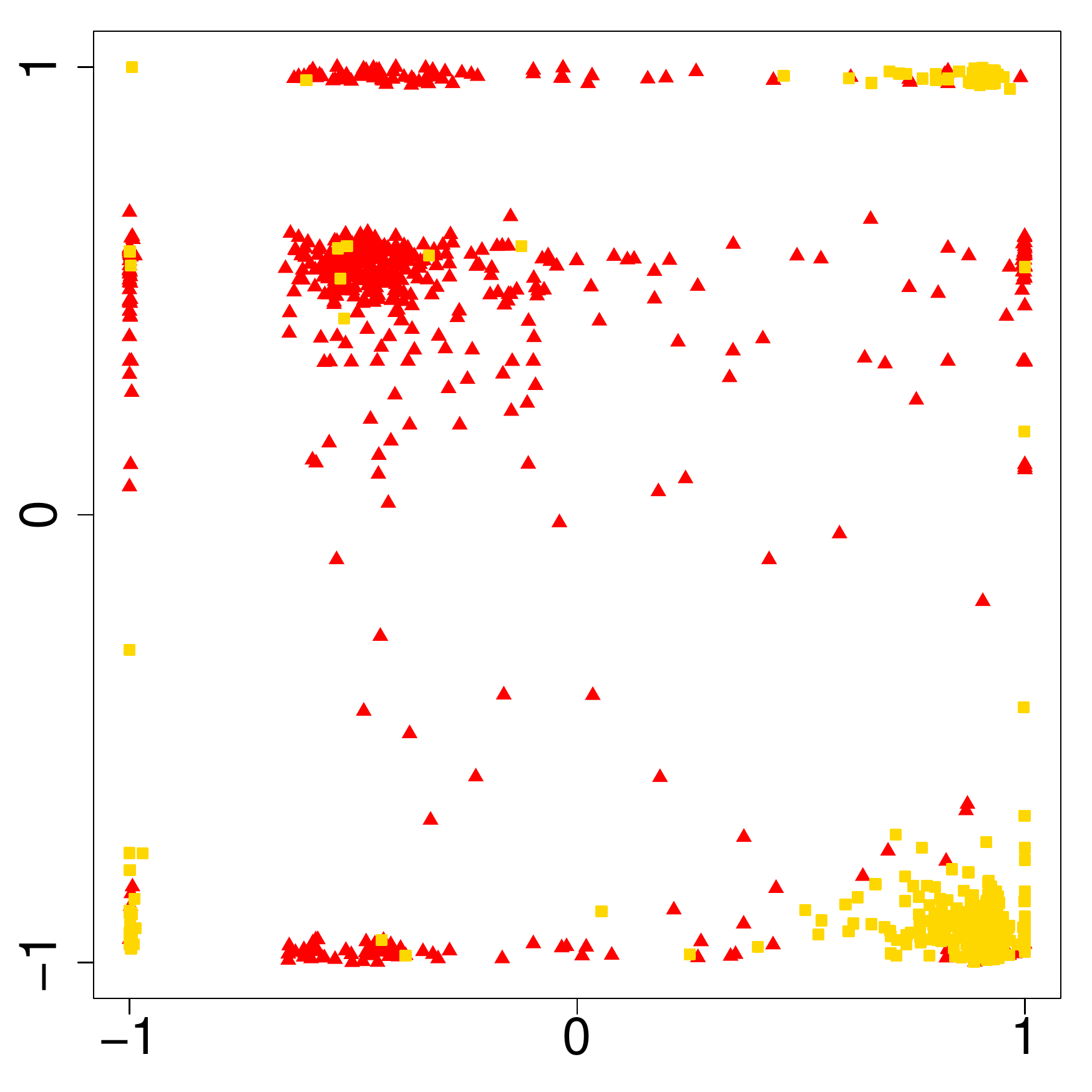}
		\caption{Entire Graph} \label{political_blog_2_WG}
	\end{subfigure}
	\quad
	\begin{subfigure}{0.46\textwidth}
		\captionsetup{font=scriptsize}
		\includegraphics[height=5 cm, width=1\linewidth]{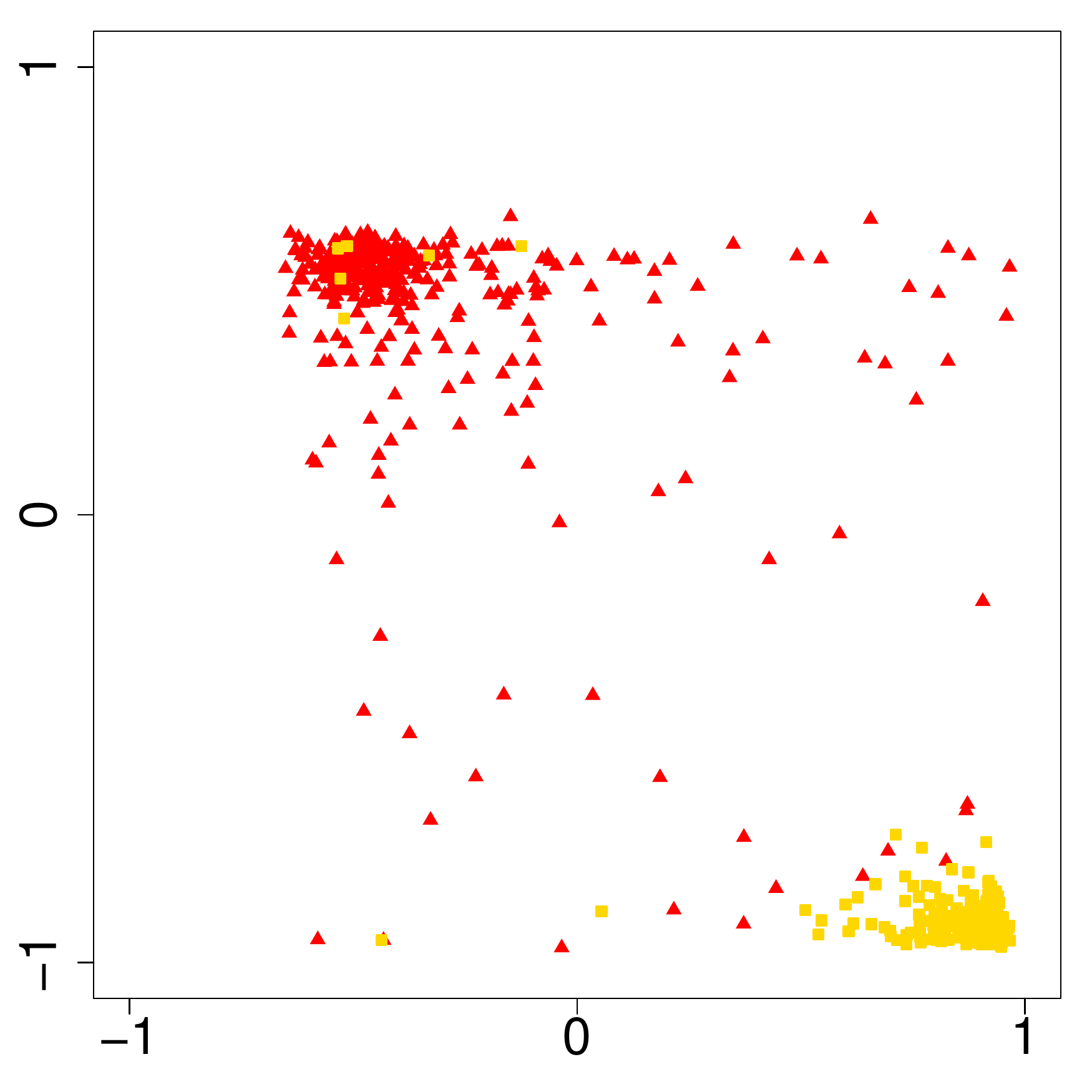}
		\caption{Intersection} \label{political_blog_2_INT}
	\end{subfigure}
	\caption{Comparison of the  singular vectors of entire graph and the intersection in D-SCORE$_q$. The $x$-axis is the second left ratio vector, and the $y$-axis is the second right ratio vector.}\label{intersection_DSCORE2}
\end{figure}

First, we compare \Cref{intersection_original} (which applies the original spectral clustering) with \Cref{intersection_DSCORE,intersection_DSCORE2} (which apply the D-SCORE and D-SCORE$_2$, respectively). It is clear that nodes in \Cref{intersection_DSCORE,intersection_DSCORE2} are much more separable than nodes in \Cref{intersection_original} due to the ratio step in D-SCORE and D-SCORE$_2$. Furthermore, We observe that the intersection graph (\Cref{political_blog_1_INT,political_blog_2_INT}) extracts the center of the entire graph and deletes nodes near the border in \Cref{political_blog_1_WG,political_blog_2_WG}, which act as noise and   mislead the clustering result. 
The intersection-with-attachment technique works by taking the clustering results for the intersection (shown here) and attaching the noise nodes to these clusters using the links in the original network $A$  (not shown here), and hence yields better performance.



\subsubsection{Applications to  Email-Eu-Core Network}
In this subsection, we apply the  above mentioned eight algorithms to the email-Eu-core network  introduced in \cite{snapnets}.  The email data was collected from a large European research institution, and a directed edge from node $i$ to node $j$ indicates that  person $i$ has sent at least one email to person $j$. Clearly, the email-Eu-core network is also a directed network. There are many communities in this network, but we extract the top $4$ largest communities which contains $297$ nodes as  the entire graph and $252$ nodes in intersection graph. We repeat the experiment $500$ times and show the mean error in \cref{email-EuALL_data}. The experimental observation is similar with that of the political blog data, and thus we omit it for brevity. 
%

\begin{table}[H]
	\captionsetup{font=normalsize}
	\centering
	\begin{tabular}{llll}
		&Entire Graph (297)  &Int. with Attach. (297)   &Intersection (252)  \\  	\hline 
		oPCA	&107  &78   &72 \\  	\hline 
		rPCA	&89  &57   &53\\  	\hline 
		BCPL	&23 &23  &18 \\  	\hline 
		APL	&17  &17   &12 \\  	\hline 
		DSCORE	&23  &7 &6 \\  	\hline 
		rDSCORE	&25   &7   &6 \\  	\hline 
		DSCORE2	&15  &4  &3 \\  	\hline 
		rDSCORE2	&16 &4   &3 \\  	\hline 
	\end{tabular}
	\caption{Misclustered nodes in email-Eu-core network. }  \label{email-EuALL_data}
\end{table}

\subsection{Simulations} \label{experiments_on_Synthetic_data}
In this  section, we compare the eight algorithms described in \Cref{sec:sixalg} through a series of simulations. In the experiments, we first generate an adjacency matrix $\A_{0}$ by Directed-DCBM, and then extract the largest  connected component $\A$ of $\A_0$ with the node set of $\A$ denoted by $\mathcal{S}_{0}$. We also extract the largest  connected components of $\A^T\A$ and $\A\A^T$, and denote the node sets as $\mathcal{S}_1$ and $\mathcal{S}_2$, respectively. Let $\mathcal{S}= \mathcal{S}_1 \cap  \mathcal{S}_2$. We also apply the six spectral algorithms in two approaches: (i) the entire graph approach, where we run the six algorithms over the set $\mathcal{S}_{0}$; 
and (ii) intersection-with-attachment approach, where we run the six algorithms over the intersection set $\mathcal{S}$, and then use the attachment technique to cluster nodes outside the intersection set. The usage of APL and BCPL in simulation is the same as that in real data experiment. Since the symmetric adjacency matrix does not have intersection set issue, we directly plot the result of APL and BCPL in entire graph approach in the intersection with attachment approach for comparison.


\subsubsection{Block Model with Symmetric  Structure} In this experiment, we generate the data by DCBM by setting the heterogeneous parameters $\thetab$ such that $P(\thetab(i) = 0.5) = 0.01$, $P(\thetab(i) = 0.1) = 0.05$ and $P(\thetab(i) = 0.6) = 0.4$. We set $\deltab(i) = \thetab(i)$ for all $i \in \{1, \cdots n \}$. Also, we set the block matrix
$\B = \begin{bmatrix}
1, &0.4\\
0.4, &1
\end{bmatrix}$, which is symmetric. Let  $K = 2$. Then, we uniformly randomly assign community labels to nodes and let the total number  $n$ of nodes go from $800$ to $1200$ with the step size $50$. For each $n$, we repeat the experiment $500$ times and \Cref{experiment_BM} plots the average of the misclustered rate.
\begin{figure}[h]
	\centering
	\begin{subfigure}{0.45\textwidth} 
		\captionsetup{font=small}
		\includegraphics[height=5cm,width=\linewidth]{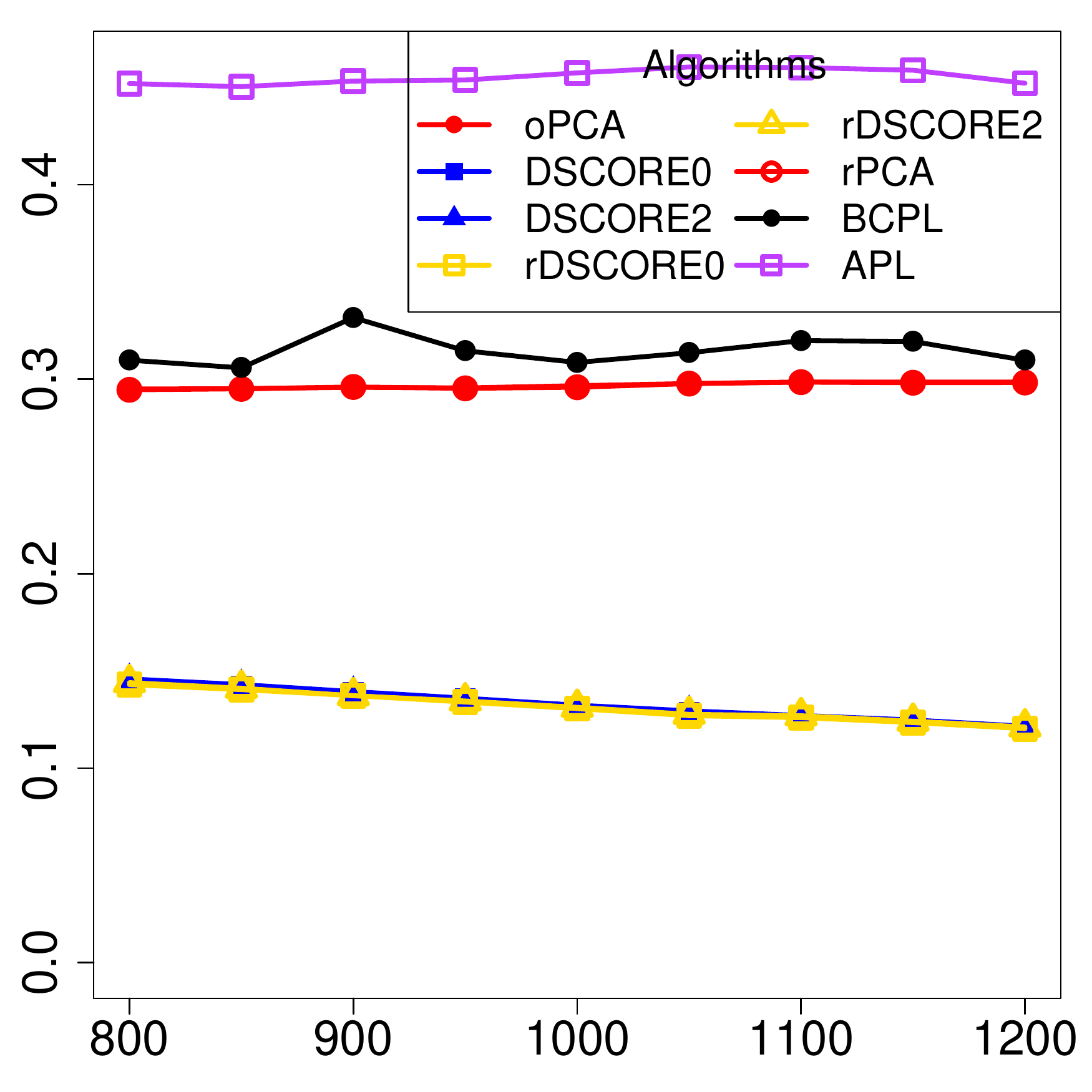}
		\caption{Entire graph} \label{experiment_BM_0}
	\end{subfigure}
	\begin{subfigure}{0.45\textwidth}
		\captionsetup{font=small}
		\includegraphics[height=5cm,width=\linewidth]{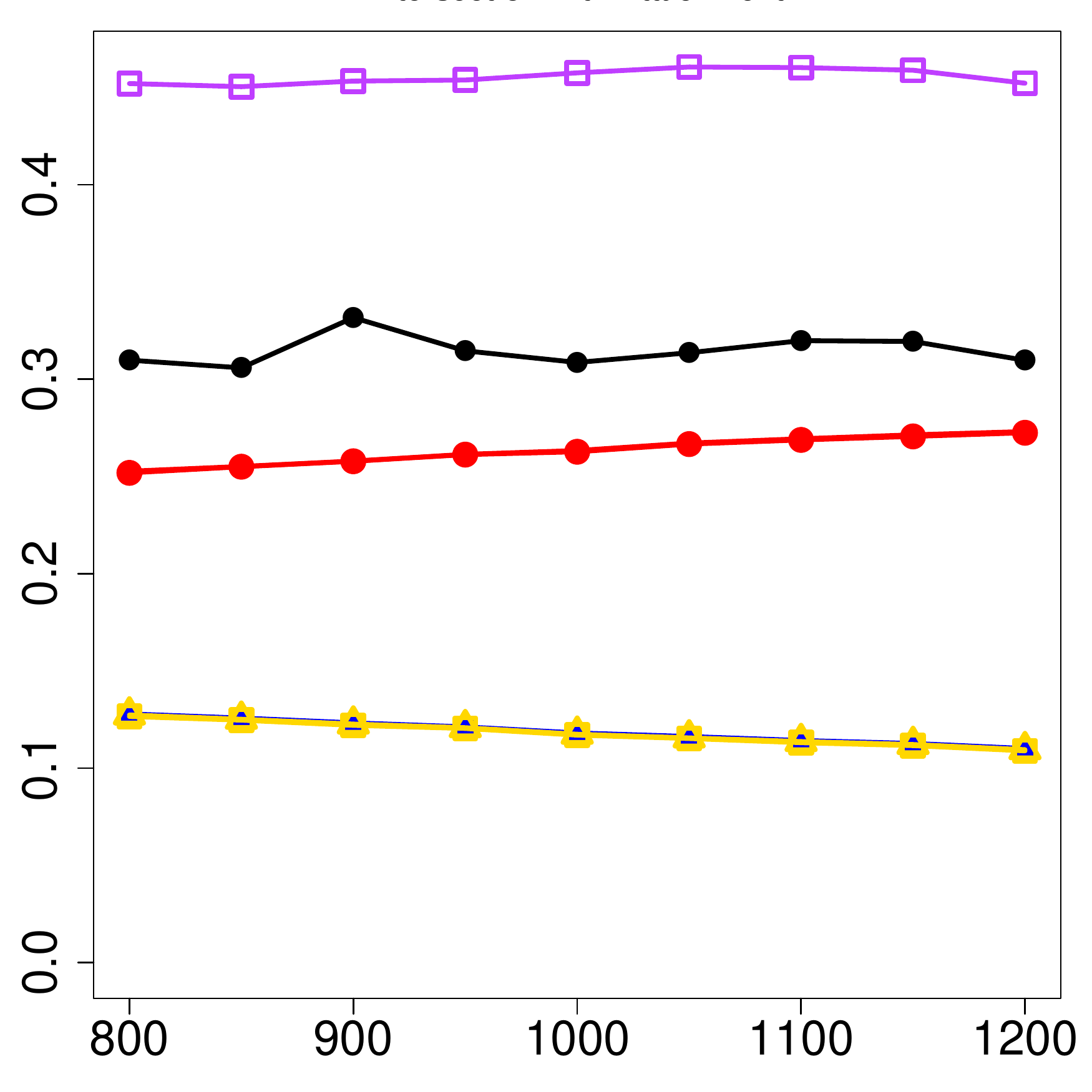}
		\caption{Intersection with attachment} \label{experiment_BM_1}
	\end{subfigure}
	\caption{Comparison of the  misclustering rate under SBM with symmetric structure. The  horizontal axis is the number of nodes in the entire graph, and the vertical  axis is the minsclutering rate.
	} \label{experiment_BM}
\end{figure}

It can be observed that although the model is symmetric,  D-SCORE and D-SCORE$_q$   still perform better than oPCA, APL and BCPL, and the performance is similar with its corresponding pre-precessing version.  Also, by comparing \Cref{experiment_BM_0,experiment_BM_1}, we observe that the intersection-with-attachment technique  improves all variants of the D-SCORE algorithms.


\subsubsection{DCBM with Symmetric and Dense Structure} In this experiment, we set the block matrix
$\B = \begin{bmatrix}
1, &0.4\\
0.4, &1
\end{bmatrix}$
with  two  communities.  We randomly choose the heterogeneous parameter $\thetab$ for nodes with  $P(\thetab(i) = 0.5) = 0.05$, $P(\thetab(i) = 0.1) = 0.05$ and $P(\thetab(i) = 0.6) = 0.4$. We set $\deltab(i) = \thetab(i)$ for all $i \in \{1, \cdots n \}$. Other  parameters are chosen to the same as the previous experiment. 

\begin{figure} [h]
	\centering
	\begin{subfigure}{0.45\textwidth}
		\captionsetup{font=small}
		\includegraphics[height=5cm, width=\linewidth]{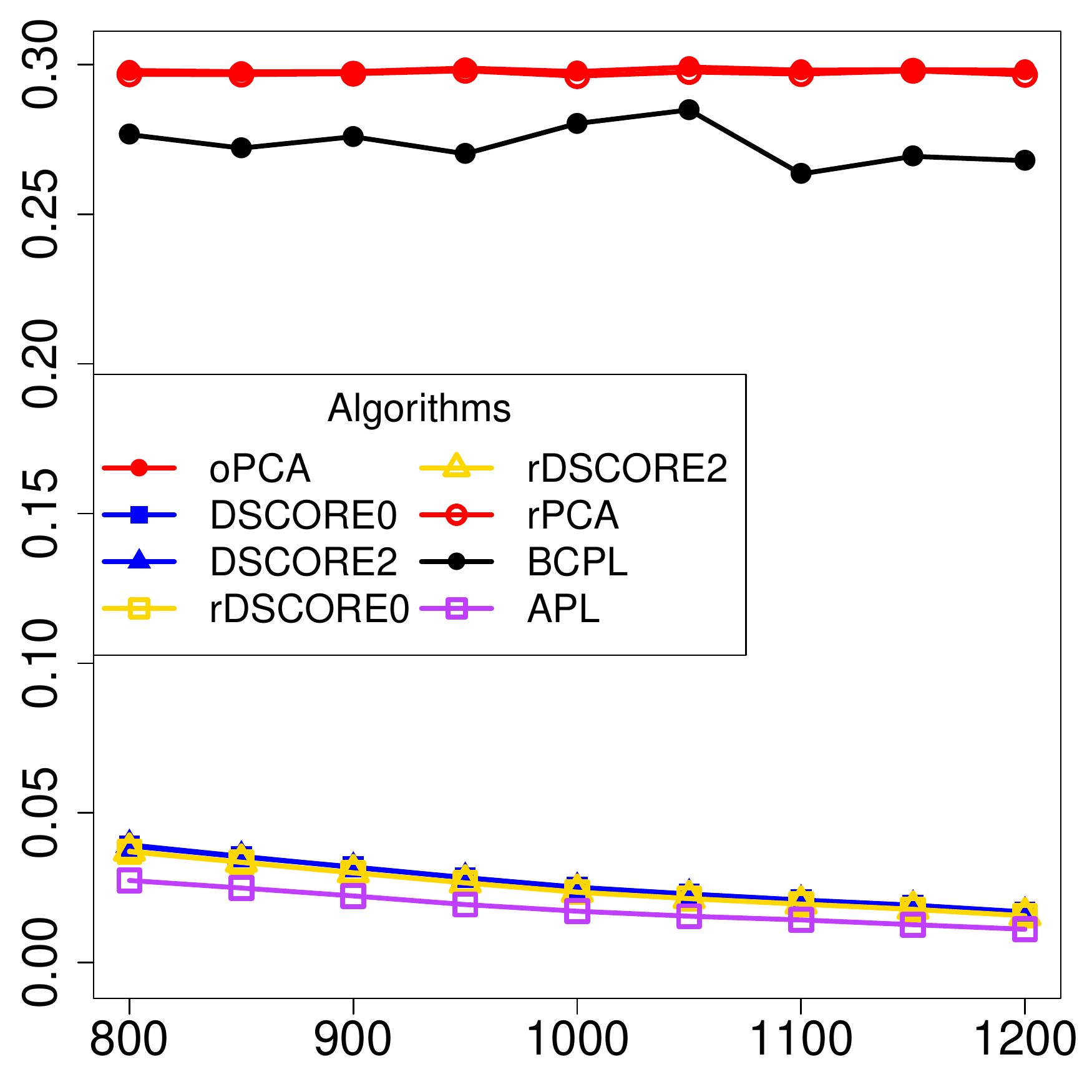}
		\caption{Entire graph} \label{expreiment_2_a}
	\end{subfigure}
	\begin{subfigure}{0.45\textwidth}
		\captionsetup{font=small}
		\includegraphics[height=5cm, width=\linewidth]{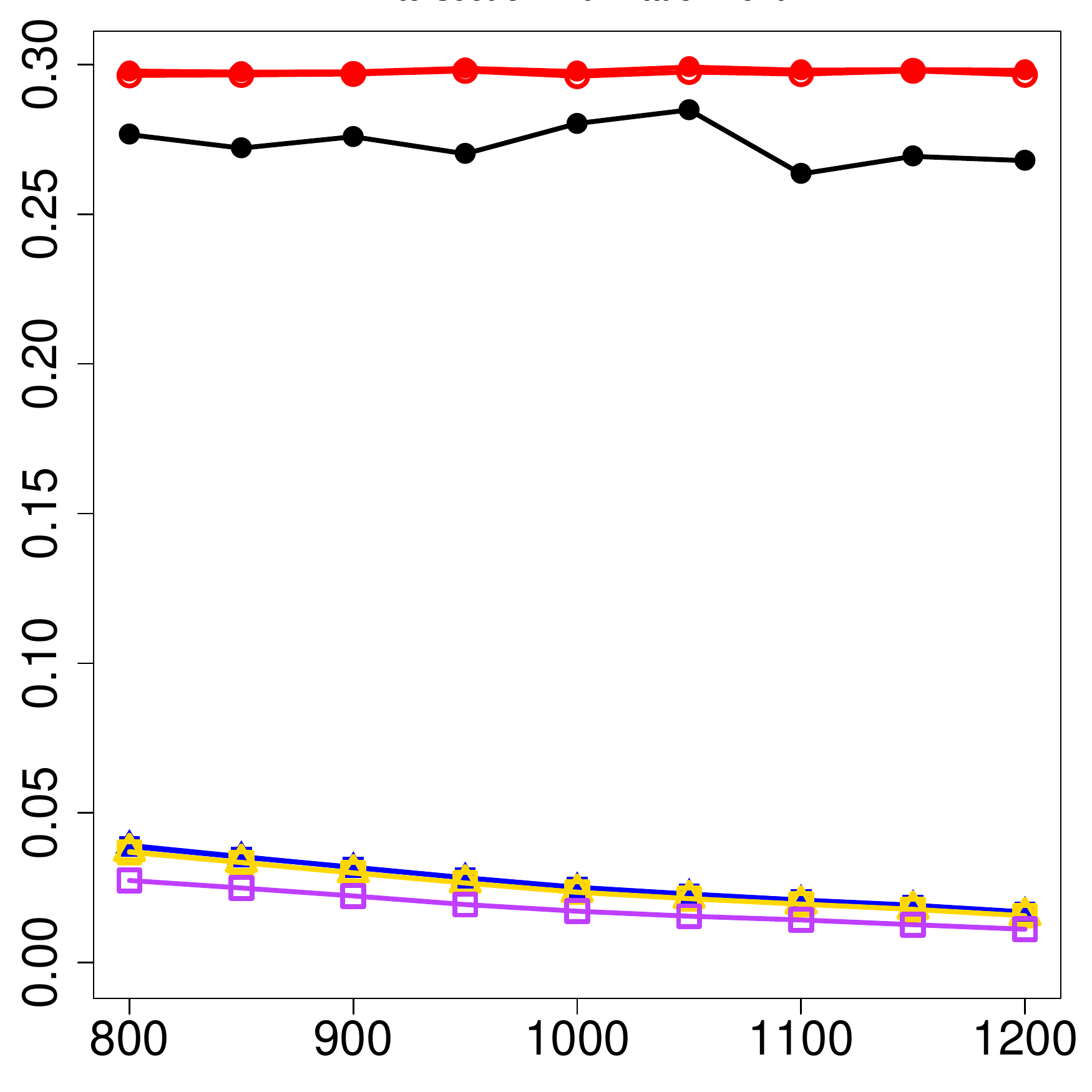}
		\caption{Intersection with attachment} \label{expreiment_2_c}
	\end{subfigure}
	\caption{Comparison of the  misclustering rate under DCBM with symmetric structure. The vertical axis is the number of nodes in the entire graph, and the horizontal axis is the minsclutering rate.} \label{experiment_2_DCBM}
\end{figure}

 The mean of misclustering rate  is plotted in \Cref{experiment_2_DCBM}. It can be observed   that  DSCORE, DSCORE$_q$, rDSCORE and rDSCORE$_q$ have almost the same performance and perform much better than oPCA and rPCA. This implies   that the ratio technique in these algorithms greatly helps to improve the clustering accuracy. The performance of BCPL is better than oPCA and rPCA while worse than the proposed algorithms. What surprises us is that APL performs pretty well in this setting.


\subsubsection{DCBM with Asymmetric and Sparse Structure} \label{sparse_graph}
In this experiment, we set the block matrix
$\B = \begin{bmatrix}
1, &0.4\\
0.5, &1
\end{bmatrix}$
, the number of communities $K = 2$, and the heterogeneous parameter $\thetab$ such that $P(\thetab(i) = 0.5) = 0.01$, $P(\thetab(i) = 0.1) = 0.01$ and $P(\thetab(i) = 0.6) = 0.4$. In this experiment, we randomly pick  $\deltab$ in the same way as $\thetab$ instead of setting $\thetab(i) = \deltab(i)$, which increases the asymmetric structure of the model. Other parameters are chosen to the same as the previous experiment.  The mean of the misclustering rate  is plotted in \Cref{experiment_3_DCBM}.

\begin{figure}[H]
	\centering
	\begin{subfigure}{0.45\textwidth}
		\captionsetup{font=small}
		\includegraphics[height=5cm, width=\linewidth]{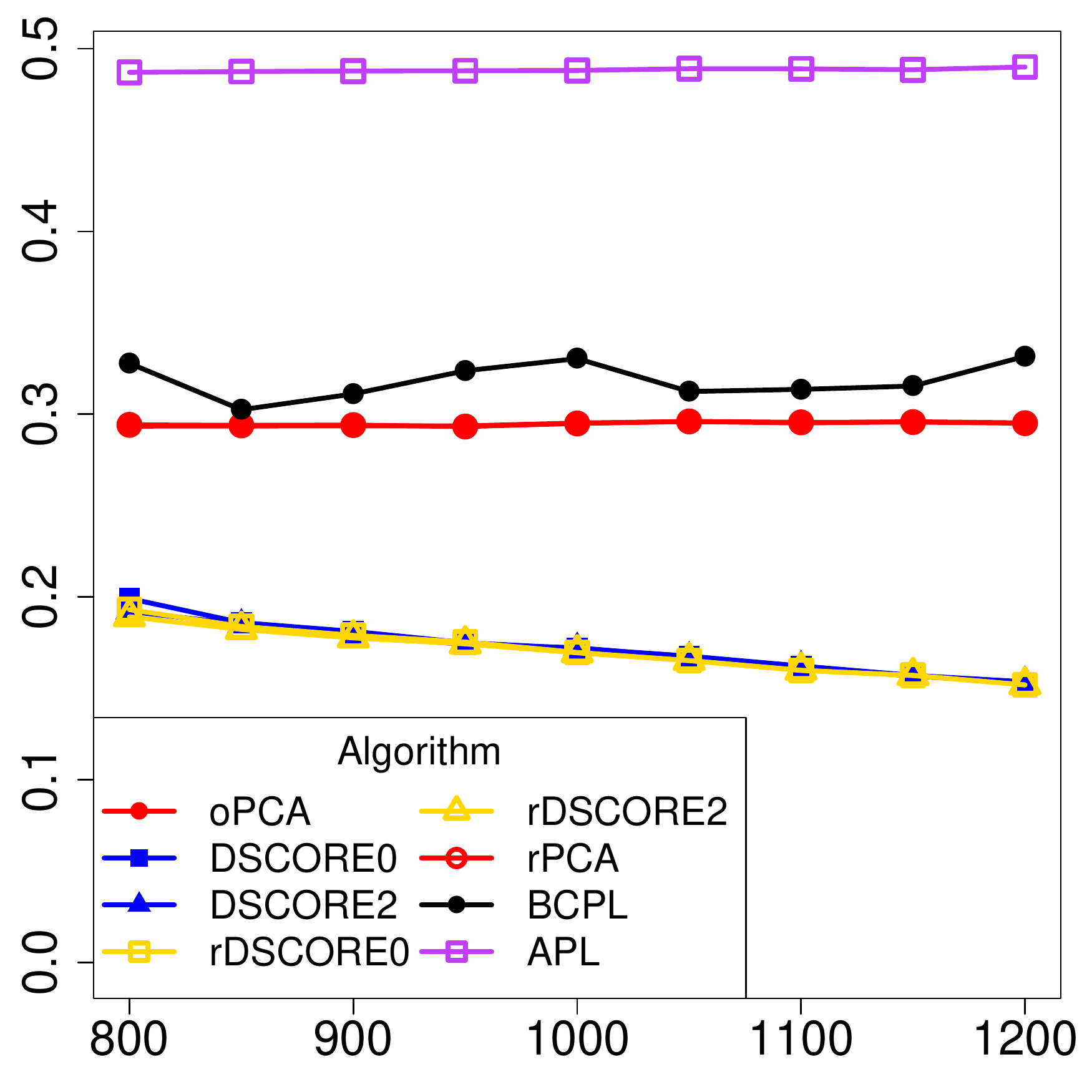}
		\caption{Entire graph} \label{expreiment_3_a}
	\end{subfigure}
	\begin{subfigure}{0.45\textwidth}
		\captionsetup{font=small}
		\includegraphics[height=5cm, width=\linewidth]{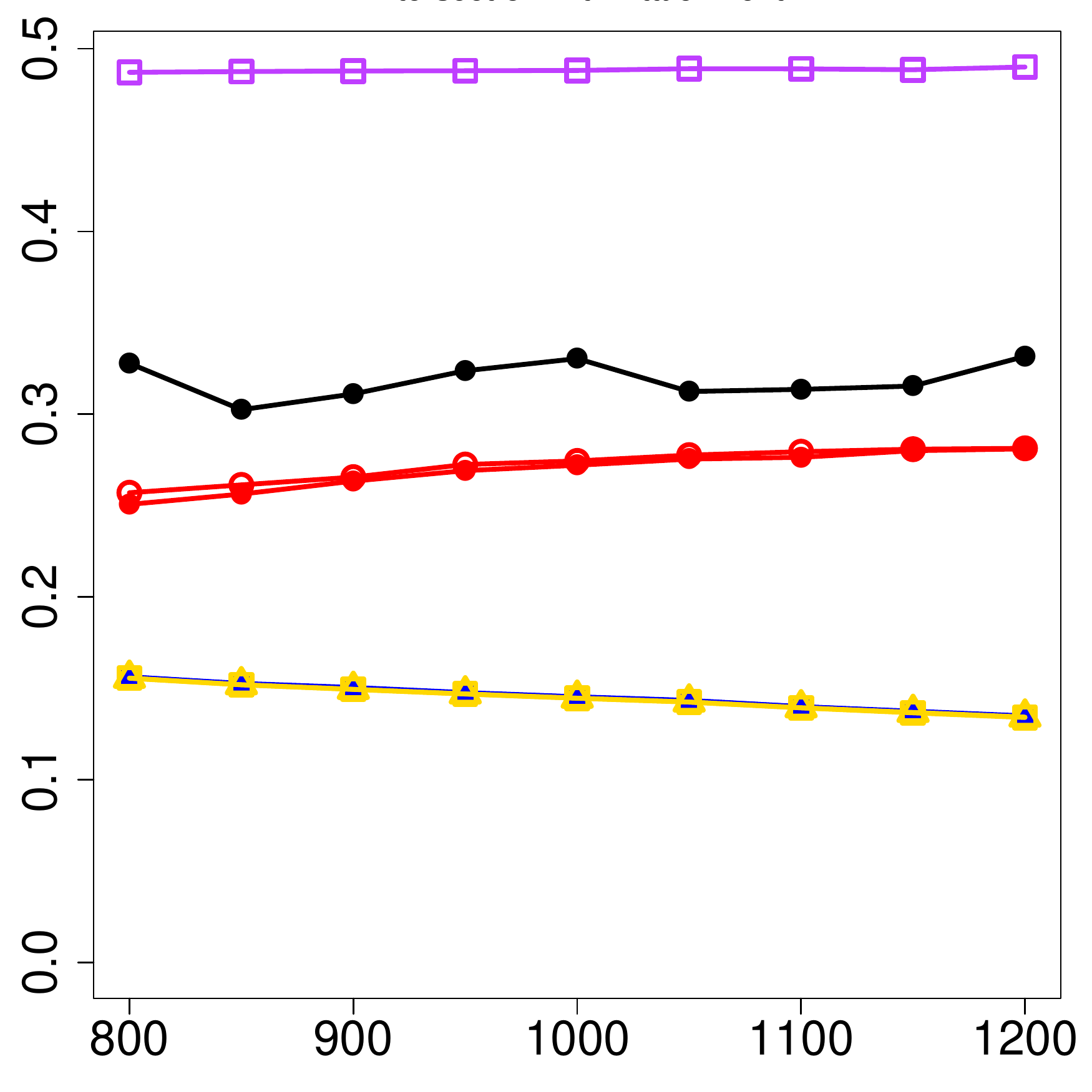}
		\caption{Intersection with attachment} \label{expreiment_3_c}
	\end{subfigure}
	\caption{Comparison of the  misclustering rate under DCBM with asymmetric and sparse structure. The horizontal axis is the number of nodes in the entire graph, and the vertical axis is the minsclutering rate.} \label{experiment_3_DCBM}
\end{figure}

We observe form \Cref{expreiment_3_c} that DSCORE, DSCORE$_q$, rDSCORE and rDSCORE$_q$ perform the same and are better than oPCA, rPCA, APL and BCPL, which implies that the ratio technique greatly helps. Also, by comparing \Cref{expreiment_3_a,expreiment_3_c}, we observe that the intersection-with-attachment approach performs better than the entire graph approach.

\subsubsection{DCBM with Asymmetric and  Dense Structure} \label{dense_graph}
In this experiment, we set the block matrix
$\B = \begin{bmatrix}
1, &0.4\\
0.5, &1
\end{bmatrix}$
, the number of communities $K = 2$, and the heterogeneous parameter $\thetab$ such that $P(\thetab(i) = 0.5) = 0.05$, $P(\thetab(i) = 0.1) = 0.01$ and $P(\thetab(i) = 0.6) = 0.4$. The parameter $\deltab$ is randomly picked in the same way as $\thetab$. Other  parameters are chosen to the same as the previous experiment.  The mean of the misclustering rate  is plotted in \Cref{experiment_4_DCBM}.

\begin{figure}[h]
	\centering
	\begin{subfigure}{0.45\textwidth}
		\captionsetup{font=small}
		\includegraphics[height=5cm, width=\linewidth]{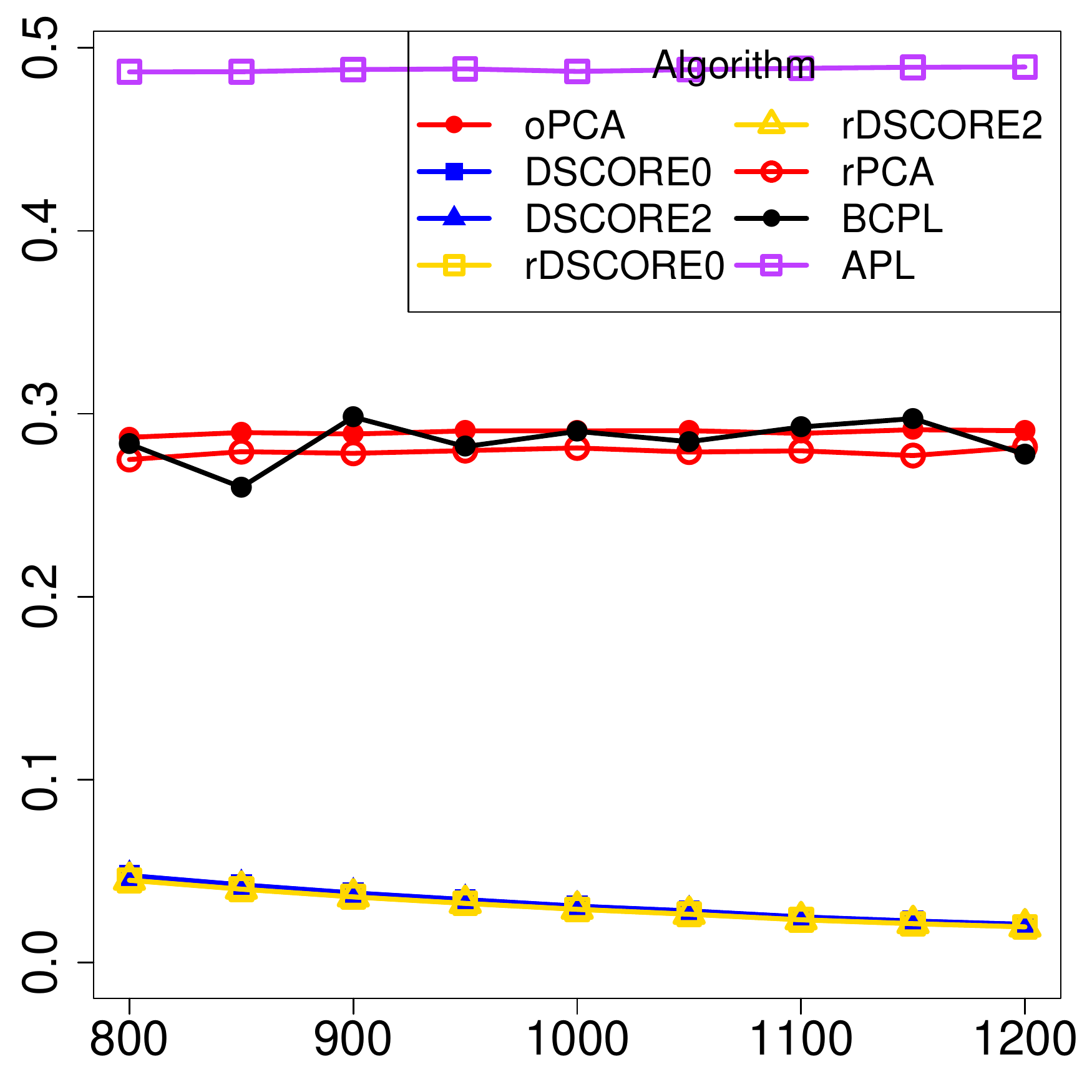}
		\caption{Entire graph} \label{expreiment_4_a}
	\end{subfigure}
	\begin{subfigure}{0.45\textwidth}
		\captionsetup{font=small}
		\includegraphics[height=5cm, width=\linewidth]{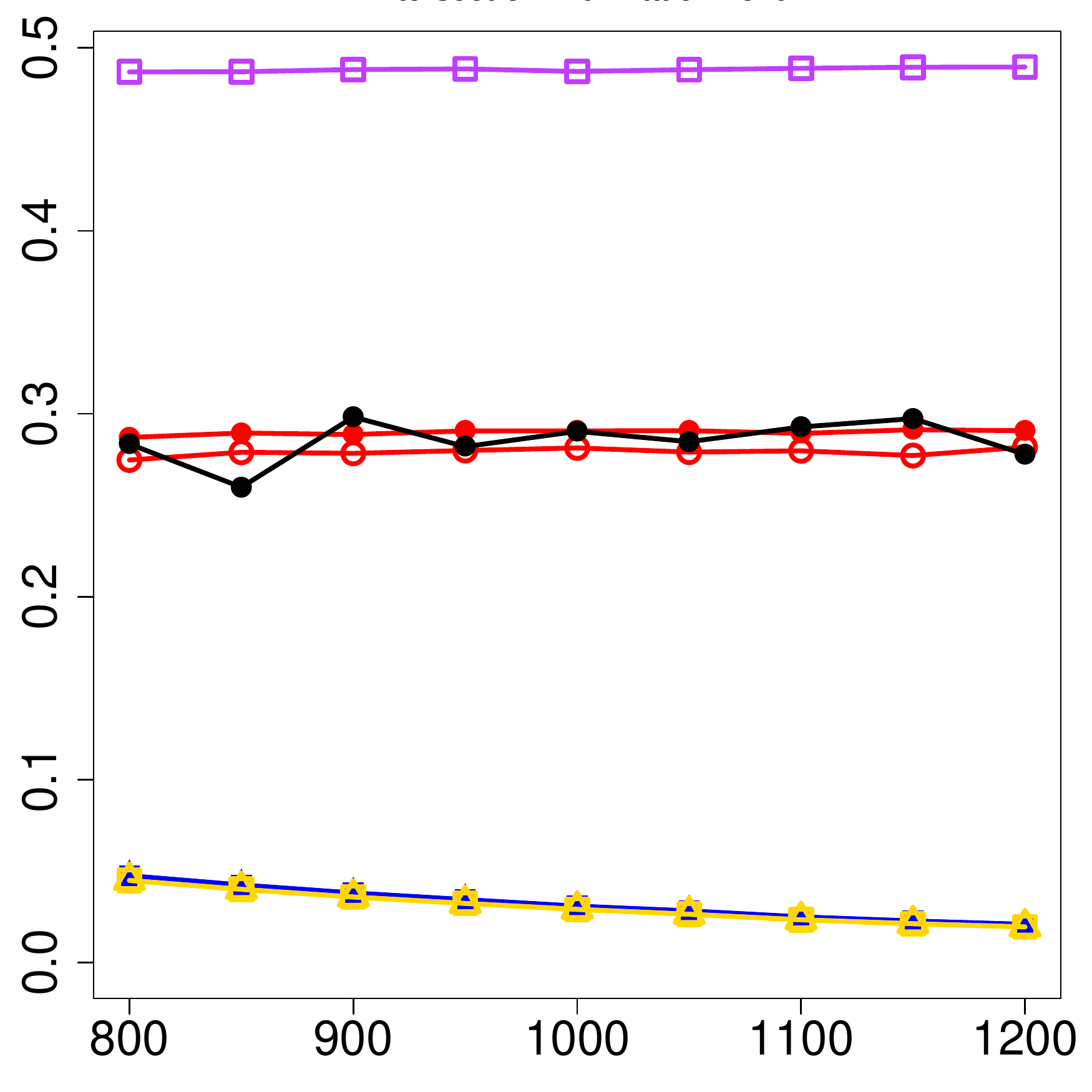}
		\caption{Intersection with attachment} \label{expreiment_4_c}
	\end{subfigure}
	\caption{Comparison of the  misclustering rate under DCBM with asymmetric and dense structure. The horizontal axis is the number of nodes in the entire graph, and the vertical axis is the minsclutering rate.} \label{experiment_4_DCBM}
\end{figure}

Here, our setting of parameters makes the graph denser than that in the previous experiment (\Cref{sparse_graph}). It can be seen that 
the performance of the entire graph is almost the same as that of the intersection with attachment. This suggests that the intersection-with-attachment technique is more efficient for sparse networks.  This should not be surprising because, for dense networks,  the nodes are more connected and noise nodes  that have low degrees and need the attachment step are reduced.

\vspace{-3mm}
\section{Conclusion}
In this paper, we provided  theoretical guarantee and experimental results for two spectral clustering algorithms for networks with directed edges. In theory,  we established the performance guarantee for  D-SCORE and D-SCORE$_q$ under Direct-DCBM. We also  conducted extensive experiments to demonstrate the advantage  of the improved D-SCORE algorithms over the original version and  the competitive algorithms.  As an extension, since the translation of network structures into Euclidean coordinates using D-SCORE and SCORE can be easily extended to multi-layer networks and node-attributed networks, the theory presented in this paper can be potentially extended to those more general scenarios.


\acks{Z. Wang and Y. Liang would like to thank the partial support of the U.S. National Science Foundation under the grants ECCS-1818904 and CCF-1801855. The authors appreciate the valuable discussion with Jiashun Jin at Carnegie Mellon University.}



\vspace{1cm}
\noindent\textbf{\Large Appendices}
\appendix

\section{Proof of \Cref{mis clustering nodes} (Convergence of D-SCORE) }\label{app:d-score}

We first provide the proofs for Propositions \ref{lemma5.1}-\ref{M* R gap}, and then combine all these properties together to prove \Cref{mis clustering nodes}. Note that all the propositions and lemmas that we show below  need \Cref{Assumption} and \Cref{Assumption_2} to hold.
\subsection{Proof of \Cref{lemma5.1}} \label{proof_of_proposition_1}
\begin{proof}
	We first let $\Thetab_{\theta}$ and  $\Thetab_{\delta}$ denote the $n \times K$ matrices such that for $ 1 \leqslant i \leqslant n$ and $1 \leqslant k \leqslant K$,
	\begin{flalign*}
	\Thetab_{\theta}(i,k) = \begin{cases}
	\frac{\thetab(i)}{\norml{\thetab^{(k)}}}  &\quad \text{if} \quad c_i = k \\
	0 &\quad \text{if} \quad c_i \neq k
	\end{cases}
	\quad \text{ and } \quad
	\Thetab_{\delta}(i,k) = \begin{cases}
	\frac{\deltab(i)}{\norml{\thetab^{(k)}}}  &\quad \text{if} \quad c_i = k \\
	0 &\quad \text{if} \quad c_i \neq k
	\end{cases}.
	\end{flalign*}
	The matrix   $\Thetab_{\theta}$ serves as a membership matrix with each row, say, the $i$th row, containing only one nonzero entry, whose column index corresponds to the community that node $i$ belongs to.
	
	Then by the above definitions of $\Thetab_{\thetab}, \Thetab_{\deltab}$ and the definitions of $\Psib_{\thetab}, \Psib_{\delta}$ (see \cref{definition_diagonal_Psi}), we can express the expectation matrix $\Omegab = (\Thetab_{\thetab} \norml{\thetab} \Psib_{\thetab}) \B ( \Thetab_{\deltab} \norml{\deltab} \Psib_{\deltab}  )^T$. Denoting $\Sb \equiv \Psib_{\thetab} \B \Psib_{\deltab}^T$, we obtain
	\begin{flalign}
	\Omegab = \norml{\thetab} \norml{\deltab} \Thetab_{\thetab} \Sb \Thetab_{\deltab}^T . \label{Omega_matrix}
	\end{flalign}

	Since the diagonal matrices $\Psib_{\thetab}$ and $\Psib_{\deltab}$ are of full rank, $\rank (\Sb) = \rank(\Psib_{\thetab}\B\Psib_{\deltab}^T) = \rank(\B ) = K$. Thus, the $K \times K$ matrix $\Sb$ is also of full rank and has only non-zero singular values. Then, we denote the  SVD of the matrix $\Sb$ as
	\begin{flalign}
	\Sb=\Yb \bm{\Lambda_S} \Hb^T \label{svd_S},
	\end{flalign}
	where $\bm{\Lambda_S}$ is a $K \times K$ non-zero diagonal matrix with the singular values arranged in a decreasing order, and $\Hb$ and $\Yb$ are $K \times K$ orthogonal matrices.
	
	We substitute \cref{svd_S} into \cref{Omega_matrix} and obtain
	\begin{flalign}
	\Omegab = \norml{\thetab} \norml{\deltab}  (\Thetab_{\thetab} \Yb ) \bm{\Lambda_S}   (\Hb \Thetab_{\deltab})^T. \label{svd_omega}
	\end{flalign}
	
	By the definitions of $\Thetab_{\thetab}$ and $\Thetab_{\deltab}$, $\Thetab_{\thetab}^T \Thetab_{\thetab} = \I$ and $\Thetab_{\deltab}^T \Thetab_{\deltab} = \I$. Thus,
	\begin{flalign*}
	(\Thetab_{\thetab} \Yb )^T \Thetab_{\thetab} \Yb &= \Yb^T \Thetab_{\thetab}^T \Thetab_{\thetab} \Yb = \I, \\
	(\Thetab_{\deltab} \Hb )^T \Thetab_{\deltab} \Hb	&= \Hb^T \Thetab_{\deltab}^T \Thetab_{\deltab} \Hb = \I  .             \numberthis \label{svd_proof_equal_identity}
	\end{flalign*}
	
	By \cref{svd_proof_equal_identity}, we observe that $\Thetab_{\thetab} \Yb $ and $\Thetab_{\deltab} \Hb $ have orthogonal columns. Thus, \cref{svd_omega}  is the compact SVD of the matrix $\Omegab$. Denoting the compact SVD of $\Omegab$ as $\Omegab = \Ub \bm{\Lambda} \Vb^T$,  we have
	\begin{flalign}
	\V &= \bm{\Theta_{\delta}} \Hb, \label{lemma_1_row_form_V} \\
	\U &= \bm{\Theta_{\theta}} \Y, \label{lemma_1_row_form_U}\\
	\bm{\Lambda} &= \norml{\thetab} \norml{\deltab} \bm{\Lambda_S} \label{lemma_1_not_zero}
	\end{flalign}
	where $\bm{\Lambda}$ is a $K \times K$ non-zero diagonal matrix, and $\V$ and $\U$ are $n \times K$ matrices with orthogonal columns.
	
	By \cref{lemma_1_not_zero}, $\Omegab$ has only $K$ non-zero singular values because $\Lambdab_S$ is a $K \times K$ non-zero diagonal matrix, i.e., $\sigma_i(\Omegab) =  \norml{\thetab} \norml{\deltab} \sigma_i(\Sb)$ for $i \leqslant K$ and  $\sigma_i (\Omegab) = 0$ for $i > K$. Therefore, \cref{row represent U} follows from the forms of individual rows of  \cref{lemma_1_row_form_V,lemma_1_row_form_U}.
	
	Since $\Hb$ is an orthogonal matrix, \cref{bound_VU} follows because  $\norml{\bm{\V_{\bar{i}}}} = \normlarge{\frac {\deltab (i)}{\norm {\bm{\deltab^{(c_i)}}}} \bm{\Hb_{\bar{c_i}}}}  = \frac {\deltab (i)}{\norm {\bm{\deltab^{(c_i)}}}}$. By \cref{eq:4.4}, $\norml{ \bm{V_{\bar{i}}}} \asymp {\frac{\deltab(i)}{\bm{\norml{\deltab}}}}$. Following the arguments similar to the above, we have $\norml{\bm{\U_{\bar{i}}}} \asymp \frac{\thetab(i)}{\norml{\thetab}}$.
\end{proof}
\subsection{Proof of \Cref{distance between singular vector V}} \label{proof_of_proposition_2}
In \Cref{distance between singular vector V}, we bound the distance between the singular vectors of $\Omegab$ and those of $\A$. In order to bound such distance, we first show a few lemmas, including \Cref{eigenvalue relationship} that establishes the eigenvalues of $\Omegab$ to be at the level of $\norml{\thetab}^2 \norml{\deltab}^2$,  \Cref{spectral norm gap} that bounds the distance between the random adjacency matrix $\A$ and its expected version $\Omegab$, and \Cref{constant lemma} that lower bounds  $\lambda_{1}(\Sb^T\Sb) -\lambda_2(\Sb^T\Sb)$ away from zero. Combining all these lemmas, we apply Davis-Kahan Theorem (\Cref{Davis_Kahan_theorem}) to establish \Cref{distance between singular vector V}.

Now, we formally state the lemmas mentioned above and relegate their proofs to \Cref{proof_of_lemma_of_DSCORE}.

\begin{Lemma}                     \label{eigenvalue relationship}
	Under Directed-DCBM, for $ 1 \leqslant i \leqslant K$, we obtain
	\begin{equation}
	\lambda_i(\Omegab^T \Omegab) \asymp \norml{\thetab}^2 \norml{\deltab}^2.
	\end{equation}
\end{Lemma}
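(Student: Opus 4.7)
The plan is to reduce the eigenvalues of $\Omegab^T\Omegab$ to those of $\Sb^T\Sb$ via \Cref{lemma5.1}, and then argue that the latter are bounded above and below by constants.

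First, by \cref{eq:5.1} in \Cref{lemma5.1}, for $1 \leqslant i \leqslant K$ we have $\sigma_i(\Omegab) = \norml{\thetab}\norml{\deltab}\,\sigma_i(\Sb)$. Squaring gives
\begin{flalign*}
\lambda_i(\Omegab^T\Omegab) \;=\; \sigma_i(\Omegab)^2 \;=\; \norml{\thetab}^2\norml{\deltab}^2\,\lambda_i(\Sb^T\Sb).
\end{flalign*}
Thus the claim reduces to showing that $\lambda_i(\Sb^T\Sb) \asymp 1$ for each $i = 1,\ldots,K$, i.e., the $K\times K$ matrix $\Sb = \Psib_{\thetab}\B\Psib_{\deltab}^T$ has singular values bounded above and bounded away from zero by absolute constants.

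Next I would control the two diagonal factors $\Psib_{\thetab}$ and $\Psib_{\deltab}$. By the definition \cref{definition_diagonal_Psi} and by \cref{peq:4.4} together with \cref{eq:4.4}, we have $\norml{\thetab^{(k)}} \asymp \norml{\thetab}$ and $\norml{\deltab^{(k)}} \asymp \norml{\deltab}$ for every $k$. Consequently each diagonal entry of $\Psib_{\thetab}$ and $\Psib_{\deltab}$ is bounded above by $1$ and bounded below by a positive constant, so both $\Psib_{\thetab}$ and $\Psib_{\deltab}$ have their singular values of constant order.

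For the upper bound on $\lambda_i(\Sb^T\Sb)$, I would use submultiplicativity of the spectral norm together with $\norml{\B} \leqslant K$ (since $K$ is fixed and entries of $\B$ lie in $[0,1]$ by \cref{eq:4.1}), to obtain $\sigma_i(\Sb) \leqslant \norml{\Sb} \leqslant \norml{\Psib_{\thetab}}\,\norml{\B}\,\norml{\Psib_{\deltab}} = O(1)$. For the lower bound, the key input is \cref{eq:4.2}: $\B\B^T$ and $\B^T\B$ are non-singular, and since $\B$ is $K\times K$ with fixed $K$, the smallest singular value of $\B$ is an absolute positive constant. Then
\begin{flalign*}
\sigma_K(\Sb) \;\geqslant\; \sigma_{\min}(\Psib_{\thetab})\,\sigma_K(\B)\,\sigma_{\min}(\Psib_{\deltab}) \;\geqslant\; c > 0,
\end{flalign*}
where $c$ depends only on $K$, on the constants in \cref{peq:4.4}, and on $\B$. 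Combining the two bounds gives $\lambda_i(\Sb^T\Sb) \asymp 1$, and hence $\lambda_i(\Omegab^T\Omegab) \asymp \norml{\thetab}^2\norml{\deltab}^2$ as desired.

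The main (mild) obstacle is making sure the constants hidden by $\asymp$ in \cref{peq:4.4} translate uniformly into bounds on the diagonal entries of $\Psib_{\thetab},\Psib_{\deltab}$ that survive the multiplication with $\B$; but since $K$ is a fixed constant throughout, this is just bookkeeping rather than a genuine technical difficulty.
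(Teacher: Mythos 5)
Your proposal is correct and follows essentially the same route as the paper's own proof: reduce to $\Sb$ via \cref{eq:5.1}, bound the diagonal entries of $\Psib_{\thetab},\Psib_{\deltab}$ between positive constants using \cref{eq:4.4}, and then sandwich $\sigma_i(\Sb)$ using submultiplicativity for the upper bound and the product of smallest singular values together with non-singularity of $\B$ for the lower bound. The only difference is cosmetic (you make $\norml{\B}\leqslant K$ explicit where the paper simply notes $\B$ is a fixed constant matrix).
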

\begin{proof}
	The proof can be found in \Cref{proof_eigenvalue relationship}.
\end{proof}
\begin{Lemma}                                 \label{spectral norm gap}
	For sufficiently large $n$, with probability at least $1 - o(n^{-4})$,
	\begin{equation}
	\norml{\A - \Omegab} \leqslant 6 \sqrt{\log(n) Z}.
	\end{equation}
\end{Lemma}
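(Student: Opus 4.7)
The plan is to apply a matrix Bernstein-type concentration inequality (for example Tropp's user-friendly tail bound) to the centered random matrix $\W = \A - \Omegab$, whose entries are independent, mean-zero, and bounded by $1$.

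First I would decompose $\W = \sum_{i,j=1}^{n} \W(i,j)\,\e_i \e_j^T$ as a sum of independent rank-one random matrices; each summand is mean-zero with spectral norm bounded by $|\W(i,j)| \leqslant 1$ almost surely, since each $\A(i,j)$ is Bernoulli.

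Second, I would estimate the matrix variance parameter. The two matrices $\sum_{i,j}\E[\W(i,j)^2]\,\e_i\e_i^T$ and $\sum_{i,j}\E[\W(i,j)^2]\,\e_j\e_j^T$ are both diagonal. Using $\mathrm{Var}(\W(i,j)) \leqslant \Omegab(i,j) = \thetab(i)\B(c_i,c_j)\deltab(j)$ together with the bound $\B \leqslant 1$ from \Cref{Assumption}, the $i$-th diagonal entry of the former is at most $\thetab(i)\norml{\deltab}_1 \leqslant \thetab_{\max}\norml{\deltab}_1$, while the $j$-th diagonal entry of the latter is at most $\deltab(j)\norml{\thetab}_1 \leqslant \deltab_{\max}\norml{\thetab}_1$. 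Taking the maximum, the matrix variance is at most $\max(\thetab_{\max},\deltab_{\max})\max(\norml{\thetab}_1,\norml{\deltab}_1) = Z$, which is precisely the quantity appearing in the statement.

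Matrix Bernstein then yields
\[
\Pr\bigl(\norml{\W} \geqslant t\bigr) \leqslant 2n \exp\!\left(-\frac{t^2/2}{Z + t/3}\right).
\]
Setting $t = 6\sqrt{\log(n) Z}$ and invoking \cref{eq:4.7}, which gives $\log(n)/Z \to 0$ and hence $t/3 = o(Z)$, the denominator is $(1+o(1))Z$ and the exponent is bounded by $-(1+o(1))\cdot 18 \log n$; the failure probability is therefore $o(n^{-4})$ for large $n$. The main obstacle is essentially bookkeeping: checking that the row/column variance sums reduce cleanly to $Z$ using only $\B \leqslant 1$, and tracking constants to justify the stated factor $6$. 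An alternative route would be an $\epsilon$-net argument bounding $x^T\W y$ via scalar Bernstein and union-bounding over a $\tfrac14$-net of the sphere, but matrix Bernstein gives the cleanest path here.
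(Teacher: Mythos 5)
Your proposal is correct and follows essentially the same route as the paper's proof: the same rank-one decomposition $\W=\sum_{i,j}\W(i,j)\e_i\e_j^T$, the same uniform bound of $1$ on each summand, the same diagonal variance computation yielding $V(\W)\leqslant Z$, and the same application of the rectangular matrix Bernstein inequality with $t=6\sqrt{\log(n)Z}$ combined with \cref{eq:4.7}. No substantive differences to report.
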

\begin{proof}
	The proof can be found in \Cref{proof_spectral norm gap}.
\end{proof}
\begin{Lemma}                              \label{constant lemma}
	With  $\Sb=\Y \bm{\Lambda_S} \Hb^T$, for $i = 1, \cdots, K$, we have
	\begin{flalign}
	0 < C \leqslant \Hb_1(i)  \leqslant 1  \quad &\text{and} \quad     	0 < C \leqslant \Y_1(i)  \leqslant 1, \label{constant H}\\
	\lambda_{1}(\Sb^T\Sb)-\lambda_2(\Sb^T\Sb) &\geqslant C,    \label{constant gap of s} \\
	\V_1(i) > 0,\U_1(i) > 0   &\quad \text{for} \quad  1 \leqslant i \leqslant n  \label{big_0}.
	\end{flalign}
\end{Lemma}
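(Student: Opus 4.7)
The plan is to exploit the Perron--Frobenius structure of $\Sb\Sb^T$ and $\Sb^T\Sb$, combined with a compactness argument, to extract the three claimed bounds. Recall that $\Sb = \Psib_{\thetab}\B\Psib_{\deltab}^T$ where $\B$ is a fixed $K\times K$ non-negative matrix (with $\B\B^T$ and $\B^T\B$ irreducible and non-singular by \Cref{Assumption}) and where $\Psib_{\thetab},\Psib_{\deltab}$ are positive diagonal matrices whose entries are uniformly bounded above and below by positive constants (this is where \cref{peq:4.4} enters, via $\norml{\thetab^{(k)}}/\norml{\thetab}\asymp 1/\sqrt{K}$, and similarly for $\deltab$). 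First I would verify that $\Sb\Sb^T = \Psib_{\thetab}\,\B\Psib_{\deltab}^2\B^T\,\Psib_{\thetab}$ inherits all three structural properties from $\B\B^T$: the zero/non-zero pattern is identical to that of $\B\B^T$ because conjugating by positive diagonals preserves sparsity, hence $\Sb\Sb^T$ is non-negative, irreducible, and non-singular. The same reasoning applies to $\Sb^T\Sb$.

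Next I would apply the Perron--Frobenius theorem. Since $\Sb\Sb^T$ is non-negative and irreducible, its largest eigenvalue $\sigma_1(\Sb)^2$ is simple and admits a unique (up to sign) eigenvector with strictly positive entries; choosing $\Y_1$ to be this Perron eigenvector gives $\Y_1(i) > 0$ for all $i$, and since $\Y_1$ is a unit vector in $\mathbb{R}^K$ we automatically get $\Y_1(i) \leqslant 1$. The same argument applied to $\Sb^T\Sb$ yields $0 < \Hb_1(i) \leqslant 1$. Moreover, the simplicity of $\sigma_1(\Sb)^2$ already implies that the spectral gap $\lambda_1(\Sb^T\Sb) - \lambda_2(\Sb^T\Sb)$ is strictly positive for each fixed $n$.

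The main obstacle is the \emph{uniform} lower bounds $\Hb_1(i),\Y_1(i) \geqslant C$ and $\lambda_1(\Sb^T\Sb) - \lambda_2(\Sb^T\Sb) \geqslant C$, i.e.\ establishing that the constant $C$ does not deteriorate as $n\to\infty$. Here I would use compactness. Because $K$ is a fixed constant and $\B$ does not depend on $n$, the matrix $\Sb$ ranges over a compact subset $\mathcal{K}$ of the $K\times K$ matrices, obtained by letting the diagonals of $\Psib_{\thetab},\Psib_{\deltab}$ vary in the compact range permitted by \cref{peq:4.4}. On $\mathcal{K}$ the functions $\Sb \mapsto \Y_1(\Sb)$, $\Sb\mapsto \Hb_1(\Sb)$ (selected as Perron eigenvectors) and $\Sb\mapsto \lambda_1(\Sb^T\Sb)-\lambda_2(\Sb^T\Sb)$ are continuous: continuity of the eigen-decomposition is guaranteed because the top eigenvalue stays simple throughout $\mathcal{K}$ by Perron--Frobenius. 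Each of these functions is strictly positive on $\mathcal{K}$, so by the extreme value theorem each attains a strictly positive minimum, yielding the uniform constant $C$.

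Finally, for \cref{big_0}, I would simply substitute the formula from \Cref{lemma5.1}: $\V_1(i) = \frac{\deltab(i)}{\norml{\deltab^{(c_i)}}}\Hb_1(c_i)$ and $\U_1(i) = \frac{\thetab(i)}{\norml{\thetab^{(c_i)}}}\Y_1(c_i)$. Both are strictly positive since $\thetab(i),\deltab(i)>0$ by \cref{eq:4.3}, the denominators are positive, and $\Hb_1(c_i),\Y_1(c_i)>0$ by the previous step. Thus all three conclusions follow in sequence from a single Perron--Frobenius argument combined with a compactness extraction.
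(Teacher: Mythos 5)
Your proposal is correct and follows essentially the same route as the paper: both reduce the problem to showing that $\Sb^T\Sb$ and $\Sb\Sb^T$ inherit non-negativity and irreducibility from $\B^T\B$ and $\B\B^T$ (since multiplying by positive diagonal matrices preserves the zero pattern), then invoke Perron--Frobenius to get a positive, simple top eigenvector, and finally obtain \cref{big_0} by substituting $\V_1(i)=\frac{\deltab(i)}{\norml{\deltab^{(c_i)}}}\Hb_1(c_i)$ and $\U_1(i)=\frac{\thetab(i)}{\norml{\thetab^{(c_i)}}}\Y_1(c_i)$. The one genuine addition in your write-up is the explicit compactness/continuity argument showing the constants $C$ are uniform in $n$; the paper only asserts this uniformity via the remark that the relevant structural properties ``do not change with $n$,'' so your extra step actually closes a gap the paper leaves implicit.
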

\begin{proof}
	The proof can be found in \Cref{proof_constant lemma}.
\end{proof}
From \cref{big_0}, we observe that the singular vector corresponding to the largest singular value of $\Omegab$ has all positive entries. Thus, we can use it as the denominator to generate ratio matrix.

The following lemma is a variant of Davis-Kahan theorem.
\begin{Lemma}[\cite{Yu2015}, Theorem 2] \label{Davis_Kahan_theorem}
	Let $\A,\hat{\A} \in R^{n \times n}$ be symmetric, with eigenvalues $\lambda_1 \geqslant \cdots \geqslant    \lambda_n$ and  $\hat{\lambda}_1 \geqslant \cdots \geqslant    \hat{\lambda}_n$ and corresponding eigenvectors $\vb_1,\cdots, \vb_n$ and $\hat{\vb}_1,  \cdots, \hat{\vb}_n$, respectively. Fix
	$1 \leqslant r \leqslant s \leqslant n$ and assume that $\min(\lambda_{r-1} - \lambda_r, \lambda_s - \lambda_{s+1}) \geqslant 0$, where we define $\lambda_0 = \infty$ and $\lambda_{n+1} = -\infty$. Let $k = s - r+1$, $\V = (\vb_r, \vb_{(r+1)}, \cdots, \vb_s) \in R^{n \times k}$  and $\hat{\V} = (\hat{\vb}_r, \hat{\vb}_{(r+1)}, \cdots, \hat{\vb}_s) \in R^{n \times k}$. Then there exists an orthogonal matrix $O \in R^{k \times k}$ such that
	\begin{flalign}
	\norml{\V\bm{O} - \hat{\V}} \leqslant \frac{2^{\frac{3}{2}} k^{\frac{1}{2}} \norml{\A - \hat{\A}}}{\min(\lambda_{r-1} - \lambda_r, \lambda_s - \lambda_{s+1})}.
	\end{flalign}
\end{Lemma}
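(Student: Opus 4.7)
The plan is to derive this variant as a short corollary of the classical Davis--Kahan $\sin\Theta$ theorem, reproducing the argument of Yu, Wang and Samworth (2015, Theorem 2). The classical theorem bounds the canonical angles between invariant subspaces under a spectral perturbation; converting that angular bound into the stated two-norm bound on $\norml{\V\bm{O} - \hat\V}$, after choosing a Procrustes-optimal rotation $\bm{O}$, is a few lines of linear algebra.

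First I would invoke the classical $\sin\Theta$ theorem on the eigenspaces associated with the interval $[\lambda_s,\lambda_r]$. Let $\Theta$ denote the $k\times k$ diagonal matrix whose entries are the principal angles between $\mathrm{col}(\V)$ and $\mathrm{col}(\hat\V)$. The hypothesis $\min(\lambda_{r-1}-\lambda_r,\lambda_s-\lambda_{s+1}) \geqslant 0$, together with the conventions $\lambda_0 = \infty$ and $\lambda_{n+1}=-\infty$, is exactly the condition separating the spectral interval $[\lambda_s,\lambda_r]$ from the rest of the spectrum by $\delta \equiv \min(\lambda_{r-1}-\lambda_r,\lambda_s-\lambda_{s+1})$. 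The classical theorem then produces
$$\norml{\sin\Theta} \;\leqslant\; \frac{\norml{\A-\hat\A}}{\delta}.$$

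Next I would introduce $\bm{O}$ via an optimal alignment. Take a singular value decomposition $\V^T\hat\V = P (\cos\Theta)\, Q^T$ (the singular values are, by definition, the cosines of the principal angles) and set $\bm{O} \equiv PQ^T$. A direct trace computation then yields the key identity
$$\norml{\V\bm{O} - \hat\V}_F^{\,2} \;=\; 2k - 2\,\mathrm{tr}(\bm{O}^T \V^T \hat\V) \;=\; 2k - 2\sum_{i=1}^{k}\cos\theta_i \;=\; 4\sum_{i=1}^{k}\sin^2(\theta_i/2).$$
Because every principal angle satisfies $\theta_i \in [0,\pi/2]$, the half-angle inequality $\sin^2(\theta_i/2) \leqslant \tfrac{1}{2}\sin^2\theta_i$ applies, and
$$\norml{\V\bm{O} - \hat\V}_F^{\,2} \;\leqslant\; 2\sum_{i=1}^{k}\sin^2\theta_i \;\leqslant\; 2k\,\norml{\sin\Theta}^2.$$
Plugging in the $\sin\Theta$ bound from the first step yields the advertised constant $2^{3/2}k^{1/2}$.

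The main obstacle is the classical $\sin\Theta$ theorem invoked in the first step: its proof rests on solving an operator Sylvester equation for the off-diagonal block of $\hat\A-\A$ in a basis adapted to $\V$, which is a substantial piece of spectral perturbation theory. However, since the present Lemma is quoted verbatim from Yu, Wang and Samworth (2015, Theorem 2), nothing beyond invoking that reference is truly required; the Procrustes step and the half-angle computation above are merely the bookkeeping that packages the $\sin\Theta$ bound into a form convenient for downstream use in \Cref{distance between singular vector V}.
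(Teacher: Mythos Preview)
The paper does not prove this lemma at all; it is simply quoted from \cite{Yu2015} and invoked as a black box in the proof of \Cref{distance between singular vector V}. So your closing remark is exactly right: for the paper's purposes, citing the reference is all that is needed.

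That said, your sketch of the Yu--Wang--Samworth argument has one genuine gap worth flagging. The classical Davis--Kahan $\sin\Theta$ theorem does \emph{not} give
\[
\norml{\sin\Theta} \leqslant \frac{\norml{\A - \hat\A}}{\delta}
\]
with $\delta = \min(\lambda_{r-1}-\lambda_r,\lambda_s-\lambda_{s+1})$ defined purely in terms of the eigenvalues of $\A$. The classical theorem requires the separation to be measured between the selected eigenvalues of one matrix and the \emph{complementary} eigenvalues of the other, so $\delta$ necessarily involves the $\hat\lambda_j$'s. Replacing that mixed gap by a gap in $\A$ alone is precisely the contribution of Yu--Wang--Samworth, and it is achieved via Weyl's inequality together with a case split on whether $\norml{\A-\hat\A}$ exceeds half the gap. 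This step costs an additional factor of $2$, which is why the constant is $2^{3/2}k^{1/2}$ rather than the $2^{1/2}k^{1/2}$ your Procrustes and half-angle computation would produce on its own. Without this Weyl step your argument does not close, and in fact you have not accounted for the leading $2$ in the stated constant.
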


Now, we are ready to prove \Cref{distance between singular vector V}.	
\begin{proof}[Proof of \Cref{distance between singular vector V}]
	First, we derive
	\begin{flalign*}
	\norml{\X^T\X -\Omegab^T \Omegab} & \leqslant \norml{\X^T\X -\X^T\Omegab} + \norml{\X^T\Omegab -\Omegab^T\Omegab} \\
	& \leqslant \norml{\X}\norml{\X-\Omegab} + \norml{\X- \Omegab} \norml{\Omegab} \\
	& \leqslant \norml{\X - \Omegab} (\norml{\X} + \norml{\Omegab}) \\
	& \numleqslant{i} C \sqrt{\log(n) Z} (2 \norml {\Omegab}  + 6 \sqrt{\log(n) Z}) \\
	& \numleqslant{ii} C_1 \sqrt{\log(n) Z}\norml{\thetab} \norml{\deltab}   + C_2 \log (n) Z, 	\numberthis \label{XX-OO}
	\end{flalign*}
	where (i) follows from \Cref{spectral norm gap}, which shows that $\norml{\X-\Omegab} \leqslant 6 \sqrt{\log(n) Z}$, and hence we have $ \norml{\X} \leqslant \norml{\Omegab} +  6 \sqrt{\log(n) Z}$,
	and (ii) follows from  \Cref{eigenvalue relationship}, which implies  $ \norml{\Omegab} = \sqrt{\lambda_1 (\Omegab^T \Omegab )}\asymp \norml{\thetab} \norml{\deltab}$.
	
	Applying \Cref{Davis_Kahan_theorem} (Davis-Kahan theorem), we obtain
	\begin{flalign*}
	\norml{\hat{\V_1} -  \V_1C_V}_F &\leqslant \frac{C \norml{\X^T\X -\Omegab^T \Omegab}}{\lambda_1 (\Omegab^T \Omegab) - \lambda_2(\Omegab^T \Omegab)}     \\
	&\numleqslant{i} \frac{ C_1 \sqrt{\log(n) Z}\norml{\thetab} \norml{\deltab}   + C_2 \log (n) Z }{\lambda_1 (\Omegab^T \Omegab) - \lambda_2(\Omegab^T \Omegab)} \\
	&\numleqslant{ii} \frac{ C_1 \sqrt{\log(n) Z}\norml{\thetab} \norml{\deltab}   + C_2 \log (n) Z }{C \norml{\thetab}^2\norml{\deltab}^2} \\
	&\leqslant C_1 \frac{\sqrt{\log(n) Z}}{\norml{\thetab} \norml{\deltab}} + C_2  \left(  \frac{\sqrt{ \log(n) Z}}{\norml{\thetab} \norml{\deltab}} \right)^2  \\
	&\numleqslant{iii} C_1 \frac{\sqrt{\log(n) Z}}{\norml{\thetab} \norml{\deltab}},
	\end{flalign*}
	where (i) follows from \cref{XX-OO},  (ii)  follows from \cref{eq:5.1} and \cref{constant gap of s}, which implies that
	\begin{align} \label{prop_2_eigen_gap}
	\lambda_1 (\Omegab^T \Omegab) - \lambda_2(\Omegab^T \Omegab) = \norml{\thetab}^2\norml{\deltab}^2(\lambda_1 (\Sb^T \Sb) - \lambda_2(\Sb^T \Sb)) \geqslant C \norml{\thetab}^2\norml{\deltab}^2,
	\end{align}
	and (iii) follows from \cref{eq:4.6}, which gives $\limn \frac{\sqrt{\log(n)Z}}{\norml{\thetab} \norml{\deltab}} =  0$, and thus on the right hand side of the inequality, the first term   dominates the second term for large $n$.
	
	Similarly, we apply \Cref{Davis_Kahan_theorem} to bound the singular vectors corresponding to the $2$nd to $K$th largest singular values, and have
	\begin{flalign*}
	\norml{\hat{\V}_{2 \sim K} - \V_{2 \sim K}\mathbf{O_V}}_F &\leqslant \frac{C \norml{\X^T\X -\Omegab^T \Omegab}} {\min (\lambda_1 (\Omegab^T \Omegab) - \lambda_2(\Omegab^T \Omegab) , \lambda_K (\Omegab^T \Omegab) - \lambda_{(K+1)}(\Omegab^T \Omegab))}\\
	&\numleqslant{i} \frac{C \norml{\X^T\X -\Omegab^T \Omegab}} {\min (\lambda_1 (\Omegab^T \Omegab) - \lambda_2(\Omegab^T \Omegab) , \lambda_K (\Omegab^T \Omegab))}	 \\
	&\numleqslant{ii} \frac{ C_1 \sqrt{\log(n) Z}\norml{\thetab} \norml{\deltab}   + C_2 \log (n) Z }{C \norml{\thetab}^2\norml{\deltab}^2} \\
	&\leqslant C_1 \frac{\sqrt{\log(n) Z}}{\norml{\thetab} \norml{\deltab}} + C_2  \left(  \frac{\sqrt{ \log(n) Z}}{\norml{\thetab} \norml{\deltab}} \right)^2 \\
	&\numleqslant{iii} C \frac{\sqrt{\log(n) Z}}{\norml{\thetab} \norml{\deltab}} ,
	\end{flalign*}
	where (i) follows from \Cref{lemma5.1}, which implies  $\lambda_{(K+1)}(\Omegab^T \Omegab) = 0$, (ii)  follows from  \Cref{eigenvalue relationship}, \cref{prop_2_eigen_gap,XX-OO}, and (iii) follows from \cref{eq:4.6}, and as we argued above, the first term dominates the second term for large $n$.
	
	Following the proof procedure similar to the above arguments, we can obtain that $\norml{\hat{\U}_1 -  \U_1C_U}  \leqslant C \frac{\sqrt{\log (n) Z}}{\norml{\thetab} \norml{\deltab}}$ and $ \norml{\hat{\U}_{2 \sim K} - \U_{2 \sim K}\mathbf{O_U}}_F \leqslant C \frac{\sqrt{\log (n) Z}}{\norml{\thetab} \norml{\deltab}}$.
\end{proof}

\subsection{Proof of \Cref{k mean gap}} \label{proof_of_proposition_3}
\begin{proof}
	In the following, we deal with row vectors, and the row $\ell_2$-norm. Take two nodes $i$ and $j$ from the graph. Then, by definition, we have
	\begin{flalign*}
	\norml{\R_{\bar{i}} - \R_{\bar{j}} }^2 = \norml{(\R_{\V})_{\bar{i}} - (\R_{\V})_{\bar{j}} }^2 + \norml{(\R_{\U})_{\bar{i}} - (\R_{\U})_{\bar{j}} }^2 .
	\end{flalign*}
	Thus, to prove \Cref{k mean gap}, it is sufficient to show $\norml{(\R_{\V})_{\bar{i}} - (\R_{\V})_{\bar{j}} }^2 \geqslant 2$ and $\norml{(\R_{\U})_{\bar{i}} - (\R_{\U})_{\bar{j}} }^2 \geqslant 2$  for $c_i \neq c_j$, , and $\norml{(\R_{\V})_{\bar{i}} - (\R_{\V})_{\bar{j}} }^2 = 0$ and $\norml{(\R_{\U})_{\bar{i}} - (\R_{\U})_{\bar{j}} }^2 = 0 $ for $c_i = c_j$.
	
	We  first show that these hold for $\norml{(\R_{\V})_{\bar{i}} - (\R_{\V})_{\bar{j}} }^2$. We derive the follow equations.
	\begin{flalign*}
	\norml{(\R_{\V})_{\bar{i}} - (\R_{\V})_{\bar{j}} }^2 &\overset{\text{(i)}}{=}  \normlarge{ \frac{(\mathbf{V}_{2 \sim K} \Ob_\V)_{\bar{i}}}{ C_V \V_{1}(i)} - \frac{(\mathbf{V}_{2 \sim K} \Ob_\V)_{\bar{j}}}{ C_V \V_{1}(j)}}^2 \\
	&\overset{\text{(ii)}}{=}   \normlarge{ \frac{(\mathbf{V}_{2 \sim K})_{\bar{i}} }{  \V_{1}(i)} - \frac{(\mathbf{V}_{2 \sim K})_{\bar{j}} }{  \V_{1}(j)}}^2 \\
	&\overset{\text{(iii)}}{=}   \normlarge{ \frac{(\mathbf{V}_{2 \sim K})_{\bar{i}} }{  \V_{1}(i)} - \frac{(\mathbf{V}_{2 \sim K})_{\bar{j}}}{  \V_{1}(j)}}^2
	+ \normlarge{\frac{\V_1(i)}{\V_1(i)} - \frac{\V_1(j)}{\V_1(j)}}^2  \\
	&=  \normlarge{\frac{\V_{\bar{i}}}{\V_1(i)} - \frac{\V_{\bar{j}}}{\V_1(j)}}^2 \\
	&\numequ{iv}  \normlarge{\frac{\frac {\deltab (i)}{\norm {\deltab^{(c_i)}}} \Hb_{\bar{c_i}} }{\frac {\deltab (i)}{\norm {\deltab^{(c_i)}}} \Hb_{\bar{c_i}}(1) } - \frac{\frac {\deltab (j)}{\norm {\deltab^{(c_j)}}} \Hb_{\bar{c_i}} }{\frac {\deltab (j)}{\norm {\deltab^{(c_j)}}} \Hb_{\bar{c_j}}(1) }}^2   \\ 
	&= \normlarge{\frac{\Hb_{\bar{c_i}}}{\Hb_{\bar{c_i}}(1)} - \frac{\Hb_{\bar{c_j}}}{\Hb_{\bar{c_j}}(1)}}^2, \numberthis \label{VCI-VCJ}
	\end{flalign*}
	where (i) follows from the definition of $(R_{V})_{\bar{i}}$ ( see \cref{definition_R_V_U}),
	(ii) follows from \Cref{distance between singular vector V}, which gives $|C_V| = |C_U| = 1$, and $\Ob_\V$ and $O_U$ are orthogonal matrices,
	(iii) follows because the second term equals $0$,
	and (iv) follows from \cref{row represent U}, which shows $\V_{\bar{i}}=\frac {\deltab (i)}{\norm {\deltab^{(c_i)}}} \Hb_{\bar{c_i}}$.
	
	Thus, \cref{{VCI-VCJ}} implies that $\norml{(\R_{\V})_{\bar{i}} - (\R_{\V})_{\bar{j}} }^2=0$ if $c_i = c_j$. Otherwise, if $c_i \neq c_j$, we have
	\begin{flalign*} \label{row_different}
	\norml{(\R_{\V})_{\bar{i}} - (\R_{\V})_{\bar{j}} }^2 &\numequ{i} \normlarge{\frac{\Hb_{\bar{c_i}}}{\Hb_{\bar{c_i}}(1)} - \frac{\Hb_{\bar{c_j}}}{\Hb_{\bar{c_j}}(1)}}^2 \\
	&= \normlarge{\frac{\Hb_{\bar{c_i}}}{\Hb_{\bar{c_i}}(1)}}^2 + \normlarge{\frac{\Hb_{\bar{c_j}}}{\Hb_{\bar{c_j}}(1)}}^2 - 2 \left\langle\frac{\Hb_{\bar{c_i}}}{\Hb_{\bar{c_i}}(1)},\frac{\Hb_{\bar{c_j}}}{\Hb_{\bar{c_j}}(1)}\right\rangle  \\
	&\numequ{ii} \frac{1}{|\Hb_{\bar{c_i}}(1)|} + \frac{1}{|\Hb_{\bar{c_j}}(1)|} \\
	&\overset{\text{(iii)}}{\geqslant} 1 +1 \\
	&= 2,        \numberthis
	\end{flalign*}
	where (i) follows from \cref{VCI-VCJ},
	(ii) follows from  \cref{row represent U}, which shows that $\Hb$ is an orthogonal matrix, and thus $\norml{\Hb_{c_i}} = 1$, and the rows of $\Hb$ are also orthogonal to each other, i.e., $\langle \Hb_{\bar{i}},\Hb_{\bar{j}} \rangle = 0$,  for $i \neq j$ , (iii) follows from   \cref{constant H},  which shows $ \Hb_{\bar{c_i}}(1) \leqslant 1$.
	
	The  inequality $\norml{(\R_{\U})_{\bar{i}} - (\R_{\U})_{\bar{j}} }^2 \geqslant 2$ for $c_i \neq c_j $, and otherwise equals $0$ can be shown in similar way, which completes the proof of \Cref{k mean gap}.
\end{proof}

\subsection{Proof of \Cref{R gap}} \label{proof_of_proposition_4}

To prove \Cref{R gap}, we first establish \Cref{ill behaviour} to bound the  number of  ill-behavior nodes and a technical inequality in  \Cref{SCORE_inequality}.

First, for a constant $ 0 < C < 1 $, we define
\begin{flalign*}
\hat{S}_V  \equiv \left(
1 \leqslant i \leqslant n;\  \abs { \frac{\hat{\V}_1(i)}{C_V \V_1(i)} - 1} \leqslant C\right)  \text{ and }
\hat{S}_U  \equiv \left(
1 \leqslant i \leqslant n;\  \abs { \frac{\hat{\U}_1(i)}{C_U \U_1(i)} - 1} \leqslant C \right). \numberthis \label{ill-node}
\end{flalign*}

Then, we bound the number of nodes that outside $\hat{S}_V$ and $ \hat{S}_U $ in \Cref{ill behaviour}.
\begin{Lemma}                  \label{ill behaviour}
	For nodes in $ \hat{S}_V$ or $ \hat{S}_U$, the following equations hold
	\begin{flalign}
	\abs{\hat{\V}_1(i)} \asymp \abs{C_V \V_1(i)} \asymp  {\frac {\deltab (i)}{\norm {\deltab}}} \quad \text{for} \quad i \in \hat{S}_V ,\\
	\abs{\hat{\U}_1(i)} \asymp \abs{C_U \U_1(i)} \asymp  {\frac {\thetab (i)}{\norm {\thetab}}} \quad \text{for} \quad  i \in \hat{S}_U.
	\end{flalign}
	Furthermore, with probability at least $1 - O(n^{-4})$, the cardinality of $\mathcal{V}\backslash \hat{S}_V$ and $\mathcal{V}\backslash \hat{S}_U$ satisfy
	\begin{flalign}
	\abs{\mathcal{V} \backslash \hat{S}_V} \leqslant \frac{C \log (n) Z}{\norml{\thetab}^2 \delta_{\min}^2}  \quad   \text{ and } \quad
	\abs{\mathcal{V} \backslash \hat{S}_U} \leqslant \frac{C \log (n) Z}{\norml{\deltab}^2 \theta_{\min}^2}.
	\end{flalign}
\end{Lemma}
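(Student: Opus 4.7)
My plan is to prove the two parts of the lemma separately: first the asymptotic equivalences for nodes inside $\hat{S}_V$ and $\hat{S}_U$, then the cardinality bound on the complements via a Markov-type counting argument applied to the Frobenius bound of \Cref{distance between singular vector V}.

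For the first part, fix $i \in \hat{S}_V$. The defining inequality $|\hat{\V}_1(i)/(C_V \V_1(i)) - 1| \leqslant C < 1$ rearranges to $(1-C)|C_V\V_1(i)| \leqslant |\hat{\V}_1(i)| \leqslant (1+C)|C_V\V_1(i)|$, which immediately gives $|\hat{\V}_1(i)| \asymp |C_V \V_1(i)|$. To relate this further to $\deltab(i)/\|\deltab\|$, I would use the row formula from \Cref{lemma5.1}, namely $\V_{\bar{i}} = \frac{\deltab(i)}{\|\deltab^{(c_i)}\|}\Hb_{\bar{c_i}}$, which gives $\V_1(i) = \frac{\deltab(i)}{\|\deltab^{(c_i)}\|}\Hb_{\bar{c_i}}(1)$. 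Combining $\Hb_{\bar{c_i}}(1) \asymp 1$ from \cref{constant H}, $\|\deltab^{(c_i)}\| \asymp \|\deltab\|$ from \cref{eq:4.4}, and $|C_V| = 1$ from \Cref{distance between singular vector V} yields $|C_V\V_1(i)| \asymp \deltab(i)/\|\deltab\|$. The argument for $\hat{S}_U$ is identical in structure.

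For the cardinality bound, the key observation is that $i \notin \hat{S}_V$ means $|\hat{\V}_1(i) - C_V\V_1(i)| > C |C_V\V_1(i)|$. Combining the lower bound $|\V_1(i)| \gtrsim \deltab(i)/\|\deltab\| \geqslant \deltab_{\min}/\|\deltab\|$ just derived, every $i \notin \hat{S}_V$ contributes at least $C^2 \deltab_{\min}^2/\|\deltab\|^2$ to the squared Frobenius norm $\|\hat{\V}_1 - \V_1 C_V\|_F^2 = \sum_{i=1}^n (\hat{\V}_1(i) - C_V\V_1(i))^2$. Therefore
\begin{flalign*}
|\mathcal{V}\setminus \hat{S}_V| \cdot \frac{C^2 \deltab_{\min}^2}{\|\deltab\|^2} \leqslant \|\hat{\V}_1 - \V_1 C_V\|_F^2.
\end{flalign*}
Squaring the bound from \Cref{distance between singular vector V} gives $\|\hat{\V}_1 - \V_1 C_V\|_F^2 \leqslant C' \log(n) Z/(\|\thetab\|^2\|\deltab\|^2)$ with probability at least $1 - O(n^{-4})$. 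Dividing through cancels the factor $\|\deltab\|^2$ and produces the claimed bound $|\mathcal{V}\setminus \hat{S}_V| \leqslant C\log(n)Z/(\|\thetab\|^2 \deltab_{\min}^2)$. The analogous bound on $|\mathcal{V}\setminus \hat{S}_U|$ follows by swapping the roles of left/right singular vectors and using the corresponding statement of \Cref{distance between singular vector V}.

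The main subtlety I anticipate is ensuring the lower bound $|V_1(i)| \gtrsim \deltab_{\min}/\|\deltab\|$ is uniform across all $i$, rather than only for $i$ already known to lie in $\hat{S}_V$; this requires invoking \cref{constant H} (which guarantees $\Hb_1(c_i)$ is bounded below by a positive constant for every community index) together with \cref{eq:4.4} (which equates $\|\deltab^{(c_i)}\|$ with $\|\deltab\|$ up to constants) \emph{before} conditioning on $i \in \hat{S}_V$. Everything else is routine: the counting argument is essentially Markov's inequality applied entry-wise to the Frobenius bound, and the high-probability guarantee is inherited directly from \Cref{distance between singular vector V}.
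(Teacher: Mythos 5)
Your proposal is correct and follows essentially the same route as the paper's proof: both establish $|C_V\V_1(i)|\asymp \deltab(i)/\norml{\deltab}$ uniformly over all $i$ via the row formula of \Cref{lemma5.1}, \cref{constant H}, and \cref{eq:4.4}, and both bound $|\mathcal{V}\backslash\hat{S}_V|$ by the same Markov-type counting argument against the Frobenius bound of \Cref{distance between singular vector V} (the paper sums $(\hat{\V}_1(i)/(C_V\V_1(i))-1)^2$ while you sum $(\hat{\V}_1(i)-C_V\V_1(i))^2$, which is the identical estimate written in a slightly different order). The subtlety you flag about establishing the uniform lower bound before conditioning on $i\in\hat{S}_V$ is exactly how the paper arranges its proof.
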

\begin{proof}
	The proof can be found in \Cref{proof_ill behaviour}.
\end{proof}
Then, we provide a technical inequality in \Cref{SCORE_inequality}.
\begin{Lemma}  \label{SCORE_inequality}
	For $\vb ,\ub \in \mathbf{R}^n$, $a, b \in \mathbf{R}, a >0, b>0$, the following inequality holds,
	\begin{flalign*}
	\left\Vert \frac{\vb}{a} - \frac{\ub}{b}\right\Vert^2 \leqslant 2  \left(\frac{1}{a^2} \norml{\vb - \ub}^2 + \frac{(b - a)^2}{(ab)^2}\norml{\ub}^2 \right).
	\end{flalign*}
\end{Lemma}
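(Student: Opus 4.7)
The plan is to decompose the difference $\frac{\vb}{a} - \frac{\ub}{b}$ in a way that isolates the two sources of error, namely the numerator discrepancy $\vb - \ub$ and the denominator discrepancy $b - a$. The natural decomposition is
\[
\frac{\vb}{a} - \frac{\ub}{b} \;=\; \frac{\vb - \ub}{a} \;+\; \Bigl(\frac{1}{a} - \frac{1}{b}\Bigr)\ub \;=\; \frac{\vb - \ub}{a} \;+\; \frac{b - a}{ab}\,\ub.
\]
This simply adds and subtracts $\ub/a$ inside the expression.

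Next, I would apply the elementary inequality $\norml{\x + \y}^2 \leqslant 2\norml{\x}^2 + 2\norml{\y}^2$, which follows from $2\langle \x,\y\rangle \leqslant \norml{\x}^2 + \norml{\y}^2$ (equivalently, the parallelogram-type bound $(s+t)^2 \leqslant 2(s^2 + t^2)$ applied to norms). Setting $\x = (\vb - \ub)/a$ and $\y = (b-a)\ub/(ab)$, this yields
\[
\left\Vert \frac{\vb}{a} - \frac{\ub}{b}\right\Vert^2 \leqslant 2\left\Vert \frac{\vb - \ub}{a}\right\Vert^2 + 2\left\Vert \frac{b-a}{ab}\,\ub\right\Vert^2 = 2\left(\frac{1}{a^2}\norml{\vb - \ub}^2 + \frac{(b-a)^2}{(ab)^2}\norml{\ub}^2\right),
\]
where I used the fact that $a, b > 0$ so that $a^2$ and $(ab)^2$ can be pulled out of the norms as scalars.

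Since both steps are one-line algebraic manipulations, I do not expect any real obstacle here; the main thing is to pick the correct splitting point (adding and subtracting $\ub/a$ rather than, say, $\vb/b$), since the claimed bound has $1/a^2$ in front of $\norml{\vb - \ub}^2$, which forces the splitting to go through $\ub/a$. Note also that the inequality is symmetric in a certain sense—one could alternatively split through $\vb/b$ to get a bound with $1/b^2$ in front of $\norml{\vb - \ub}^2$—so the choice of splitting is dictated by how the lemma will be used later (presumably with $a = C_V \V_1(i)$ and $b = \hat{\V}_1(i)$ in the proof of Proposition \ref{R gap}).
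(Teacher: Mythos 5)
Your proposal is correct and follows essentially the same route as the paper: the paper writes the difference over the common denominator $ab$, splits $b\vb - a\ub = b(\vb-\ub) + (b-a)\ub$ (which is exactly your add-and-subtract of $\ub/a$), and applies $\norml{\x+\y}^2 \leqslant 2(\norml{\x}^2 + \norml{\y}^2)$. Your closing remark about the asymmetry of the splitting being dictated by the intended application is also accurate.
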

\begin{proof}
	The proof can be found in \Cref{proof_SCORE_inequality}.
\end{proof}
Now we are ready to proof the proposition.
\begin{proof}[Proof of \Cref{R gap}]
	Note that
	\begin{flalign*}
	\norml{\R^* - \R}_F^2 = \norml{\R_{\hat{\V}}^* - \R_{\V}}_F^2 + \norml{\R_{\hat{\U}}^* - \R_{\U}}_F^2.
	\end{flalign*}
	It is sufficient to prove $\norml{\R_{\hat{\V}}^* - \R_{\V}}_F^2\leqslant    C\frac{ T_n^2 \log (n) Z}{\deltab_{min}^2 \norml{\thetab}^2 } $ and  $\norml{R_{\hat{U}}^* - R_{U}}_F^2\leqslant    C\frac{ T_n^2 \log (n) Z}{\thetab_{min}^2 \norml{\deltab}^2 }$, and then combining these two inequalities, we establish the proposition.  We first prove  $\norml{\R_{\hat{\V}}^* - \R_{\V}}_F^2\leqslant    C\frac{ T_n^2 \log (n) Z}{\deltab_{min}^2 \norml{\thetab}^2 } $,  and the   latter one can be shown similarly.
	
	First we show  $\normsquare{(\mathbf{V}_{2 \sim K}\Ob_\V)_{\bar{i}}}  \leqslant C \frac{\deltab^2 (i) }{\norm {\deltab}^2}$.   Note that
	\begin{flalign*}     \label{bound_for_V2KOV}
	\norml{(\mathbf{V}_{2 \sim K}\Ob_\V)_{\bar{i}}}^2 &= \norml{(\mathbf{V}_{2 \sim K})_{\bar{i}}\Ob_\V}^2 = \norml{(\mathbf{V}_{2 \sim K})_{\bar{i}}}^2  \leqslant \norml{\V_{\bar{i}}}^2  \numleqslant{i}  C\frac {\deltab^2  (i)}{\norm {\deltab}^2},  \numberthis
	\end{flalign*}
	where (i) follows from \cref{bound_VU}.
	
	Next we prove $\normsquare{(\R_\V)_{\bar{i}} } \leqslant C$. By definition of $\R_\V$, we have
	\begin{flalign*}
	\normsquare{(\R_\V)_{\bar{i}} }= &\left\Vert{\frac{(\mathbf{V}_{2 \sim K}\Ob_\V)_{\bar{i}}}{C_V \V_1(i)}}\right\Vert^2  \numleqslant{i}  \frac{C\frac {\deltab (i)^2}{\norm {\deltab}^2}}{|C_V\V_1(i)|^2} \\
	&\numleqslant{ii} \frac{C\frac {\deltab (i)^2}{\norm {\deltab}^2}}{\frac {\deltab (i)^2}{\norm {\deltab^{(c_i)}}^2}|H_1(c_i)|^2}  \numleqslant{iii} C , \numberthis    \label{R_V_constant}
	\end{flalign*}
	where (i)  follows from \cref{bound_for_V2KOV},
	(ii) follows from \cref{row represent U} which implies $V_{1}(i)=\frac {\deltab (i)}{\norm {\deltab^{(c_i)}}} \Hb_{1}(c_i)$, and (iii) follows from \cref{eq:4.4} and \Cref{constant lemma}, which implies $\Hb_1(c_i) \geqslant C > 0$.
	
	In order to prove $\norml{\R_{\hat{\V}}  - \R_{\V}}_F^2\leqslant    C\frac{ T_n^2 \log (n) Z}{\deltab_{min}^2 \norml{\thetab}^2 }$, we divide the sum into following two parts:
	\begin{flalign*}
	\norml{\R_{\hat{\V}}  - \R_{\V}}_F^2 = \sum_{i \in (\mathcal{V} \backslash \hat{S}_V)} \norml{(\R_{\hat{\V}} )_{\bar{i}} - (\R_{\V})_{\bar{i}}}^2
	+\sum_{i \in \hat{S}_V} \norml{(\R_{\hat{\V}} )_{\bar{i}} - (\R_{\V})_{\bar{i}}}^2 . \numberthis \label{new_edit_0}
	\end{flalign*}
	For the first term, we have
	\begin{flalign*} \label{sum_ill}
	\sum_{i \in (\mathcal{V} \backslash \hat{S}_V)} \norml{(\R_{\hat{\V}} )_{\bar{i}} - (\R_{\V})_{\bar{i}}}^2    &\leqslant  C \sum_{i \in (\mathcal{V} \backslash \hat{S}_V)} (\norml{(\R_{\hat{\V}} )_{\bar{i}} }^2 + \norml{(\R_{\V})_{\bar{i}}}^2)\\
	&\numleqslant{i} C \sum_{i \in (\mathcal{V} \backslash \hat{S}_V)} ( KT_n^2 + C) \\
	&\numleqslant{ii} C |\mathcal{V} \backslash \hat{S}_V| T_n^2 \\
	&\numleqslant{iii}  \frac{C T_n^2 \log (n) Z}{\norml{\thetab}^2 \delta_{\min}^2},  \numberthis
	\end{flalign*}
	where (i) follows from  \cref{ratio_matrix,R_V_constant}, (ii) follows from the fact that $T_n$  scales with $n$, and thus $T_n$ dominates $C$ for sufficient large $n$, and (iii) follows from \Cref{ill behaviour}.
	
	For the second term in \cref{new_edit_0}, we have
	\begin{flalign*} \label{sum_well}
	\sum_{i \in \hat{S}_V} &\normlarge{(\R_{\hat{\V}} )_{\bar{i}} - (\R_{\V})_{\bar{i}}}^2 \\
	& \overset{(\text{i})}{\leqslant}  C \sum_{i \in  \hat{S}_V} \normlarge{\frac{(\hat{\V}_{2 \sim K})_{\bar{i}}}{\hat{\V}_1(i)} - \frac{(\mathbf{V}_{2 \sim K}\Ob_\V)_{\bar{i}}}{C_V\V_1(i)}}^2\\
	&\overset{(\text{ii})}{\leqslant}  C\sum_{i \in \hat{S}_V} \left(\frac{1}{(\hat{\V}_1(i))^2}\norml{(\hat{\V}_{2 \sim K})_{\bar{i}} - (\mathbf{V}_{2 \sim K}\Ob_\V)_{\bar{i}}}^2 + \frac{(C_V\V_1(i) -\hat{\V}_1(i) )^2}{(\hat{\V}_1(i) C_V\V_1(i))^2}\norml{(\mathbf{V}_{2 \sim K}\Ob_\V)_{\bar{i}}}^2\right)\\
	&\overset{(\text{iii})}{\leqslant}  C\sum_{i \in \hat{S}_V} \left( \frac{\norml{\deltab}^2}{\deltab{(i)}^2}\norml{(\hat{\V}_{2 \sim K})_{\bar{i}} - (\mathbf{V}_{2 \sim K}\Ob_\V)_{\bar{i}}}^2 + \frac{\norml{\deltab}^2}{\deltab{(i)}^2}(C_V\V_1(i) -\hat{\V}_1(i) )^2 \right) \\
	&\leqslant  C\frac{\norml{\deltab}^2}{\delta_{\min}^2} \sum_{i \in \hat{S}_V} \left( \norml{(\hat{\V}_{2 \sim K})_{\bar{i}} - (\mathbf{V}_{2 \sim K}\Ob_\V)_{\bar{i}}}^2 +  \sum_{i \in \hat{S}_V}(C_V\V_1(i) -\hat{\V}_1(i) )^2 \right)\\
	&\leqslant  C\frac{\norml{\deltab}^2}{\delta_{\min}^2} \left( \norml{\hat{\V}_{2 \sim K} -\mathbf{V}_{2 \sim K}\Ob_\V }_F^2 + \norml{\hat{\V}_{1} -\V_{1}C_V }^2 \right)\\
	&\overset{(\text{iv})}{\leqslant} \frac{C\log (n) Z}{\norml{\thetab}^2 \delta_{\min}^2},  \numberthis
	\end{flalign*}
	where (i) follows from the fact that $|(\R_{\V})_{\bar{i}}| \leqslant C$ (see \cref{R_V_constant}), and $T_n$ scales with $n$, which implies $T_n \geqslant C \geqslant |(\R_{\V})_{\bar{i}}|$ for $n$ large enough. Thus, although \cref{ratio_matrix} shows that $(\R_{\hat{\V}} )_{\bar{i}}$ is truncated by $T_n$, we still have $\norml{(\R_{\hat{\V}} )_{\bar{i}} - (\R_{\V})_{\bar{i}}}^2 \leqslant \normlarge{\frac{(\hat{\V}_{2 \sim K})_{\bar{i}}}{\hat{\V}_1(i)} - \frac{(\mathbf{V}_{2 \sim K}\Ob_\V)_{\bar{i}}}{C_V\V_1(i)}}^2$ for large $n$,  (ii) follows from \Cref{SCORE_inequality},  (iii) follows from \Cref{ill behaviour}  and \cref{bound_for_V2KOV}, and (iv) follows from \Cref{distance between singular vector V}.

	Combining \cref{sum_ill,sum_well}, we obtain $\norml{\R_{\hat{\V}}  - \R_{\V}}_F^2\leqslant    C\frac{ T_n^2 \log (n) Z}{\deltab_{min}^2 \norml{\thetab}^2 }$.  
\end{proof}

\subsection{Proof of \Cref{M* R gap}}\label{proof_proposition5}
\label{prove_of_lemma_3_1}
\begin{proof}
	Recall that $\M^*$ is defined as
	\begin{flalign*}
	\M^*= \argmin \limits_{\M \in \boldsymbol{M}_{n,2K-2,K}} \norm{\M- \hat \R}_F^2,
	\end{flalign*}
	where $\boldsymbol{M}_{n,2K-2,K} $ denotes the set of $n\times{(2K-2)}$ matrices with only $K$ different rows. Note that $R$ is also in  $ \boldsymbol{M}_{n,2K-2,K}$. Thus,
	\begin{align} \label{inner_use}
	\norml{\M^* - \hat{\R}} \leqslant \norml{\R - \hat{\R}}.
	\end{align}
	Then, we obtain
	\begin{flalign*}
	\norml{\M^{*} -\ R}_F^2 &\leqslant \norml{\M^* - \hat{\R} + \hat{\R} - \R}_F^2 \\
	&\leqslant C\norml{\M^* - \hat{\R}}_F^2 + C\norml{\hat{\R} - \R}_F^2\\
	&\numleqslant{i} C\norml{\R - \hat{\R}}_F^2 + C\norml{\hat{\R} - \R}_F^2\\
	&\leqslant C\norml{\R - \hat{\R}}_F^2 \\
	&\numleqslant{ii} CT_n^2\log(n)err_n,
	\end{flalign*}
	where (i) follows from \cref{inner_use},  and (ii) follows from \Cref{R gap}.
	
\end{proof}

%
%
\subsection{Proof of \Cref{mis clustering nodes}}\label{prove_of_theorem_1}
\begin{proof}
	First, if nodes $i$, $j$ in set $\mathcal{W}$  are in  different communities, then
	\begin{flalign*}
	\norml{\M^*_{\bar{i}} - \M_{\bar{j}}^*} &= \norml{\M_{\bar{i}}^* - \Rb_{\bar{i}} +  \Rb_{\bar{i}} - \Rb_{\bar{j}} + \Rb_{\bar{j}} -\M_{\bar{j}}^*} \\
	&\geqslant \norml{\Rb_{\bar{i}} - \Rb_{\bar{j}}} - \norml{\M_{\bar{i}}^* - \Rb_{\bar{i}} + \Rb_{\bar{j}} -\M_{\bar{j}}^*} \\
	&\geqslant  \norml{\Rb_{\bar{i}} - \Rb_{\bar{j}}} - \norml{\M_{\bar{i}}^* - \Rb_{\bar{i}}} - \norml{ \M_{\bar{j}}^* - \Rb_{\bar{j}} } .
	\end{flalign*}
	By \Cref{k mean gap}, i.e.,  $\norml{\Rb_{\bar{i}} - \Rb_{\bar{j}} } \geqslant 2$, and the definition of set $\mathcal{W}$ in \Cref{mis clustering nodes}.   We obtain that for  $i,j \in \mathcal{W}$,
	\begin{flalign*}
	\norml{\M_{\bar{i}}^* - \M_{\bar{j}}^*} &\geqslant (2 - 1) = 1.
	\end{flalign*}
	Thus, if nodes $i$, $j$ are in  different communities, then their corresponding rows in $\M^*$ are sufficiently different. By the assumption $|\mathcal{V} \backslash \mathcal{W}| < \min{\{n_1, n_2 \cdots n_K \}}$,  $\mathcal{W}$ contains at least one node in each community. Combining there two facts and  the  definition that $\M^*$ has only $K$ different rows,  we conclude that the corresponding rows in $\M^*$ of nodes in the same community are same. In conclusion, if two nodes in $\mathcal{W}$ are in the same community, then their corresponding rows in $\M^*$ are the same.  Otherwise, their  corresponding rows in $\M^*$ are sufficiently different. Thus, nodes in $\mathcal{W}$ are correctly clustered. Then, the definition of $\mathcal{W}$ and \Cref{M* R gap}  directly imply
	\begin{equation}
	|\mathcal{V} \backslash \mathcal{W}| \leqslant CT_n^2\log(n) err_n.   \label{final_answer}
	\end{equation}
	
	
	
\end{proof}

\subsection{Proof of Lemmas for D-SCORE} \label{proof_of_lemma_of_DSCORE}
\subsubsection{Proof of \Cref{eigenvalue relationship}} \label{proof_eigenvalue relationship}
\begin{proof}
	Following \cref{eq:5.1}, we obtain that $\sigma_i(\Omegab) =  \norml{\thetab} \norml{\deltab} \sigma_i(\Sb)$, for $1 \leqslant i \leqslant K$, it is sufficient to show $0 < C_1 \leqslant \sigma_{K}(\Sb) \leqslant  \sigma_1 (\Sb) \leqslant C_2$.
	
	Recall $\Sb=\Psib_{\thetab}\B\Psib_{\deltab}^T$, where $\Psib_{\thetab}$ and $\Psib_{\deltab}$ are diagonal matrices, and $\norml{\Psib_{\thetab}}$ and  thus $\norml{\Psib_{\thetab}}_{\min}$  correspond to the largest and smallest absolute value of the diagonal entries of $\Psib_{\thetab}$, respectively. Following from \cref{eq:4.4}, we have
	\begin{flalign*}
	\norml{\Psib_{\thetab}} &= \max_{i} \Psib_{\thetab}(i,i) = \max_{i} \frac{\norml{\thetab^{(i)}}}{\norml{\thetab}} \leqslant C, \\
	\norml{\Psib_{\thetab}}_{\min} &= \min_{i} \Psib_{\thetab}(i,i) = \min_{i} \frac{\norml{\thetab^{(i)}}}{\norml{\thetab}} \geqslant C.
	\end{flalign*}
	Therefore,  there exist two constants $C_m > 0$ and $C_M > 0$ such that 
	\begin{flalign*}
	C_m \leqslant \norml{\Psib_{\thetab}}_{\min} \leqslant	\norml{\Psib_{\thetab}} \leqslant C_M.  \numberthis \label{eig_constant_heterogeneous_spectral_norm}
	\end{flalign*} 
	It can be similarly shown that
	\begin{flalign*}
	C_m \leqslant \norml{\Psib_{\deltab}}_{\min} \leqslant	\norml{\Psib_{\deltab}} \leqslant C_M .
	\end{flalign*}
	
	By the definition of $\Sb \equiv \Psib_{\thetab}\B\Psib_{\deltab}^T$, we have  
	\begin{flalign*}
	\sigma_{1}(\Sb) = \norml{\Sb} &= \norml{\Psib_{\thetab}\B\Psib_{\deltab}^T}  \leqslant \norml{\Psib_{\thetab}} \norml{\B} \norml{\Psib_{\deltab}^T}  \numleqslant{i} C,  \numberthis \label{eig_constant_big}
	\end{flalign*}
	where (i) follows from   \cref{eig_constant_heterogeneous_spectral_norm} and because $\B$ is a constant matrix, i.e., $\B$ does change with $n$,  so that there exists a constant $C$, such that $\norml{\B} \leqslant C$. On the other hand,
	\begin{flalign*}
	\sigma_{K}(\Sb) = \norml{\Sb}_{\min} &= \norml{\Psib_{\thetab}\B\Psib_{\deltab}^T}_{\min}  \geqslant \norml{\Psib_{\thetab}}_{\min} \norml{\B}_{\min} \norml{\Psib_{\deltab}^T}_{\min}  \numgeq{i} C, \numberthis \label{eig_constant_small}
	\end{flalign*}
	where (i) follows from  \cref{eig_constant_heterogeneous_spectral_norm} and the inequality $\norml{\A\B}_{\min} \geqslant \norml{\A}_{\min} \norml{\B}_{\min}$. Also, since  $\B$ is a constant matrix which does change with $n$, and  $B$ is non-singular (see \cref{eq:4.2}),  there exists a constant $C$, such that $\norml{\B}_{\min} \geqslant C >0$.
	
	Combining \cref{eig_constant_big,eig_constant_small}, we obtain $\sigma_i(\Sb) \asymp C$. Then, by \cref{eq:5.1}, we have $\sigma_i^2({\Omegab})  = \sigma_i^2(\Sb) \norml{\thetab}^2 \norml{\deltab}^2$. Hence, for $ 1 \leqslant i \leqslant K$
	\begin{flalign*}
	\lambda_i (\Omegab^T \Omegab )  = \sigma_i^2({\Omegab}) = \sigma_i^2(\Sb) \norml{\thetab}^2 \norml{\deltab}^2  \asymp \norml{\thetab}^2 \norml{\deltab}^2  .
	\end{flalign*}
\end{proof}

\subsubsection{Proof of \Cref{spectral norm gap}}\label{proof_spectral norm gap}
\begin{proof}
	Define $\e_i$ as an $n \times 1$\ vector, where $\e_i(i) = 1$ and $0$ elsewhere. Thus, we can write $\W$ as $\W = \sum_{i,j = 1}^{n} \W(i,j)\e_i\e_j^T$. By the definition that $\W \equiv \A-\Omegab = \A - E  [\A] $, the entry $\W(i,j)$ is an independent centered Bernoulli random  variable. Thus  $\W(i,j)\e_i\e_j^T$ is an independent centered Bernoulli random matrix with the dimension $n \times n$. In fact,  $\W(i,j)\e_i\e_j^T$  is a  matrix with only one nonzero entry $\W(i,j) = \A(i,j) - \thetab(i)\B(c_i,c_j)\deltab(j)$ at the location $(i,j)$.
	
	In order to apply matrix Bernstein inequality, we need to bound the spectral norm  of each summation matrix, and the variance of the entire summation. By the definition of the matrix spectral norm, for $1 \leqslant i,j \leqslant n $, we have
	\begin{flalign*}
	\quad \norml{\W(i,j)\e_i\e_j^T} &=\abs{ \W(i,j)} \norml{\e_i\e_j^T}  = \abs{ \W(i,j)} \sqrt{   \norml{\e_i\e_j^T (\e_i\e_j^T)^T }}  \\
	&=  \abs{ \W(i,j)} \sqrt{\norml{\e_i\e_i^T}} \numequ{i} \abs{ \W(i,j)}    \\
	&\numequ{ii} \abs{\A(i,j)-\Omegab(i,j)} \\
	& \numleqslant{iii} \max \left( \ \abs{0 - \thetab(i) \B(c_i,c_j) \deltab(j)},\abs{1-\thetab(i) \B(c_i,c_j) \deltab(j)} \right)\\
	& \numleqslant{iv} 1,  \numberthis \label{5.21}
	\end{flalign*}
	where (i) follows because $\e_i\e_i^T$ is a diagonal matrix with only one non-zero entry $1$ at location $(i,i)$, thus $\norml{\e_i\e_i^T}=1$, (ii) follows because that $\W = \A -\Omegab$,
	(iii) follows because $\A(i,j)$ is a Bernoulli random variable that it takes the values  $0$ or $1$, and (iv) follows because  $0 < \Omegab(i,j) = \thetab(i) \B(c_i,c_j) \deltab(j) \leqslant 1$.
	
	Next we consider the variance of the random matrix  $V(\W) \equiv \max{ \left( \norml{E(\W\W^T)},\norml{E(\W^T\W)}  \right)}$. We first bound $\norml{E(\W\W^T)}$, and then bound $\norml{E(\W^T\W)} $. Note that
	\begin{flalign*}
	E(\W\W^T) & = E[(\sum_{i,j = 1}^{n} \W(i,j) \e_i\e_j^T) (\sum_{k,l = 1}^{n}  \W(k,l) \e_k\e_l^T)^T] \\
	& = E[\sum_{i,j,k,l =1}^{n}  \W(i,j)\W(k,l)\e_i\e_j^T\e_l\e_k^T] \\
	& \numequ{i} \sum_{i,j,k=1}^{n} E[ \W(i,j)\W(k,j)\e_i\e_k^T]\\
	& \numequ{ii} \sum_{i,j=1}^{n} E[\W^2(i,j)]\e_i\e_i^T,   \numberthis \label{eq:5.22}
	\end{flalign*}
	where (i) follows from the fact that $\e_j^T\e_l = 1$ if $j = l$ and $0$ otherwise,
	and (ii) follows from the fact that  if $i \neq k$, $\W(i,j)$ and $\W(k,j)$ are independent random Bernoulli random variables with the expected value $0$, i.e., $E[\W(i,j)\W(k,j)\e_i\e_k^T)]= E[\W(i,j)]E[\W(k,j)]\e_i\e_k^T=0\times0=0 $. Thus, we only need to consider the case with $i = k$. Observing that  $\Omegab(i,j)=E[\X(i,j)]$ and let $Var(\X(i,j))$ denote the variance of Bernoulli random variable $\X(i,j)$. Then, we obtain
	\begin{flalign*}
	E[\W^2(i,j)] &= E[(\X(i,j)- \Omegab(i,j))^2]\\
	&= E[(\X(i,j)- E[\X(i,j)])^2]	 \\
	&= \text{Var} (\X(i,j))       \\
	&= \thetab(i) \B(c_i,c_j) \deltab(j)  [1 - \thetab(i) \B(c_i,c_j) \deltab(j) ]     \\
	&\leqslant \thetab(i) \B(c_i,c_j) \deltab(j)  \\
	&\leqslant \thetab(i) \deltab(j). \numberthis \label{eq:5.23}
	\end{flalign*}
	By \cref{eq:5.22}, we have
	\begin{flalign*}
	\norml{E[\W\W^T]} &= \normlarge{ \sum_{i,j=1}^{n}  E[\W^2(i,j)]\e_i\e_i^T} \numequ{i} \max_{1 \leqslant i \leqslant n } \abs{\sum_{j=1}^{n}  E[\W^2(i,j)]}    \\
	&\numleqslant{ii} \max_{1 \leqslant i \leqslant n }    \sum_{j=1}^{n} |\thetab(i)\deltab(j)|   \leqslant  \max_{1 \leqslant i \leqslant n } \thetab(i) \norml{\deltab}_1 \\
	&\leqslant   \thetab_{\max}  \norml{\deltab}_1,       \numberthis
	\end{flalign*}
	where (i) follows because $E[\W\W^T] = \sum_{i=1}^{m} \left(\sum_{j=1}^{n} E[\W^2(i,j)]\right) \e_i\e_i^T$ is a diagonal matrix (the spectral norm  of a diagonal matrix is  the maximum absolute value of its diagonal entries), and (ii) follows from \cref{eq:5.23}. 	Following the similar proof procedure, we obtain $ \norml{E[\W^T\W]} \leqslant \deltab_{\max} \norml{\thetab}_1$. Thus, we have
	\begin{flalign*}
	V(\W) &= \max \left( \norml{E[\W\W^T]},\norml{E[\W^T\W]} \right) \\
	&\leqslant \max \left(\thetab_{\max}  \norml{\deltab}_1 ,\deltab_{\max} \norml{\thetab}_1\right)	\\
	&\leqslant \Z.     \numberthis     \label{variance}
	\end{flalign*}
	
	Note that $Z \equiv \Z$, by \cref{variance}, we have $V(\W) \leqslant Z$. Since \cref{5.21} implies that $\norml{\W(i,j)\e_i\e_j^T} $ is  bounded by $1$, and these are also independent centered random matrices,   we apply the asymmetric version of the matrix version of Bernstein inequality (Theorem 1.6.2 in \cite{Tropp2015})  with $t= 6 \sqrt{ \log(n)Z}$ and   $V(\W) \leqslant Z$, and obtain
	\begin{flalign*}
	P(\norml{\W} \geqslant t) &\leqslant 2n \exp (\frac{-\frac{t^2}{2}}{V(\W)+\frac{t}{3}})  \\
	&\leqslant 2n \exp \left(\frac{-18 \log (n) Z}{Z + 2\sqrt{ \log(n)Z}} \right) \\
	&\leqslant 2n \exp \left(\frac{-18 \log (n) } {1 +  2 \sqrt{\frac{ \log(n)}{Z}}  }\right)  \\
	&\leqslant \frac{1}{n^4},
	\end{flalign*}
	where the last inequality follows from \cref{eq:4.7}, which implies that $\sqrt{\frac{\log (n)}{Z}} \leqslant 1$ for sufficiently large $n$.
\end{proof}

\subsubsection{Proof of \Cref{constant lemma}} \label{proof_constant lemma}
\begin{proof}
	We first introduce the following useful lemma,
	\begin{Lemma}[\cite{A.Horn1985}, Theorem 8.4.4] \label{Horn_theorem}
		For every $K \times K$ irreducible, nonnegative, and positive semidefinite  matrix $\mathbf{M}$, let $\V_1$ denote  the eigenvector corresponding to the largest eigenvalue. Then, the following facts hold:
		\begin{itemize}
			\item[(i)] $\V_1$ can be a positive vector.
			\item[(ii)] The largest eigenvalue is an algebraically simple eigenvalue.
		\end{itemize} 
	\end{Lemma}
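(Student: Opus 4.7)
The plan is to derive this lemma as a direct consequence of the classical Perron--Frobenius theorem together with the positive semidefinite hypothesis. First, I would invoke Perron--Frobenius for irreducible nonnegative matrices: for such an $\mathbf{M}$, the spectral radius $\rho(\mathbf{M})$ is itself an eigenvalue, it is algebraically (and geometrically) simple, and there exists a strictly positive eigenvector $\V_1$ satisfying $\mathbf{M}\V_1 = \rho(\mathbf{M})\V_1$, with $\V_1(i) > 0$ for all $i$. Since the lemma is cited verbatim from Horn--Johnson (Theorem 8.4.4), I would treat Perron--Frobenius as a black box, but if a self-contained argument were desired I would sketch the standard steps: (a) irreducibility is equivalent to $(I+\mathbf{M})^{K-1}$ having strictly positive entries, which promotes any nonnegative eigenvector to a strictly positive one; (b) construct $\V_1$ via the Collatz--Wielandt variational characterization $\rho(\mathbf{M}) = \max_{x \geq 0,\ x \ne 0} \min_{i:\ x_i > 0} (\mathbf{M}x)_i / x_i$; and (c) prove simplicity by showing that any second linearly independent eigenvector for $\rho(\mathbf{M})$ would, after taking real parts and adding a suitable multiple of $\V_1$, produce a nonnegative eigenvector with a zero coordinate, contradicting the positivity forced by irreducibility.

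Second, I would use the positive semidefinite hypothesis to bridge the gap between ``largest eigenvalue in modulus'' (which is what Perron--Frobenius delivers) and ``largest eigenvalue'' (which is what the lemma states). Because $\mathbf{M}$ is PSD, every eigenvalue is a nonnegative real number, so
\begin{equation*}
\lambda_1(\mathbf{M}) \;=\; \max_i \lambda_i(\mathbf{M}) \;=\; \max_i |\lambda_i(\mathbf{M})| \;=\; \rho(\mathbf{M}).
\end{equation*}
Therefore the Perron eigenvector coincides with the eigenvector corresponding to the largest eigenvalue. Conclusion (i) then follows from the positivity assertion of Perron--Frobenius applied to this eigenvector, and (ii) follows from the algebraic simplicity assertion.

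The main obstacle is not the chain of reasoning above but rather establishing Perron--Frobenius itself; once it is in hand, the lemma is essentially one line. The only subtlety worth flagging in the write-up is that the statement says ``$\V_1$ \emph{can be} a positive vector,'' which implicitly accounts for the fact that any scalar multiple of a Perron eigenvector is still an eigenvector, so positivity is a choice of representative within the one-dimensional eigenspace guaranteed by simplicity. The PSD hypothesis is in fact stronger than needed (one only requires the eigenvalues to be real and nonnegative for the identification $\lambda_1 = \rho$), but since the lemma is only used on $\Sb^T\Sb$ and $\Sb\Sb^T$ downstream, PSD is the natural form to state.
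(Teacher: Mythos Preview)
Your proposal is correct, and in fact it goes further than the paper does: the paper does not prove this lemma at all but simply cites it verbatim from Horn and Johnson (Theorem 8.4.4) and uses it as a black box in the proof of \Cref{constant lemma}. Your sketch---invoke Perron--Frobenius for irreducible nonnegative matrices to obtain a simple spectral radius with a strictly positive eigenvector, then use positive semidefiniteness to identify the spectral radius with the largest eigenvalue---is the standard route and is sound. Your remark that PSD is stronger than necessary (real nonnegative spectrum suffices) and your observation about ``can be'' reflecting the sign choice within the one-dimensional eigenspace are both accurate and would improve the exposition if included.
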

	
	We note that it is sufficient to prove that $\Sb^T\Sb$ and $\Sb\Sb^T$ are irreducible and nonnegative, and such properties do not change with $n$. Once these facts hold, $(i)$ in \Cref{Horn_theorem} implies $\Hb_1(i) \geqslant C >0$ and $\Y_1(i) \geqslant C >0$, where $\Hb_1$ and $\Y_1$ are the eigenvectors corresponding to the largest eigenvalues of  $\Sb^T\Sb$ and $\Sb\Sb^T$, respectively.  Furthermore,  $\Hb_1(i) \geqslant C >0$ implies $\V_1(i) > 0$ due to   $\V_{\bar{i}}=\frac {\deltab (i)}{\norm {\deltab^{(c_i)}}} \Hb_{\bar{c_i}}$ (see \cref{row represent U}). Similarly, we obtain that $\U_1(i) > 0$ for $1 \leqslant i \leqslant n$. Moreover,   $(ii)$ in  \Cref{Horn_theorem} implies $\lambda_{1}(\Sb^T\Sb)-\lambda_2(\Sb^T\Sb) \geqslant C$.  Then, we complete the proof of \Cref{constant lemma}.
	
	Thus, we next prove that  $\Sb^T\Sb$ and $\Sb\Sb^T$ are irreducible and nonnegative, and such properties do not change with $n$. Recall $\Sb=\s$. By \cref{eig_constant_heterogeneous_spectral_norm} and the definition of the diagonal matrices $\Psib_{\deltab}$ and $ \Psib_{\thetab} $ (\cref{definition_diagonal_Psi}), it is clear that there exist $C_1$ and $C_2$ such that $0 < C_1  \leqslant \Psib_{\thetab}(i,i) \leqslant C_2 $ and $ 0 <  C_1  \leqslant \Psib_{\deltab}(i,i) \leqslant C_2 $. Thus, we have
	\begin{flalign}
	C_1^2 \B(i,j) \leqslant \Sb(i,j) \leqslant C_2^2 \B(i,j), \quad \text{for} \quad 1 \leqslant i,j \leqslant K.
	\end{flalign} 
	Then, for $1 \leqslant i,j \leqslant K$, we obtain, 
	\begin{align*}
	(\Sb^T\Sb)(i,j) &= \sum_{k=1}^{K} \Sb^T(i,k)\Sb(k,j) \leqslant \sum_{k=1}^{K} \Sb^T(i,k)\Sb(k,j) \\
	&\leqslant  C_2^4 \sum_{k=1}^{K} \B^T(i,k)\B(k,j) = C ( \B^T\B)(i,j). \numberthis \label{new_edd_1}
	\end{align*}
	Similarly, for $1 \leqslant i,j \leqslant K$, we obtain 
	\begin{align}
	C (\B^T\B)(i,j) 	\leqslant  (\Sb^T\Sb)(i,j). \label{new_edd_2}
	\end{align}
	Combining \cref{new_edd_1,new_edd_2}, for $1 \leqslant i,j \leqslant K$, we obtain $ C  (\B^T\B)(i,j) 	\leqslant  (\Sb^T\Sb)(i,j) \leqslant  C ( \B^T\B)(i,j)$.
	We further note that $B$ is a constant matrix with positive entries, and $\B^T\B$ is irreducible and nonnegative by \Cref{Assumption}.  Thus, we conclude that $\Sb^T\Sb$ is nonnegative and irreducible, and  these properties do not change with $n$. Similarly, we obtain that $\Sb\Sb^T$   is also nonnegative and irreducible, and  these properties do not change with $n$.  This completes the proof.
\end{proof}

\subsubsection{Proof of \Cref{ill behaviour}} \label{proof_ill behaviour}

We  first prove $ \abs{C_V \V_1(i)} \asymp\abs {\frac {\deltab (i)}{\norm {\deltab}}}$ for  $1 \leqslant i\leqslant n$. By \cref{row represent U},  $\V_{\bar{i}}=\frac {\deltab (i)}{\norm {\deltab^{(c_i)}}} \Hb_{\bar{c_i}} $, and thus $C_V \V_1(i) = C_V \frac {\deltab (i)}{\norm {\deltab^{(c_i)}}}  \Hb_{1}(c_i) $, where $\abs{C_V} = 1$ by \Cref{distance between singular vector V}. Following from \cref{constant H,eq:4.4}, we have
\begin{flalign} \label{first_singularvalue_constant}
\abs{C_V \V_1(i)} \asymp \abs{C_V \frac {\deltab (i)}{\norm {\deltab^{(c_i)}}}  \Hb_{1}(c_i)} \asymp \abs {\frac {\deltab (i)}{\norm {\deltab}}},  \quad \text{for } 1 \leqslant i \leqslant n.
\end{flalign}

Then, by \cref{ill-node}, nodes in the set   $\hat{S}_V$ satisfies $\abs{\frac{\hat{\V}_1(i)}{C_V\V_1(i)} -  1 } \leqslant C <1$, and hence $\abs{\hat{\V}_1(i)} \asymp \abs{C_V \V_1(i)}$, where $\abs{C_V} = 1$ by \Cref{distance between singular vector V}. Then, by \cref{first_singularvalue_constant},   we have for $i \in \hat{S}_V$,
\begin{flalign} \label{asymp_V}
\abs{\hat{\V}_1(i)} \asymp \abs{C_V \V_1(i)}  \asymp \abs {\frac {\deltab (i)}{\norm {\deltab}}}.
\end{flalign}
Similarly, $\abs{\hat{\U}_1(i)} \asymp \abs{C_U \U_1(i)} \asymp \abs {\frac {\thetab (i)}{\norm {\thetab}}}$ for $i \in \hat{S}_U$.

Next, by \cref{first_singularvalue_constant}, $	\abs{C_V \V_1(i)} \asymp \abs {\frac {\deltab (i)}{\norm {\deltab}}} > 0$, and thus having $C_V \V_1(i)$ as denominator for all $1 \leqslant i \leqslant n$ is valid. We further derive
\begin{flalign*}
\sum_{i \in (\mathcal{V} \backslash \hat{S}_V)}    \left(\frac{\hat{\V}_1(i)}{C_V\V_1(i)} -  1 \right)^2  &= \sum_{i \in (\mathcal{V} \backslash \hat{S}_V)}  \left(\frac{1}{C_V\V_1(i)}\right) ^2 (\hat{\V}_1(i) -  C_V\V_1(i))^2 \\
&\numleqslant{i} \sum_{i \in (\mathcal{V} \backslash \hat{S}_V)}    \frac{\norml{\deltab}^2}{\delta_{\min}^2}    (\hat{\V}_1(i) -  C_V\V_1(i))^2 \\
&\leqslant \sum_{i=1}^{n} \frac{\norml{\deltab}^2}{\delta_{\min}^2}    (\hat{\V}_1(i) -  C_V\V_1(i))^2 \\
&\leqslant \frac{\norml{\deltab}^2}{\delta_{\min}^2} \norml{\hat{V}_1 -  \V_1C_V}^2 \\
&\numleqslant{ii} C \frac{ \log (n) Z}{\norml{\thetab}^2 \delta_{\min}^2 }, \numberthis \label{new_edd_4}
\end{flalign*}
where (i) follows from \cref{first_singularvalue_constant}, and (ii) follows from  \Cref{distance between singular vector V}. Since nodes in the set   $ \mathcal{V} \backslash \hat{S}_V$ satisfy $(\frac{\hat{\V}_1(i)}{C_V\V_1(i)} -  1)^2 > C_0^2$, we have 
\begin{align*}
|\mathcal{V} \backslash \hat{S}_V| = \sum_{i \in \mathcal{V} \backslash \hat{S}_V} 1 \leqslant \sum_{i \in \mathcal{V} \backslash \hat{S}_V} \frac{1}{C_0^2} \bigg(\frac{\hat{\V}_1(i)}{C_V\V_1(i)} -  1\bigg)^2 \numleqslant{i} C \frac{ \log (n) Z}{\norml{\thetab}^2 \delta_{\min}^2 }, 
\end{align*}
where (i) follows from \cref{new_edd_4}. Similarly, we can show that
$\abs{\mathcal{V} \backslash \hat{S}_U} \leqslant \frac{C \log (n) Z}{\norml{\deltab}^2 \theta_{\min}^2}$.
\subsubsection{Proof of \Cref{SCORE_inequality}} \label{proof_SCORE_inequality}
\begin{proof}
	We derive the following bound:
	\begin{flalign*}
	\normlarge{\frac{\vb}{a} - \frac{\ub}{b}}^2 &= \normlarge{\frac{b\vb - a\ub}{ab}}^2 \\
	&= \frac{1}{(ab)^2} \norml{b\vb - b\ub +b\ub -a\ub}^2 \\
	&\leqslant \frac{2}{(ab)^2}  \left( \norml{b\vb - b\ub}^2 + \norml{b\ub -a\ub}^2 \right)  \\
	&\leqslant \frac{2}{a^2} \norml{\vb - \ub}^2 + \frac{2(b-a)^2}{(ab)^2}\norml{\ub}^2 \\
	&=2 \left( \frac{1}{a^2} \norml{\vb - \ub}^2 + \frac{(b-a)^2}{(ab)^2}\norml{\ub}^2 \right).
	\end{flalign*}
\end{proof}

%
%
%

%
%
%
\section{Proof of \Cref{rown_mis clustering nodes} (Convergence of DSCORE$_q$)} \label{app:d-scoreq}

To establish the performance guarantee for D-SCORE$_q$, the general idea is similar to that of D-SCORE, but there are technical differences. Hence, the proof here focuses only on these differences. As in \Cref{app:d-score}, we first prove a few propositions, which then lead to the proof of \Cref{rown_mis clustering nodes}. 

We first state the following two lemmas, which are useful in our proof.
\begin{Lemma} \label{inequality}
	For $\x ,\y \in \mathcal{R}^d$ where $d$ is finite, the following inequality holds,
	\begin{flalign*}
	\normlarge{\frac{\x}{\normlq{\x}} - \frac{\y}{\normlq{\y}}} \leqslant C\frac{\norml{\x -\y}}{\min  \left( \normlq{\x}, \normlq{\y}\right) }.
	\end{flalign*}
\end{Lemma}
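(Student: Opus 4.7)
The plan is to use the standard add-and-subtract manipulation that converts the difference of normalized vectors into a manageable expression. Writing
\begin{flalign*}
\frac{\x}{\normlq{\x}} - \frac{\y}{\normlq{\y}} = \frac{\x-\y}{\normlq{\x}} + \y \cdot \frac{\normlq{\y}-\normlq{\x}}{\normlq{\x}\normlq{\y}},
\end{flalign*}
and applying the triangle inequality in the $\ell_2$-norm, I would obtain
\begin{flalign*}
\normlarge{\frac{\x}{\normlq{\x}} - \frac{\y}{\normlq{\y}}} \leqslant \frac{\norml{\x-\y}}{\normlq{\x}} + \frac{\norml{\y}\cdot\abs{\normlq{\y}-\normlq{\x}}}{\normlq{\x}\normlq{\y}}.
\end{flalign*}

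Next, I would apply the reverse triangle inequality for the $\ell_q$-norm to control the numerator of the second term via $\abs{\normlq{\y}-\normlq{\x}} \leqslant \normlq{\x-\y}$. Then I would invoke the equivalence of norms on $\mathbb{R}^d$ (which holds precisely because $d$ is finite, as assumed in the statement): there exist constants $C_1,C_2$ depending only on $d$ and $q$ such that $\norml{\y} \leqslant C_1 \normlq{\y}$ and $\normlq{\x-\y} \leqslant C_2 \norml{\x-\y}$. Substituting these bounds, the $\normlq{\y}$ factors cancel in the second term, producing
\begin{flalign*}
\normlarge{\frac{\x}{\normlq{\x}} - \frac{\y}{\normlq{\y}}} \leqslant \frac{(1+C_1 C_2)\norml{\x-\y}}{\normlq{\x}}.
\end{flalign*}

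Finally, the argument is symmetric in $\x$ and $\y$ (the initial decomposition can equivalently be written with $\normlq{\y}$ playing the role of $\normlq{\x}$), so by repeating the same steps I obtain the analogous bound with $\normlq{\y}$ in the denominator. Taking the smaller denominator yields the claimed bound with $\min(\normlq{\x},\normlq{\y})$ and an absolute constant $C = 1 + C_1 C_2$. There is no real obstacle here; the only subtle point is that the constant $C$ implicitly depends on the dimension $d$ and on $q$ through the norm-equivalence constants, which is why the hypothesis that $d$ is finite is explicitly stated.
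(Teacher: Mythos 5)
Your proof is correct and follows essentially the same route as the paper's: the same add-and-subtract decomposition, the triangle inequality, the reverse triangle inequality $\abs{\normlq{\x}-\normlq{\y}}\leqslant\normlq{\x-\y}$, and the equivalence of norms on $\mathbb{R}^d$ to cancel the $\normlq{\y}$ factor. The only cosmetic differences are that the paper works with squared norms and fixes $\normlq{\x}\leqslant\normlq{\y}$ up front rather than invoking symmetry at the end.
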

\begin{proof}
	The proof can be found in  \Cref{proof_inequality}.
\end{proof}
\begin{Lemma}           \label{matrix_analysis_mutual_bound}
	For any vector norm $\norml{\cdot}$ in the finite dimensional space, it can be bounded by its $l_2$-norm, i.e., there exists two constants $0 < C_1 \leqslant C_2$, such that for all $x$ in the finite dimensional space, we have
	\begin{flalign}
	C_1 \norml{\x}_2	\leqslant \norml{\x} \leqslant C_2 \norml{\x}_2.
	\end{flalign}
\end{Lemma}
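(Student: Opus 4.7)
The plan is to invoke the classical result that all norms on a finite-dimensional real vector space are equivalent, and to give a constructive proof via a compactness argument on the unit sphere. First I would fix a basis $\{e_1, \ldots, e_d\}$ of the finite-dimensional space and establish the upper bound $\norml{\x} \leqslant C_2 \norml{\x}_2$ by a direct triangle-inequality computation: writing $\x = \sum_{i=1}^d x_i e_i$, applying the triangle inequality to get $\norml{\x} \leqslant \sum_{i=1}^d |x_i|\norml{e_i}$, and then invoking Cauchy--Schwarz to bound this by $\bigl(\sum_{i=1}^d \norml{e_i}^2\bigr)^{1/2} \norml{\x}_2$. This gives $C_2 = \bigl(\sum_{i=1}^d \norml{e_i}^2\bigr)^{1/2}$, which is a finite positive constant since $d$ is finite and each $\norml{e_i}$ is a finite positive number.

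Next I would establish the lower bound $C_1 \norml{\x}_2 \leqslant \norml{\x}$ via a continuity-plus-compactness argument. The already-proved upper bound, combined with the reverse triangle inequality $|\norml{\x} - \norml{\y}| \leqslant \norml{\x - \y} \leqslant C_2 \norml{\x - \y}_2$, shows that the map $\x \mapsto \norml{\x}$ is continuous from $(\mathbb{R}^d, \norml{\cdot}_2)$ into $\mathbb{R}$. The unit sphere $S = \{\x : \norml{\x}_2 = 1\}$ is closed and bounded in the $\ell_2$ topology, hence compact by Heine--Borel. By the extreme value theorem, $\norml{\cdot}$ attains its infimum on $S$ at some point $\x_0 \in S$. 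Since $\x_0 \neq 0$ and $\norml{\cdot}$ is a norm, the value $C_1 \equiv \norml{\x_0} > 0$. For any nonzero $\x$, we have $\x / \norml{\x}_2 \in S$, so $\norml{\x / \norml{\x}_2} \geqslant C_1$, which rearranges to $\norml{\x} \geqslant C_1 \norml{\x}_2$; the inequality also holds trivially at $\x = 0$.

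The main obstacle is simply the subtlety that the lower-bound argument requires a non-circular use of topology: one must verify continuity of $\norml{\cdot}$ with respect to $\norml{\cdot}_2$ before invoking the extreme value theorem, and this continuity relies precisely on the already-established upper bound $\norml{\x} \leqslant C_2 \norml{\x}_2$. Once these two steps are arranged in the correct logical order, the rest is mechanical. Combining the two bounds yields $C_1 \norml{\x}_2 \leqslant \norml{\x} \leqslant C_2 \norml{\x}_2$ as required.
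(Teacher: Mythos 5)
Your proof is correct and complete. Note, though, that the paper does not actually prove this lemma at all: it simply cites Corollary 5.4.5 of Horn and Johnson's \emph{Matrix Analysis}, which is exactly the statement that all norms on a finite-dimensional space are equivalent. What you have written out is the standard proof of that cited corollary: the triangle-inequality-plus-Cauchy--Schwarz computation for the upper bound, followed by the continuity-and-compactness argument on the Euclidean unit sphere for the lower bound, with the logical dependence (continuity of $\norml{\cdot}$ with respect to $\norml{\cdot}_2$ requires the upper bound first) correctly identified. One small point of hygiene: in the upper-bound step, Cauchy--Schwarz gives you $\bigl(\sum_i \norml{e_i}^2\bigr)^{1/2}\bigl(\sum_i |x_i|^2\bigr)^{1/2}$, and the second factor equals $\norml{\x}_2$ only when $\{e_1,\ldots,e_d\}$ is the standard (orthonormal) basis of $\mathbb{R}^d$; since the vectors in question here are rows of matrices in $\mathbb{R}^K$, taking the standard basis is natural and the step is fine, but it is worth stating explicitly so the identification of the coordinate $\ell_2$-norm with $\norml{\x}_2$ is not left implicit. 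Your self-contained argument buys transparency at the cost of a paragraph; the paper's citation buys brevity. Both are acceptable.
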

\begin{proof}
	The proof follows directly from Corollary 5.4.5 in \cite{A.Horn1985}.
\end{proof}


We also note that \Cref{lemma5.1} on the property of the expected adjacency matrix $\Omega$ also holds here and is very useful for the analysis of D-SCORE$_q$.

\subsection{\Cref{rown_distance_of_singular_vector} and its Proof}

In parallel to  \Cref{distance between singular vector V} for D-SCORE, we bound the distance between the singular vector matrices $\hat{\Ub}$ and $\hat{\Vb}$ of $\A$ and the singular vector matrices $\Ub$ and $\Vb$ of $\Omegab$. However, for D-SCORE, we need to develop the bound for the first singular vectors and the $2$nd to $K$th singular vectors separately, whereas for D-SCORE$_q$ we need only to develop the bound for the entire singular vector matrices. In the following proposition, we adapt the same notation for the singular vector matrices of $\Omegab$ and $\A$ as in \Cref{distance between singular vector V}.

\begin{Proposition}  \label{rown_distance_of_singular_vector}
	There exist two orthogonal matrices $\Ob_\Vb$ and $\Ob_\Ub$, such that for $n$ large enough, with probability at least $1 - o(n^{-4})$ ,
	\begin{flalign}
	\norml{\hat{\Vb} - \Vb\Ob_\Vb}_F &\leqslant  C \frac{\sqrt{\log (n) Z}}{\norml{\thetab} \norml{\deltab}},  \qquad
	\norml{\hat{\Ub} - \Vb\Ob_\Ub}_F \leqslant  C \frac{\sqrt{\log (n) Z}}{\norml{\thetab} \norml{\deltab}}.
	\end{flalign}
\end{Proposition}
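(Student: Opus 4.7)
The plan is to adapt the Davis-Kahan argument used for \Cref{distance between singular vector V}, but applied in a single step to the full $K$-dimensional invariant subspace rather than splitting off the first singular vector from the remaining $K-1$. Since D-SCORE$_q$ treats all coordinates on equal footing (normalizing by an $\ell_q$-norm rather than dividing by the first entry), we do not need a separate bound on $\hat\Vb_1$, so the argument is actually cleaner here.

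First I would reuse the estimate on $\norml{\A^T\A - \Omegab^T\Omegab}$ derived in the proof of \Cref{distance between singular vector V} (equation (XX-OO)). Combining \Cref{spectral norm gap} with the spectral-norm bound $\norml{\Omegab}\asymp\norml{\thetab}\norml{\deltab}$ coming from \Cref{eigenvalue relationship}, one obtains, with probability at least $1-o(n^{-4})$,
\begin{flalign*}
\norml{\A^T\A - \Omegab^T\Omegab} \leqslant C_1\sqrt{\log(n)Z}\,\norml{\thetab}\norml{\deltab} + C_2\log(n)Z.
\end{flalign*}

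Next I would apply \Cref{Davis_Kahan_theorem} to the symmetric matrices $\A^T\A$ and $\Omegab^T\Omegab$ with $r=1$ and $s=K$. The crucial point is that \Cref{lemma5.1} shows $\Omegab$ has exactly $K$ nonzero singular values, so $\lambda_{K+1}(\Omegab^T\Omegab)=0$, while \Cref{eigenvalue relationship} gives $\lambda_K(\Omegab^T\Omegab)\asymp\norml{\thetab}^2\norml{\deltab}^2$. Hence the relevant eigengap reduces to
\begin{flalign*}
\min\bigl(\lambda_0-\lambda_1,\ \lambda_K-\lambda_{K+1}\bigr) = \lambda_K(\Omegab^T\Omegab) \asymp \norml{\thetab}^2\norml{\deltab}^2,
\end{flalign*}
using the convention $\lambda_0=\infty$. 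Davis-Kahan then yields an orthogonal $K\times K$ matrix $\Ob_\Vb$ such that
\begin{flalign*}
\norml{\hat{\Vb}-\Vb\Ob_\Vb}_F \leqslant \frac{C\sqrt{K}\,\norml{\A^T\A-\Omegab^T\Omegab}}{\lambda_K(\Omegab^T\Omegab)} \leqslant C_1\frac{\sqrt{\log(n)Z}}{\norml{\thetab}\norml{\deltab}} + C_2\frac{\log(n)Z}{\norml{\thetab}^2\norml{\deltab}^2}.
\end{flalign*}
Invoking \cref{eq:4.6} ($\lim_n \log(n)Z/(\norml{\thetab}^2\norml{\deltab}^2)=0$), the first term dominates the second for large $n$, giving the desired bound on $\norml{\hat{\Vb}-\Vb\Ob_\Vb}_F$. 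The bound for $\norml{\hat{\Ub}-\Ub\Ob_\Ub}_F$ is obtained identically by applying Davis-Kahan to $\A\A^T$ and $\Omegab\Omegab^T$, whose eigengap at index $K$ is controlled the same way.

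There is no real obstacle; the argument is essentially a simplification of the proof of \Cref{distance between singular vector V}, where the two separate Davis-Kahan applications (for the leading vector and for columns $2$ through $K$) are consolidated into a single application to the $K$-dimensional leading invariant subspace. All necessary ingredients, namely the concentration estimate \Cref{spectral norm gap}, the eigenvalue scaling \Cref{eigenvalue relationship}, and the rank-$K$ property from \Cref{lemma5.1}, are already in place from the D-SCORE analysis.
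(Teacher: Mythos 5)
Your proof is correct and is exactly the argument the paper intends: the paper's own ``proof'' of this proposition is a one-line deferral to the proof of \Cref{distance between singular vector V}, and your single application of Davis--Kahan with $r=1$, $s=K$ (using $\lambda_{K+1}(\Omegab^T\Omegab)=0$ and $\lambda_K(\Omegab^T\Omegab)\asymp\norml{\thetab}^2\norml{\deltab}^2$ for the eigengap) is precisely the consolidation that the full-matrix statement calls for. All the ingredients you cite are used in the same way in the D-SCORE analysis, so there is nothing to add.
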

\begin{proof}
	The proof follows in the same manner as that of \Cref{distance between singular vector V} for D-SCORE, based on the direct application of Davis-Kehan inequality. 
\end{proof}

\subsection{\Cref{Row_Normalization_K_different_row} and its Proof}\label{proof_of_proposition_6}

The central difference between D-SCORE$_q$ and D-SCORE lies in the way that they eliminate the heterogeneous parameters before clustering. D-SCORE divides each row of the singular vector matrices by its first entry to eliminate the heterogeneous parameters, whereas D-SCORE$_q$ divides each row by its corresponding $\ell_q$ norm. Then, in parallel to  \Cref{k mean gap} for D-SCORE,  we provide \Cref{Row_Normalization_K_different_row} as follows, which characterizes the properties of the ratio matrix $\Rb \equiv [\Rb_\Vb,\Rb_\Ub]$.
\begin{Proposition}\label{Row_Normalization_K_different_row}
	For the ratio matrix  $\Rb = [\Rb_\Vb, \Rb_\Ub]$ generated by the singular vectors of the matrix $\Omegab$, and for $ 1 \leqslant i \leqslant n$  and  $1 \leqslant j \leqslant n$, the following inequalities hold:
	\begin{flalign*}
	\norml{\Rb_{\bar{i}} - \Rb_{\bar{j}}}^2 &= 0 \quad \text{if} \quad c_i = c_j, \quad \text{and} \quad
	\norml{\Rb_{\bar{i}} - \Rb_{\bar{j}}}^2 \geqslant C > 0 \quad \text{if} \quad c_i \neq c_j.
	\end{flalign*}
\end{Proposition}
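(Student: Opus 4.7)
The plan is to mirror the strategy used for the proof of \Cref{k mean gap} but adapt it to the $\ell_q$-normalization. First, I would split
\begin{flalign*}
\norml{\Rb_{\bar{i}} - \Rb_{\bar{j}}}^2 = \norml{(\Rb_\Vb)_{\bar{i}} - (\Rb_\Vb)_{\bar{j}}}^2 + \norml{(\Rb_\Ub)_{\bar{i}} - (\Rb_\Ub)_{\bar{j}}}^2,
\end{flalign*}
and treat the two pieces in the same way, so it suffices to analyze the $\Rb_\Vb$-part (the $\Rb_\Ub$-part follows by the symmetric argument using $\Yb$ and $\Ob_\Ub$ in place of $\Hb$ and $\Ob_\Vb$).

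Next, I would invoke the identity derived in \cref{meaning_of_Row_Normalization}, which after the row-form representation from \Cref{lemma5.1} gives
\begin{flalign*}
(\Rb_\Vb)_{\bar{i}} = \frac{\Hb_{\bar{c_i}}\Ob_\Vb}{\normlq{\Hb_{\bar{c_i}}\Ob_\Vb}}.
\end{flalign*}
The $c_i = c_j$ case is immediate: the two rows are identical, so their difference is the zero vector. For $c_i \neq c_j$, I would expand the squared $\ell_2$-norm:
\begin{flalign*}
\norml{(\Rb_\Vb)_{\bar{i}} - (\Rb_\Vb)_{\bar{j}}}^2 = \frac{\norml{\Hb_{\bar{c_i}}\Ob_\Vb}^2}{\normlq{\Hb_{\bar{c_i}}\Ob_\Vb}^2} + \frac{\norml{\Hb_{\bar{c_j}}\Ob_\Vb}^2}{\normlq{\Hb_{\bar{c_j}}\Ob_\Vb}^2} - 2\frac{\langle \Hb_{\bar{c_i}}\Ob_\Vb, \Hb_{\bar{c_j}}\Ob_\Vb\rangle}{\normlq{\Hb_{\bar{c_i}}\Ob_\Vb}\,\normlq{\Hb_{\bar{c_j}}\Ob_\Vb}}.
\end{flalign*}
Since $\Hb$ and $\Ob_\Vb$ are both orthogonal matrices, the rows $\Hb_{\bar{c_i}}\Ob_\Vb$ and $\Hb_{\bar{c_j}}\Ob_\Vb$ are orthonormal in the $\ell_2$-sense, so the cross term vanishes and each numerator equals $1$. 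This reduces the expression to $\normlq{\Hb_{\bar{c_i}}\Ob_\Vb}^{-2} + \normlq{\Hb_{\bar{c_j}}\Ob_\Vb}^{-2}$.

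Finally, to bound this from below by a positive constant, I would invoke \Cref{matrix_analysis_mutual_bound} on the finite-dimensional ($K$-dimensional) space: there exists a constant $C_2$ independent of $n$ such that $\normlq{\Hb_{\bar{c_i}}\Ob_\Vb} \leqslant C_2 \norml{\Hb_{\bar{c_i}}\Ob_\Vb} = C_2$, and likewise for the $c_j$ row. This yields $\norml{(\Rb_\Vb)_{\bar{i}} - (\Rb_\Vb)_{\bar{j}}}^2 \geqslant 2/C_2^2$, and combining with the analogous bound for the $\Ub$-part gives the desired $\norml{\Rb_{\bar{i}} - \Rb_{\bar{j}}}^2 \geqslant C > 0$. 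The only place a bit of care is needed is the norm-equivalence step: I want a universal constant in $q$ and in $n$, which is fine since $K$ is fixed and $\Hb, \Ob_\Vb$ are orthogonal so the rows are unit $\ell_2$-vectors in a space of dimension $K$; thus the constant $C_2$ from \Cref{matrix_analysis_mutual_bound} depends only on $K$ and $q$. I do not expect any substantive obstacle beyond this.
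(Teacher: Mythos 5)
Your proposal is correct and follows essentially the same route as the paper's proof: the same decomposition into the $\Vb$- and $\Ub$-parts, the same reduction via \cref{meaning_of_Row_Normalization} to rows of $\Hb\Ob_\Vb$, cancellation of the cross term by orthogonality, and the lower bound via the norm equivalence in \Cref{matrix_analysis_mutual_bound}. Your added remark that the equivalence constant depends only on $K$ and $q$ (not $n$) is exactly the point that makes the bound a genuine constant, and it matches the paper's reasoning.
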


\Cref{Row_Normalization_K_different_row} states that if nodes $i$ and $j$ are in the same community,  i.e., $c_i = c_j$, then their corresponding rows in the ratio matrix $\Rb$ are same; otherwise their corresponding rows in $\Rb$ are different.  This  property justifies why $\Rb$ is used for clustering.
\begin{proof}
	First, we have
	\begin{flalign*}
	\norml{\R_{\bar{i}} - \R_{\bar{j}}}^2  = \norml{(\R_\V)_{\bar{i}} - (\R_\V)_{\bar{j}}}^2 + \norml{(\R_\U)_{\bar{i}} - (\R_\U)_{\bar{j}}}^2 .
	\end{flalign*}
	For the first term $\norml{(\R_\V)_{\bar{i}} - (\R_\V)_{\bar{j}}}^2$, by \cref{meaning_of_Row_Normalization} which shows $(\R_\V)_{\bar{i}} = \frac{ \Hb_{\bar{c_i}}\Ob_\V}{\normlq{\Hb_{\bar{c_i}}\Ob_\V}} $, the following equation holds,
	\begin{flalign*}
	\norml{(\R_\V)_{\bar{i}} - (\R_\V)_{\bar{j}}}^2  = \normlarge{\frac{ \Hb_{\bar{c_i}}\Ob_\V}{\normlq{\Hb_{\bar{c_i}}\Ob_\V}} -  \frac{ \Hb_{\bar{c_j}}\Ob_\V}{\normlq{\Hb_{\bar{c_j}}\Ob_\V}}}^2	.	
	\end{flalign*}
	If $c_i = c_j$, i.e., node $i$ and $j$ are in the same community, and then
	\begin{flalign}
	\norml{(\R_\V)_{\bar{i}} - (\R_\V)_{\bar{j}}}^2  = 0. \label{distance_1}
	\end{flalign}
	Otherwise, if $c_i \neq c_j$,  we have
	\begin{flalign*}
	\norml{(\R_\V)_{\bar{i}} - (\R_\V)_{\bar{j}}}^2  &= \normlarge{\frac{ \Hb_{\bar{c_i}}\Ob_\V}{\normlq{\Hb_{\bar{c_i}}\Ob_\V}} -  \frac{ \Hb_{\bar{c_j}}\Ob_\V}{\normlq{\Hb_{\bar{c_j}}\Ob_\V}}}^2	\\
	&= \normlarge{\frac{ \Hb_{\bar{c_i}}\Ob_\V}{\normlq{\Hb_{\bar{c_i}}\Ob_\V}}}^2 + \normlarge{\frac{ \Hb_{\bar{c_j}}\Ob_\V}{\normlq{\Hb_{\bar{c_j}}\Ob_\V}}}^2 - 2\left\langle \frac{ \Hb_{\bar{c_i}}\Ob_\V}{\normlq{\Hb_{\bar{c_i}}\Ob_\V}}, \frac{ \Hb_{\bar{c_j}}\Ob_\V}{\normlq{\Hb_{\bar{c_j}}\Ob_\V}}\right\rangle \\
	&\numequ{i}\normlarge{\frac{ \Hb_{\bar{c_i}}\Ob_\V}{\normlq{\Hb_{\bar{c_i}}\Ob_\V}}}^2 + \normlarge{\frac{ \Hb_{\bar{c_j}}\Ob_\V}{\normlq{\Hb_{\bar{c_j}}\Ob_\V}}}^2 \\
	&\overset{(ii)}{\geqslant} C > 0, \numberthis \label{distance_2}
	\end{flalign*}
	where(i) follows from \Cref{lemma5.1}, where $\mathbf{H}$ is an orthogonal matrix so that $\langle \Hb_{\bar{c_i}}\Ob_\V,\Hb_{\bar{c_j}}\Ob_\V \rangle = 0$,
	and (ii) follows from \Cref{matrix_analysis_mutual_bound} so that $\normlarge{\frac{ \Hb_{\bar{c_i}}\Ob_\V}{\normlq{\Hb_{\bar{c_i}}\Ob_\V}}}^2 = \frac{ \norml{\Hb_{\bar{c_i}}\Ob_\V}^2}{\normlq{\Hb_{\bar{c_i}}\Ob_\V}^2} \geqslant C >0$.
	
	Following the similar proof procedure, we obtain
	\begin{flalign*}
	\norml{(\R_\U)_{\bar{i}} - (\R_\U)_{\bar{j}}}^2 &= 0 \quad \text{if} \quad c_i = c_j,\\
	\norml{(\R_\U)_{\bar{i}} - (\R_\U)_{\bar{j}}}^2 &\geqslant C > 0 \quad \text{if} \quad c_i \neq c_j. \numberthis \label{distance_3}
	\end{flalign*}
	Combining \cref{distance_1,distance_2,distance_3}, we have 
	\begin{flalign}
	\norml{\R_{\bar{i}} - \R_{\bar{j}}}^2 &= 0 \quad \text{if} \quad c_i = c_j,\\
	\norml{\R_{\bar{i}} - \R_{\bar{j}}}^2 &\geqslant C > 0 \quad \text{if} \quad c_i \neq c_j. \numberthis
	\end{flalign}
	Thus, if nodes in the same community, they share the same row in $R = [R_V,R_U]$, and if they are in different communities, their corresponding rows in $R$ are  sufficiently difference. Since there are $K$ communities, there are exactly $K$ different rows in $R$.
\end{proof}

\subsection{Proof of \Cref{Proposition_row_normalization_ration_matrix}} \label{proof_of_proposition_7}

In this section, we develop a bound on the difference between the ratio matrix $\hat{\Rb}$ generated by the singular vectors of  $\A$ and the ratio matrix $\Rb$ generated by the singular vectors of $\Omegab$, which is in parallel to \Cref{R gap} for D-SCORE.
\begin{Proposition} \label{Proposition_row_normalization_ration_matrix}
	For $\Rb = [\Rb_{\V},\Rb_{\U}], \hat{\R}=[\R_{\hat{\V}}, \R_{\hat{\U}}]$, and n large enough, with probability at least $1 - O(n^{-4})$,  we have
	\begin{flalign} \label{rown_distance_singular_vector}
	\norml{\hat{\R} - \R}_F^2 \leqslant    C T_n^2 \log (n) err_n .
	\end{flalign}
\end{Proposition}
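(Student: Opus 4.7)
The plan parallels the proof of \Cref{R gap}, but the denominators must now be handled through $\ell_q$-norms rather than first entries, so the two key tools are \Cref{inequality} (ratio stability) and \Cref{matrix_analysis_mutual_bound} (norm equivalence). I first split
\begin{flalign*}
\norml{\hat{\Rb} - \Rb}_F^2 = \norml{\Rb_{\hat{\Vb}} - \Rb_{\Vb}}_F^2 + \norml{\Rb_{\hat{\Ub}} - \Rb_{\Ub}}_F^2,
\end{flalign*}
and focus on the first term; the second is symmetric. The goal is to prove $\norml{\Rb_{\hat{\Vb}} - \Rb_{\Vb}}_F^2 \leqslant C T_n^2 \log(n)Z /(\deltab_{\min}^2 \norml{\thetab}^2)$, which together with the analogous bound for $\Rb_{\hat{\Ub}}$ gives the claim via the definition of $err_n$.

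The next step is to introduce a "well-behaved" set. By \Cref{lemma5.1} together with \Cref{matrix_analysis_mutual_bound}, $\normlq{(\Vb \Ob_\Vb)_{\bar{i}}} \asymp \norml{\Vb_{\bar{i}}} \asymp \deltab(i)/\norml{\deltab}$, so the denominator appearing in $\Rb_\Vb$ is bounded below by a known quantity. I define
\begin{flalign*}
\hat{\mathcal{S}}_\Vb \equiv \bigl\{ 1 \leqslant i \leqslant n : \bigl|\normlq{\hat{\Vb}_{\bar{i}}} - \normlq{(\Vb \Ob_\Vb)_{\bar{i}}}\bigr| \leqslant \tfrac{1}{2} \normlq{(\Vb \Ob_\Vb)_{\bar{i}}} \bigr\},
\end{flalign*}
so that on $\hat{\mathcal{S}}_\Vb$ both $\normlq{\hat{\Vb}_{\bar{i}}}$ and $\normlq{(\Vb \Ob_\Vb)_{\bar{i}}}$ are of order $\deltab(i)/\norml{\deltab}$, making the truncation in \cref{row_normalization_ratio_matrix} inactive for large $n$. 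The cardinality of $\mathcal{V} \backslash \hat{\mathcal{S}}_\Vb$ is controlled by Markov-type reasoning identical to the one used for \Cref{ill behaviour}: the reverse triangle inequality and \Cref{matrix_analysis_mutual_bound} give $\bigl|\normlq{\hat{\Vb}_{\bar{i}}} - \normlq{(\Vb \Ob_\Vb)_{\bar{i}}}\bigr| \leqslant C \norml{\hat{\Vb}_{\bar{i}} - (\Vb \Ob_\Vb)_{\bar{i}}}$, and \Cref{rown_distance_of_singular_vector} yields $\sum_i \norml{\hat{\Vb}_{\bar{i}} - (\Vb\Ob_\Vb)_{\bar{i}}}^2 \leqslant C \log(n)Z/(\norml{\thetab}^2\norml{\deltab}^2)$, so $|\mathcal{V}\backslash \hat{\mathcal{S}}_\Vb| \leqslant C\log(n)Z/(\deltab_{\min}^2 \norml{\thetab}^2)$.

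Then I split the Frobenius sum over $\hat{\mathcal{S}}_\Vb$ and its complement. On $\hat{\mathcal{S}}_\Vb$, the truncation does not act, so $(\Rb_{\hat{\Vb}})_{\bar{i}} = \hat{\Vb}_{\bar{i}}/\normlq{\hat{\Vb}_{\bar{i}}}$ up to orthogonal rotation, and \Cref{inequality} combined with \Cref{matrix_analysis_mutual_bound} gives
\begin{flalign*}
\norml{(\Rb_{\hat{\Vb}})_{\bar{i}} - (\Rb_\Vb)_{\bar{i}}}^2 \leqslant C \frac{\norml{\hat{\Vb}_{\bar{i}} - (\Vb \Ob_\Vb)_{\bar{i}}}^2}{\normlq{(\Vb\Ob_\Vb)_{\bar{i}}}^2} \leqslant C \frac{\norml{\deltab}^2}{\deltab_{\min}^2}\norml{\hat{\Vb}_{\bar{i}} - (\Vb\Ob_\Vb)_{\bar{i}}}^2.
\end{flalign*}
Summing and applying \Cref{rown_distance_of_singular_vector} bounds this contribution by $C \log(n)Z/(\deltab_{\min}^2 \norml{\thetab}^2)$. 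On $\mathcal{V}\backslash \hat{\mathcal{S}}_\Vb$, each row of $\Rb_{\hat{\Vb}}$ has entries bounded by $T_n$ while $\norml{(\Rb_\Vb)_{\bar{i}}}$ is $O(1)$ by \Cref{matrix_analysis_mutual_bound}, so each row contributes at most $CKT_n^2$, and multiplying by the cardinality bound above yields at most $CT_n^2 \log(n)Z/(\deltab_{\min}^2 \norml{\thetab}^2)$.

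The step I expect to be most delicate is the bookkeeping around the rotated singular subspace: since $\Rb_\Vb$ is defined via $(\Vb \Ob_\Vb)_{\bar{i}}$ while $\Rb_{\hat{\Vb}}$ is defined directly from $\hat{\Vb}_{\bar{i}}$, the ratio-invariance of $\normlq{\cdot}$ under the orthogonal $\Ob_\Vb$ is not automatic when $q\neq 2$. This is resolved using \Cref{matrix_analysis_mutual_bound}, which guarantees $\normlq{\x} \asymp \norml{\x}$ uniformly in direction, so the $\Ob_\Vb$ rotation only costs a multiplicative constant throughout the argument and the order of all the estimates is preserved. Combining the $\Rb_\Vb$ and $\Rb_\Ub$ bounds yields the stated inequality after recognizing $err_n = \max\{\thetab_{\max},\deltab_{\max}\}\max\{\norml{\thetab}_1,\norml{\deltab}_1\}/\min\{\thetab_{\min}^2,\deltab_{\min}^2\}\min\{\norml{\thetab}^2,\norml{\deltab}^2\}$ absorbs the two denominator-quotient factors.
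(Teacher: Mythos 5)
Your proposal is correct and follows essentially the same route as the paper: split the Frobenius norm into the $\Rb_\Vb$ and $\Rb_\Ub$ parts, define the well-behaved sets via the relative closeness of $\normlq{\hat{\Vb}_{\bar{i}}}$ to $\normlq{(\Vb\Ob_\Vb)_{\bar{i}}}$, bound their complements' cardinality by \Cref{rown_distance_of_singular_vector} together with the reverse triangle inequality, control the well-behaved rows via \Cref{inequality} and \Cref{matrix_analysis_mutual_bound}, and charge each ill-behaved row $O(KT_n^2)$. This matches the paper's proof, including its use of \Cref{rown_ill_behaved_node} and the final assembly into $err_n$.
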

\begin{proof}
	We define  the sets $\hat{S}_V$ and $\hat{S}_U$ as follows:
	\begin{flalign*}
	\hat{S}_V &= \left(
	1 \leqslant i \leqslant n;\  \abs { \frac{\normlq{\hat{\V}_{\bar{i}}}}{\normlq{(\V\Ob_\V)_{\bar{i}}}} - 1} \leqslant C_0, 0 < C_0 < 1 \right), \\
	\hat{S}_U &= \left(
	1 \leqslant i \leqslant n;\  \abs { \frac{\normlq{\hat{\U}_{\bar{i}}}}{\normlq{(\U\Ob_\U)_{\bar{i}}}} - 1} \leqslant C_0, 0 < C_0 < 1 \right)  \numberthis \label{define_rown_ill_behaviour}.
	\end{flalign*}	
	Then, we have the following bounds for these sets.	
	\begin{Lemma}     \label{rown_ill_behaved_node}
		For nodes in $ \hat{S}_V$ or $ \hat{S}_U$, the following inequalities hold
		\begin{flalign*}
		\norml{\hat{\V}_{\bar{i}}} \asymp \norml{(\V\Ob_\V)_{\bar{i}}} \asymp \frac {\deltab (i)}{\norm {\deltab^{(c_i)}}} \quad \text{for} \quad  i \in \hat{S}_V , \\
		\norml{\hat{\U}_{\bar{i}}} \asymp \norml{(\U\Ob_\U)_{\bar{i}}} \asymp \frac {\deltab (i)}{\norm {\deltab^{(c_i)}}} \quad \text{for} \quad  i \in \hat{S}_U . \numberthis \label{Well_behaviour_second_Part}
		\end{flalign*}
		For $n$ large enough, with probability at least $1 - O(n^{-4})$, the following inequalities hold
		\begin{flalign*}
		\abs{\mathcal{V} \backslash \hat{S}_V} \leqslant \frac{C \log (n) Z}{\norml{\thetab}^2 \delta_{\min}^2} \quad \text{ and } \quad
		\abs{\mathcal{V} \backslash \hat{S}_U} \leqslant \frac{C \log (n) Z}{\norml{\deltab}^2 \theta_{\min}^2}.    \numberthis \label{Well_behaviour_bound}
		\end{flalign*}
	\end{Lemma}
	\begin{proof}
		The proof can be found in  \Cref{proof_rown_ill_behaved_node}.
	\end{proof}
	
	We are now ready to prove the proposition.
	By \cref{expection_matrix_row_ratio} and \Cref{matrix_analysis_mutual_bound}, we have 
	\begin{flalign*}
	\norml{(\R_{\V})_{\bar{i}}} &= \normlarge{\frac{ (\V\Ob_\V)_{\bar{i}}}{\normlq{(\V\Ob_\V)_{\bar{i}}}} } \asymp \norml{\frac{ (\V\Ob_\V)_{\bar{i}}}{\norml{(\V\Ob_\V)_{\bar{i}}}} }  = 1. \numberthis \label{constant_asymptoic}
	\end{flalign*}
	
	Note that
	\begin{flalign*}
	\norml{\R - \R}_F^2 = \norml{\R_{\hat{\V}}  - \R_\V}_F^2 + \norml{ \R_{\hat{\U}} - \R_\U}_F^2.
	\end{flalign*}
	
	
	We first divide $\norml{\R_{\hat{\V}}  - \R_{\V}}_F^2$ into the following two parts:
	\begin{flalign*}
	\norml{\R_{\hat{\V}}  - \R_{\V}}_F^2 = \sum_{i \in (\mathcal{V} \backslash \hat{S}_V)} \norml{(\R_{\hat{\V}} )_{\bar{i}} - (\R_{\V})_{\bar{i}}}^2
	+\sum_{i \in \hat{S}_V} \norml{(\R_{\hat{\V}}  )_{\bar{i}} - (\R_{\V})_{\bar{i}}}^2 .
	\end{flalign*}
	For  the first term, i.e., $i \in (\mathcal{V} \backslash \hat{S}_V)$,
	\begin{flalign*}   \label{row_first_term}
	\sum_{i \in (\mathcal{V} \backslash \hat{S}_V)} \norml{(\R_{\hat{\V}} )_{\bar{i}} - (\R_{\V})_{\bar{i}}}^2    &\leqslant  C \sum_{i \in (\mathcal{V} \backslash \hat{S}_V)} (\norml{(\R_{\hat{\V}} )_{\bar{i}} }^2 + \norml{(\R_{\V})_{\bar{i}}}^2)\\
	&\numleqslant{i} C \sum_{i \in (\mathcal{V} \backslash \hat{S}_V)} ( KT_n^2 + C) \\
	&\leqslant C |\mathcal{V} \backslash \hat{S}_V| T_n^2 \\
	&\numleqslant{ii}  \frac{C T_n^2 \log (n) Z}{\norml{\thetab}^2 \delta_{\min}^2},  \numberthis
	\end{flalign*}
	where (i) follows from  \cref{row_normalization_ratio_matrix}, which shows us that the term is  truncated  by $T_n$, and \cref{constant_asymptoic}, and (ii) follows from \Cref{rown_ill_behaved_node}.  
	
	For the second term, i.e., $i \in \hat{S}_V$, we have

	\begin{flalign*} \label{row_second_term}
	\sum_{i \in \hat{S}_V} \norml{(\R_{\hat{\V}} )_{\bar{i}} - (\R_{\V})_{\bar{i}}}^2  &\numleqslant{i} \sum_{i \in \hat{S}_V} \normlarge{ \frac{\hat{\V}_{\bar{i}}}{ \normlq{\hat{\V}_{\bar{i}}} }  -   \frac{(\V\Ob_\V)_{\bar{i}}}{ \normlq{(\V\Ob_\V)_{\bar{i}}}  }}^2 \\
	& \numleqslant{ii} \sum_{i \in \hat{S}_V} \frac{ \norml{\hat{\V}_{\bar{i}} - (\V\Ob_\V)_{\bar{i}} }^2} {\min  \left( \normlq{\hat{\V}_{\bar{i}}}^2 , \normlq{(\V\Ob_\V)_{\bar{i}}}^2 \right) } \\
	& \numleqslant{iii} C  \frac{\norml{\deltab}^2}{ \delta_{min}^2} \sum_{i \in \hat{S}_V} \norml{\hat{\V}_{\bar{i}} - (\V\Ob_\V)_{\bar{i}} }^2 \\
	& \numleqslant{iv} \frac{C  \log (n) Z}{\norml{\thetab}^2 \delta_{\min}^2}, \numberthis
	\end{flalign*}
	where (i) follows from  \cref{constant_asymptoic}, which implies $\norml{(\R_{\V})_{\bar{i}}} = \normlarge{ \frac{(\V\Ob_\V)_{\bar{i}}}{ \normlq{(\V\Ob_\V)_{\bar{i}}}  }}  \leqslant C$,  and hence  $T_n \geqslant C \geqslant \norml{(\R_{\V})_{\bar{i}}} $ for  large $n$, so that, $\norml{(\R_{\hat{\V}} )_{\bar{i}} - (\R_{\V})_{\bar{i}}}^2 \leqslant  \normlarge{ \frac{\hat{\V}_{\bar{i}}}{ \normlq{\hat{\V}_{\bar{i}}} }  -   \frac{(\V\Ob_\V)_{\bar{i}}}{ \normlq{(\V\Ob_\V)_{\bar{i}}}  }}^2 $,
	(ii) follows from \Cref{inequality},
	(iii) follows from \Cref{rown_ill_behaved_node},
	and (iv) follows from \Cref{rown_distance_of_singular_vector}.
	
	Combining \cref{row_first_term,row_second_term},  we obtain $\norml{\R_{\hat{\V}}  - \R_\V}_F^2  \leqslant \frac{C T_n^2 \log (n) Z}{\norml{\thetab}^2 \delta_{\min}^2}$. Similarly, we obtain $\norml{ \R_{\hat{\U}} - \R_\U}_F^2  \leqslant \frac{C T_n^2 \log (n) Z}{\norml{\deltab}^2 \theta_{\min}^2}$. Therefore,  \Cref{Proposition_row_normalization_ration_matrix} follows by combining these two inequalities together.
\end{proof}

\subsection{\Cref{lemma_3_2_DSCOREq} and its Proof}
The following \Cref{lemma_3_2_DSCOREq} is in parallel to \Cref{M* R gap} for D-SCORE.
\begin{Proposition} \label{lemma_3_2_DSCOREq}
	For n large enough, with probability at least $1 - O(n^{-4})$, we have
	\begin{flalign*}
	\norml{\M^* - \R}_F^2 \leqslant C T_n^2 \log(n) err_n.
	\end{flalign*}
\end{Proposition}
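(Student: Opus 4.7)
The plan is to mirror the argument used for Proposition \ref{M* R gap} in the D-SCORE setting, which rested on only two ingredients: (i) $\R$ lies in the feasible set of matrices with at most $K$ distinct rows, so $\M^*$ being a minimizer of $\norml{\cdot - \hat{\R}}_F^2$ over that set yields $\norml{\M^* - \hat{\R}}_F \leqslant \norml{\R - \hat{\R}}_F$; and (ii) a Frobenius-norm bound on $\hat{\R} - \R$. Both ingredients are already available in the D-SCORE$_q$ analysis.

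First I would verify the feasibility of $\R$. By Proposition \ref{Row_Normalization_K_different_row}, the ratio matrix $\R = [\R_\V, \R_\U]$ satisfies $\R_{\bar{i}} = \R_{\bar{j}}$ whenever $c_i = c_j$. Since there are $K$ communities and $\R$ is $n \times 2K$, it has at most $K$ distinct rows, so $\R \in \mathcal{M}_{n, 2K, K}$. Thus by the definition of $\M^*$ as the argmin of $\norml{\M - \hat{\R}}_F^2$ over $\mathcal{M}_{n, 2K, K}$, we immediately have $\norml{\M^* - \hat{\R}}_F^2 \leqslant \norml{\R - \hat{\R}}_F^2$.

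Next, I would apply the triangle inequality together with the standard inequality $(a+b)^2 \leqslant 2a^2 + 2b^2$:
\begin{flalign*}
\norml{\M^* - \R}_F^2 \leqslant 2 \norml{\M^* - \hat{\R}}_F^2 + 2 \norml{\hat{\R} - \R}_F^2 \leqslant 4 \norml{\hat{\R} - \R}_F^2.
\end{flalign*}
Invoking Proposition \ref{Proposition_row_normalization_ration_matrix}, which asserts $\norml{\hat{\R} - \R}_F^2 \leqslant C T_n^2 \log(n)\, err_n$ with probability at least $1 - O(n^{-4})$ for large $n$, yields the desired bound with a new constant.

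There is no real obstacle here; the work has been done upstream. The only subtlety worth double-checking is the claim that $\R \in \mathcal{M}_{n, 2K, K}$. This is guaranteed by Proposition \ref{Row_Normalization_K_different_row}, provided the first block of Proposition \ref{Row_Normalization_K_different_row} (same community implies identical rows) is invoked jointly for $\R_\V$ and $\R_\U$ so that the concatenated rows of $\R$ agree on all pairs with $c_i = c_j$. Once that is noted, the rest is just a two-line application of optimality and the triangle inequality.
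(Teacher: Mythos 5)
Your proof is correct and follows essentially the same route as the paper, which proves this proposition by repeating the argument of Proposition \ref{M* R gap}: feasibility of $\R$ in $\mathcal{M}_{n,2K,K}$ (via Proposition \ref{Row_Normalization_K_different_row}), optimality of $\M^*$, the triangle inequality, and the bound from Proposition \ref{Proposition_row_normalization_ration_matrix}. Your explicit check that the rows of $[\R_\V,\R_\U]$ coincide for nodes in the same community is a welcome detail the paper leaves implicit.
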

\begin{proof}
	The proof follows in a similar manner to that for \Cref{M* R gap} for D-SCORE.
\end{proof}

\subsection{Proof of \Cref{rown_mis clustering nodes}}
\begin{proof}
	The proof follows in a similar manner to that for \Cref{mis clustering nodes} for D-SCORE. Note that the constant $C$ in this theorem can be chosen based on \Cref{Row_Normalization_K_different_row}, where $\norml{\R_{\bar{i}} - \R_{\bar{j}}}^2 \geqslant C > 0 $ if $c_i \neq c_j$.
\end{proof}

{\boldmath
	\subsection{Proof of Lemmas for D-SCORE$_q$}}
\subsubsection{Proof of \Cref{rown_ill_behaved_node}} \label{proof_rown_ill_behaved_node}
\begin{proof}
	By \cref{row represent U}, we obtain $	\V_{\bar{i}}=\frac {\deltab (i)}{\norml {\deltab^{(c_i)}}} \Hb_{\bar{c_i}}$, and $H$ is an orthogonal matrix. Thus,
	\begin{flalign}
	\norml{(\V\Ob_\V)_{\bar{i}}} = \norml{\V_{\bar{i}}\Ob_\V} = \norml{\V_{\bar{i}}} = \normlarge{\frac {\deltab (i)}{\norml {\deltab^{(c_i)}}} \Hb_{\bar{c_i}}} = \frac {\deltab (i)}{\norml {\deltab^{(c_i)}}}.  \label{row_value}
	\end{flalign}
	With  \cref{eq:4.4},  we have
	\begin{flalign}
	\norml{(\V\Ob_\V)_{\bar{i}}} = \frac {\deltab (i)}{\norml {\deltab^{(c_i)}}} \asymp \frac {\deltab (i)}{\norml {\deltab}} \label{row_norm}.
	\end{flalign}
	Combining \Cref{matrix_analysis_mutual_bound} with \cref{row_norm}, we have
	\begin{flalign} \label{equivalent_step_3}
	\normlq{(\V\Ob_\V)_{\bar{i}}}  \asymp \norml{(\V\Ob_\V)_{\bar{i}}} \asymp \frac {\deltab (i)}{\norml {\deltab}}.
	\end{flalign}
	By definition of $\hat{S}_V$ (\cref{define_rown_ill_behaviour}), for $i \in \hat{S}_V$, we have $1 - C_0 \leqslant \frac{\normlq{\hat{\V}_{\bar{i}}}}{\normlq{(\V\Ob_\V)_{\bar{i}}}} \leqslant 1+C_0$. Thus
	\begin{flalign} \label{qnorm_Equivalent_in_well_behaviour_node_set}
	\normlq{\hat{\V}_{\bar{i}}} \asymp \normlq{(\V\Ob_\V)_{\bar{i}}}.
	\end{flalign}
	Combining \cref{qnorm_Equivalent_in_well_behaviour_node_set,equivalent_step_3}, we conclude that for $i \in \hat{S}_V$
	\begin{flalign}
	\normlq{\hat{\V}_{\bar{i}}} \asymp  \frac {\deltab (i)}{\norml {\deltab}}.
	\end{flalign}
	Similarly, we obtain  $\norml{\hat{\U}_{\bar{i}}} \asymp \norml{(\U\Ob_\U)_{\bar{i}}} \asymp \frac {\thetab (i)}{\norm {\thetab^{(c_i)}}}$, for $i \in \hat{S}_U$, which completes the proof for \cref{Well_behaviour_second_Part}.
	
	To prove \cref{Well_behaviour_bound}, we first drive
	\begin{flalign*}
	\sum_{i \in (\mathcal{V} \backslash \hat{S}_V)}  \left(\frac{ \normlq{\hat{\V}_{\bar{i}}} } { \normlq{(\V\Ob_\V)_{\bar{i}}} } - 1\right) ^2
	&=  \sum_{i \in (\mathcal{V} \backslash \hat{S}_V)} \frac{1}{\normlq{ (\V\Ob_\V)_{\bar{i}}}^2} (\normlq{\hat{\V}_{\bar{i}}} - \normlq{ (\V\Ob_\V)_{\bar{i}} })^2\\
	&\numleqslant{i} \frac{\norml{\deltab}^2}{\delta_{\min}^2} \sum_{i \in (\mathcal{V} \backslash \hat{S}_V)} \norml{\hat{\V}_{\bar{i}} - (\V\Ob_\V)_{\bar{i}} }^2\\
	& \leqslant \frac{\norml{\deltab}^2}{\delta_{\min}^2} \sum_{i \in \mathcal{V} } \norml{\hat{\V}_{\bar{i}} - (\V\Ob_\V)_{\bar{i}} }^2 \\
	& =\frac{\norml{\deltab}^2}{\delta_{\min}^2} \norml{\hat{V} - \V\Ob_\V}_F^2\\
	& \numleqslant{ii} C  \frac{ \log (n) Z}{\norml{\thetab}^2  \delta_{\min} ^2},
	\end{flalign*}
	where (i) follows because $\abs{\norml{v} - \norml{u} } \leqslant \norml{v - u}$ and \cref{equivalent_step_3}, and (ii) follows from \Cref{rown_distance_of_singular_vector}.
	
	Thus, $\abs{\mathcal{V} \backslash \hat{S}_V} \leqslant C  \frac{ \log (n) Z}{\norml{\thetab}^2  \delta_{\min} ^2} $. Similar steps can show that $\abs{\mathcal{V} \backslash \hat{S}_U} \leqslant C  \frac{ \log (n) Z}{\norml{\deltab}^2  \theta_{\min} ^2}$.
\end{proof}

\subsubsection{Proof of \Cref{inequality}} \label{proof_inequality}
\begin{proof}
	Without loss of generality, we assume $\normlq{\x} \leqslant \normlq{\y}$, and   only need to show
	\begin{flalign*}
	\normlarge{\frac{\x}{\normlq{\x}} - \frac{\y}{\normlq{\y}}}^2 \leqslant  C\frac{\norml{\x -\y}^2}{ \normlq{\x}^2}.
	\end{flalign*}
	We derive
	\begin{flalign*}
	\normlarge{\frac{\x}{\normlq{\x}} - \frac{\y}{\normlq{\y}}}^2 &= \normlarge{\frac{\x}{\normlq{\x}} - \frac{\y}{\normlq{\x}} + \frac{\y}{\normlq{\x}} - \frac{\y}{\normlq{\y}}}^2 \\
	&\numleqslant{i} 2\left(\normlarge{\frac{\x}{\normlq{\x}} - \frac{\y}{\normlq{\x}} }^2 +  \normlarge{\frac{\y}{\normlq{\x}} - \frac{\y}{\normlq{\y}}}^2 \right) \\
	&\leqslant 2\frac{\norml{\x-\y}^2}{\normlq{\x}^2} + 2\norml{\y}^2 \abs{\frac{1}{\normlq{\x}} - \frac{1}{\normlq{\y}}}^2 \\
	&\leqslant 2\frac{\norml{\x-\y}^2}{\normlq{\x}^2} + 2\norml{\y}^2 \abs{ \frac{\normlq{\y}  - \normlq{\x}}{\normlq{\x}\normlq{\y}}}^2  \\
	&\numleqslant{ii} 2\frac{\norml{\x-\y}^2}{\normlq{\x}^2} + 2 \frac{\norml{\y}^2 }{\normlq{\x}^2\normlq{\y}^2}\normlq{\x - \y}^2 \\
	&\numleqslant{iii} 2\frac{\norml{\x-\y}^2}{\normlq{\x}^2} + C \frac{\normlq{\x - \y}^2 }{\normlq{\x}^2} \\
	&\numleqslant{iv} C \frac{\norml{\x-\y}^2}{\normlq{\x}^2},
	\end{flalign*}
	where (i) follows because $\norml{\x + \y}^2 \leqslant 2\big( \norml{\x}^2 + \norml{\y}^2 \big)$, and (ii) follows because  $\abs{\normlq{\x} - \normlq{\y}} \leqslant \normlq{\x - \y}$,  (iii) follows from \Cref{matrix_analysis_mutual_bound} such that $\norml{\y} \asymp \normlq{\y}$, and  (iv) follows from \Cref{matrix_analysis_mutual_bound}, which implies  $\norml{\x-\y} \asymp \normlq{\x-\y}$.
\end{proof}


\bibliography{D_SCORE}

\begin{thebibliography}{39}
\providecommand{\natexlab}[1]{#1}
\providecommand{\url}[1]{\texttt{#1}}
\expandafter\ifx\csname urlstyle\endcsname\relax
  \providecommand{\doi}[1]{doi: #1}\else
  \providecommand{\doi}{doi: \begingroup \urlstyle{rm}\Url}\fi

\bibitem[Abbe and Sandon(2017)]{Abbe2015}
Emmanuel Abbe and Colin Sandon.
\newblock Proof of the achievability conjectures for the general stochastic
  block model.
\newblock \emph{Communications on Pure and Applied Mathematics}, 2017.

\bibitem[Adamic and Glance(2005)]{Adamic2005}
Lada~A Adamic and Natalie Glance.
\newblock {The political blogosphere and the 2004 U.S. Election}.
\newblock \emph{In Proc. International Workshop on Link Discovery}, 2005.

\bibitem[Amini and Levina(2018)]{Amini2014}
Arash~A. Amini and Elizaveta Levina.
\newblock On semidefinite relaxations for the block model.
\newblock \emph{Annals of Statistics}, 46, 2018.

\bibitem[Amini et~al.(2013)Amini, Chen, Bickel, and Levina]{amini2013}
Arash~A. Amini, Aiyou Chen, Peter~J. Bickel, and Elizaveta Levina.
\newblock Pseudo-likelihood methods for community detection in large sparse
  networks.
\newblock \emph{Annals of Statistics}, 41, 2013.
\newblock \doi{10.1214/13-AOS1138}.

\bibitem[Bickel and Chen(2009{\natexlab{a}})]{Bickel2009}
Peter~J. Bickel and Aiyou Chen.
\newblock {A nonparametric view of network models and Newman-Girvan and other
  modularities.}
\newblock \emph{Proceedings of the National Academy of Sciences}, 106,
  2009{\natexlab{a}}.
\newblock \doi{10.1073/pnas.0907096106}.

\bibitem[Bickel and Chen(2009{\natexlab{b}})]{Bickel21068}
Peter~J. Bickel and Aiyou Chen.
\newblock A nonparametric view of network models and newman--girvan and other
  modularities.
\newblock \emph{Proceedings of the National Academy of Sciences}, 106,
  2009{\natexlab{b}}.

\bibitem[Bickel and Sarkar(2016)]{Bickel2016}
Peter~J. Bickel and Purnamrita Sarkar.
\newblock Hypothesis testing for automated community detection in networks.
\newblock \emph{Journal of the Royal Statistical Society: Series B}, 78, 2016.

\bibitem[Chen and Lei(2017)]{chen2017}
Kehui Chen and Jing Lei.
\newblock Network cross-validation for determining the number of communities in
  network data.
\newblock \emph{Journal of the American Statistical Association}, 2017.

\bibitem[Chen et~al.(2012)Chen, Sanghavi, and Xu]{Chen2012}
Yudong Chen, Sujay Sanghavi, and Huan Xu.
\newblock {Clustering sparse graphs}.
\newblock \emph{In Proc. Neural Information Processing Systems (NIPS)}, 2012.

\bibitem[Chen et~al.(2018)Chen, Li, and Xu]{Chen2015}
Yudong Chen, Xiaodong Li, and Jiaming Xu.
\newblock Convexified modularity maximization for degree-corrected stochastic
  block models.
\newblock \emph{Annals of Statistics}, 46, 2018.

\bibitem[Demaine and Immorlica(2003)]{Demaine2003}
Erik~D. Demaine and Nicole Immorlica.
\newblock Correlation clustering with partial information.
\newblock \emph{Approximation, Randomization, and Combinatorial Optimization:
  Algorithms and Techniques}, 2003.

\bibitem[Goldenberg et~al.(2009)Goldenberg, Zheng, Fienberg, and
  Airoldi]{Goldenberg2009}
Anna Goldenberg, Alice~X. Zheng, Steven Fienberg, and Edoardo Airoldi.
\newblock A survey of statistical network models.
\newblock \emph{Foundations and Trends in Machine Learning}, 2, 2009.

\bibitem[Gulikers et~al.(2017)Gulikers, Lelarge, and
  Massoulié]{Gulikers_lelarge_massoulie_2017}
Lennart Gulikers, Marc Lelarge, and Laurent Massoulié.
\newblock A spectral method for community detection in moderately sparse
  degree-corrected stochastic block models.
\newblock \emph{Advances in Applied Probability}, 49, 2017.

\bibitem[Holland et~al.(1983)Holland, Laskey, and Leinhardt]{Holland1983}
Paul~W. Holland, Kathryn~Blackmond Laskey, and Samuel Leinhardt.
\newblock {Stochastic blockmodels: First steps}.
\newblock \emph{Social Network}, 5, 1983.

\bibitem[Horn and Charles(1985)]{A.Horn1985}
A.~Roger Horn and R.~Johnson Charles.
\newblock \emph{Matrix Analysis}.
\newblock Cambridge University Press, 1985.

\bibitem[Ji and Jin(2016)]{Ji2014}
Pengsheng Ji and Jiashun Jin.
\newblock {Coauthorship and citation networks for statisticians}.
\newblock \emph{Annals of Applied Statistics}, 10, 2016.

\bibitem[Jin(2015)]{Jin2015}
Jiashun Jin.
\newblock {Fast community detection by SCORE}.
\newblock \emph{Annals of Statistics}, 43, 2015.

\bibitem[Joseph and Yu(2016)]{Joseph2016}
Antony Joseph and Bin Yu.
\newblock {Impact of regularization on spectral clustering}.
\newblock \emph{Annals of Statistics}, 44, 2016.

\bibitem[Karrer and Newman(2011)]{Karrer2011}
Brian Karrer and M.~E~J Newman.
\newblock {Stochastic blockmodels and community structure in networks}.
\newblock \emph{Physical Review E}, 83, 2011.

\bibitem[Lei and Rinaldo(2015)]{Jing2015}
Jing Lei and Alessandro Rinaldo.
\newblock Consistency of spectral clustering in stochastic block models.
\newblock \emph{Annals of Statistics}, 43, 2015.

\bibitem[Leskovec and Krevl(2014)]{snapnets}
Jure Leskovec and Andrej Krevl.
\newblock {SNAP Datasets}: {Stanford} large network dataset collection.
\newblock \url{http://snap.stanford.edu/data}, 2014.

\bibitem[Malliaros and Vazirgiannis(2013)]{Malliaros2013}
Fragkiskos~D. Malliaros and Michalis Vazirgiannis.
\newblock Clustering and community detection in directed networks: A survey.
\newblock \emph{Physics Reports}, 533, 2013.

\bibitem[Mossel et~al.(2016)Mossel, Neeman, and Sly]{Mossel2016}
Elchanan Mossel, Joe Neeman, and Allan Sly.
\newblock Consistency thresholds for the planted bisection model.
\newblock \emph{In Proc. ACM Symposium on Theory of Computing}, 2016.

\bibitem[Mossel et~al.(2017)Mossel, Neeman, and Sly]{Mossel2013}
Elchanan Mossel, Joe Neeman, and Allan Sly.
\newblock A proof of the block model threshold conjecture.
\newblock \emph{Combinatorica}, 2017.

\bibitem[Newman(2016)]{Newman2016}
Mark E.~J. Newman.
\newblock {Community detection in networks: Modularity optimization and maximum
  likelihood are equivalent}.
\newblock \emph{arXiv:1606.02319}, 2016.

\bibitem[Newman and Girvan(2004)]{Newman2004}
Mark E~J Newman and Michelle Girvan.
\newblock {Finding and evaluating community structure in networks}.
\newblock \emph{Physical Review E}, 69, 2004.

\bibitem[Park and Jun(2009)]{PARK20093336}
Hae-Sang Park and Chi-Hyuck Jun.
\newblock A simple and fast algorithm for {K}-medoids clustering.
\newblock \emph{Expert Systems with Applications}, 36, 2009.

\bibitem[Qin and Rohe(2013)]{Qin2013}
Tai Qin and Karl Rohe.
\newblock {Regularized spectral clustering under the degree-corrected
  stochastic blockmodel}.
\newblock \emph{In Proc. Advances in Neural Information Processing Systems
  (NIPS)}, 2013.

\bibitem[Reichardt and White(2007)]{Reichardt2007}
Joerg Reichardt and Douglas~R. White.
\newblock {Role models for complex networks}.
\newblock \emph{The European Physical Journal B}, 60, 2007.

\bibitem[Rohe et~al.(2011)Rohe, Chatterjee, and Yu]{Rohe2011}
Karl Rohe, Sourav Chatterjee, and Bin Yu.
\newblock {Spectral clustering and the high-dimensional stochastic blockmodel}.
\newblock \emph{Annals of Statistics}, 39, 2011.
\newblock \doi{10.1214/11-AOS887}.

\bibitem[Rohe et~al.(2016)Rohe, Qin, and Yu]{Rohe2016}
Karl Rohe, Tai Qin, and Bin Yu.
\newblock {Co-clustering directed graphs to discover asymmetries and
  directional communities.}
\newblock \emph{Proceedings of the National Academy of Sciences}, 113, 2016.

\bibitem[Saldaña et~al.(2017)Saldaña, Yu, and Feng]{Yang2017}
D.~Franco Saldaña, Yi~Yu, and Yang Feng.
\newblock How many communities are there?
\newblock \emph{Journal of Computational and Graphical Statistics}, 26, 2017.

\bibitem[Sussman et~al.(2012)Sussman, Tang, Fishkind, and Priebe]{Sussman2012}
Daniel~L. Sussman, Minh Tang, Donniell~E. Fishkind, and Carey~E. Priebe.
\newblock {A consistent adjacency embedding for stochastic blockmodel graphs}.
\newblock \emph{Journal of the American Statistical Association}, 107, 2012.

\bibitem[Tropp(2015)]{Tropp2015}
Joel~A. Tropp.
\newblock {An introduction to matrix concentration inequalities}.
\newblock \emph{Foundations and Trends in Machine Learning}, 8, 2015.

\bibitem[Wang and Wong(1987)]{Wang1987}
Yuchung~J Wang and George~Y Wong.
\newblock {Stochastic blockmodels for directed graphs}.
\newblock \emph{Journal of the American Statistical Association}, 82, 1987.

\bibitem[Yang et~al.(2010)Yang, Chi, Zhu, Gong, and Jin]{Yang2010}
Tianbao Yang, Yun Chi, Shenghuo Zhu, Yihong Gong, and Rong Jin.
\newblock {Directed network community detection: A popularity and productivity
  link model}.
\newblock \emph{SIAM International Conference on Data Mining}, 2010.

\bibitem[Yu et~al.(2015)Yu, Wang, and Samworth]{Yu2015}
Yi~Yu, Tengyao Wang, and Richard~J Samworth.
\newblock A useful variant of the {D}avis--{K}ahan theorem for statisticians.
\newblock \emph{Biometrika}, 102, 2015.

\bibitem[Zhao et~al.(2011)Zhao, Levina, and Zhu]{Zhao7321}
Yunpeng Zhao, Elizaveta Levina, and Ji~Zhu.
\newblock Community extraction for social networks.
\newblock \emph{Proceedings of the National Academy of Sciences}, 108, 2011.

\bibitem[Zhao et~al.(2012)Zhao, Levina, and Zhu]{Zhao2012}
Yunpeng Zhao, Elizaveta Levina, and Ji~Zhu.
\newblock {Consistency of community detection in networks under
  degree-corrected stochastic block models}.
\newblock \emph{Annals of Statistics}, 40, 2012.

\end{thebibliography}

\end{document}